\newtheorem*{rep@theorem}{\rep@title}
\newcommand{\newreptheorem}[2]{%
\newenvironment{rep#1}[1]{%
 \def\rep@title{#2 \ref{##1}}%
 \begin{rep@theorem}}%
 {\end{rep@theorem}}}
\newcommand\xlabel[2][]{\phantomsection\def\@currentlabelname{#1}\label{#2}}
\theoremstyle{plain}
\newtheorem{theorem}{Theorem}
\newtheorem{lemma}[theorem]{Lemma}
\newtheorem{proposition}[theorem]{Proposition}
\newtheorem{assumption}{Assumption}
\theoremstyle{definition}
\newtheorem{definition}[theorem]{Definition}
\newtheorem{remark}[theorem]{Remark}
\numberwithin{theorem}{section}
\newcommand{\nc}{\newcommand}
\nc{\DMO}{\DeclareMathOperator}
\DeclareMathOperator*{\argmin}{arg\,min} 
\DeclareMathOperator*{\argsup}{arg\,sup} 
\DMO{\prox}{prox}
\DMO{\Span}{span}
\DMO{\UCB}{UCB}
\DMO{\LCB}{LCB}
\nc{\gamvec}{\gamma}
\nc{\til}{\widetilde}
\nc{\td}{\tilde}
\nc{\wh}{\widehat}
\nc{\todo}[1]{\ifnum\Comments=1 {\color{red}  [TODO: #1]}\fi}
\nc{\old}[1]{\ifnum\Comments=1 {\color{brown}  [OLD: #1]}\fi}
\nc{\noah}[1]{\ifnum\Comments=1 {\color{purple} [ng: #1]}\fi}
\nc{\dhruv}[1]{\ifnum\Comments=1 {\color{purple} [dr: #1]}\fi}
\nc{\BP}{\mathbb{P}}
\nc{\fools}[3]{\MF_{#3}({#1}, {#2})}
\nc{\fool}[2]{\MF({#1},{#2})}
\nc{\clip}[2]{{\rm clip}\left[ \left. {#1} \right| {#2} \right]}
\nc{\imax}{\omega}
\DMO{\conv}{conv}
\nc{\st}{\star}
\nc{\lng}{\langle}
\nc{\rng}{\rangle}
\DMO{\OOPT}{opt}
\nc{\dopt}[2]{\ell_{\OOPT}({#1},{#2})}
\nc{\grad}{\nabla}
\nc{\MG}{\mathcal{G}}
\nc{\MP}{\mathcal{P}}
\nc{\PP}{\mathbb{P}}
\nc{\TT}{\mathbb{T}}
\nc{\TTmax}{\TT_{\max}}
\DMO{\Reg}{Reg}
\DMO{\Ham}{Ham}
\DMO{\Gap}{Gap}
\DMO{\GD}{GD}
\DMO{\GDA}{GDA}
\DMO{\EG}{EG}
\DMO{\OGDA}{OGDA}
\DMO{\Unif}{Unif}
\DMO{\Tr}{Tr}
\nc{\ul}{\underline}
\nc{\ol}{\overline}
\nc{\Qu}{\ul{Q}}
\nc{\Qo}{\ol{Q}}
\nc{\Ro}{\ol{R}}
\nc{\Vu}{\ul{V}}
\nc{\Vo}{\ol{V}}
\nc{\RanQ}{\Delta Q}
\nc{\RanV}{\Delta V}
\nc{\clipQ}{\Delta \breve{Q}}
\nc{\frzQ}{\Delta \mathring{Q}}
\nc{\clipV}{\Delta \breve{V}}
\nc{\clipdelta}{\breve{\delta}}
\nc{\cliptheta}{\breve{\theta}}
\nc{\delmin}{\Delta_{{\rm min}}}
\nc{\delmins}[1]{\Delta_{{\rm min},{#1}}}
\nc{\gapfinal}[1]{\max \left\{ \frac{\frzQ_{{#1}}^{k^\st}(x,a)}{2H}, \frac{\delmin}{4H} \right\}}
\nc{\post}[2]{R({#1}; {#2})}
\nc{\posts}[3]{R_{#3}({#1}; {#2})}
\nc{\algnst}[1]{\begin{align*}#1\end{align*}}
\nc{\algn}[1]{\begin{align}#1\end{align}}
\nc{\matx}[1]{\left(\begin{matrix}#1\end{matrix}\right)}
\nc{\nuu}{\nu}
\nc{\bv}{\mathbf{v}}
\nc{\bone}{\mathbf{1}}
\nc{\bX}{\mathbf{X}}
\nc{\bY}{\mathbf{Y}}
\nc{\bG}{\mathbf{G}}
\nc{\bz}{\mathbf{z}}
\nc{\bw}{\mathbf{w}}
\nc{\bB}{\mathbf{B}}
\nc{\bA}{\mathbf{A}}
\nc{\bJ}{\mathbf{J}}
\nc{\bK}{\mathbf{K}}
\nc{\bb}{\mathbf{b}}
\nc{\ba}{\mathbf{a}}
\nc{\bc}{\mathbf{c}}
\nc{\bC}{\mathbf{C}}
\nc{\BR}{\mathbb R}
\nc{\BA}{\mathbb{A}}
\nc{\BC}{\mathbb C}
\nc{\bx}{\mathbf{x}}
\nc{\bS}{\mathbf{S}}
\nc{\bM}{\mathbf{M}}
\nc{\bR}{\mathbf{R}}
\nc{\bN}{\mathbf{N}}
\nc{\by}{\mathbf{y}}
\nc{\sy}{y}
\nc{\sx}{x}
\nc{\MO}{\mathcal O}
\nc{\MU}{\mathcal{U}}
\nc{\ME}{\mathcal{E}}
\nc{\MN}{\mathcal{N}}
\nc{\MK}{\mathcal{K}}
\nc{\MM}{\mathcal{M}}
\nc{\MS}{\mathcal{S}}
\nc{\MT}{\mathcal{T}}
\nc{\BF}{\mathbb F}
\nc{\BQ}{\mathbb Q}
\nc{\MX}{\mathcal{X}}
\nc{\MA}{\mathcal{A}}
\nc{\MD}{\mathcal{D}}
\nc{\MB}{\mathcal{B}}
\nc{\MZ}{\mathcal{Z}}
\nc{\MJ}{\mathcal{J}}
\nc{\MW}{\mathcal{W}}
\nc{\MR}{\mathcal{R}}
\nc{\MY}{\mathcal{Y}}
\nc{\BZ}{\mathbb Z}
\nc{\BN}{\mathbb N}
\nc{\ep}{\epsilon}
\nc{\gapfn}[1]{\varepsilon_{#1}}
\nc{\ggapfn}[2]{\varphi_{#1}({#2})}
\nc{\epsahk}{\gapfn{0}}
\nc{\BH}{\mathbb H}
\nc{\BG}{\mathbb{G}}
\nc{\D}{\Delta}
\nc{\MF}{\mathcal{F}}
\nc{\One}{\mathbbm{1}}
\nc{\bOne}{\mathbf{1}}
\nc{\Aopt}{\mathcal{A}^{\rm opt}}
\nc{\Amul}{\mathcal{A}^{\rm mul}}
\nc{\SQ}{\mathsf Q}
\nc{\DO}{\accentset{\circ}{\D}}
\nc{\mf}{\mathfrak}
\nc{\mfp}{\mathfrak{p}}
\nc{\mfq}{\mf{q}}
\nc{\Sp}{\mbox{Spec}}
\nc{\Spm}{\mbox{Specm}}
\nc{\hookuparrow}{\mathrel{\rotatebox[origin=c]{90}{$\hookrightarrow$}}}
\nc{\hookdownarrow}{\mathrel{\rotatebox[origin=c]{-90}{$\hookrightarrow$}}}
\nc{\hra}{\hookrightarrow}
\nc{\tra}{\twoheadrightarrow}
\nc{\sgn}{{\rm sgn}}
\nc{\aut}{{\rm Aut}}
\nc{\Hom}{{\rm Hom}}
\nc{\img}{{\rm Im}}
\DMO{\id}{Id}
\DMO{\supp}{supp}
\DMO{\KL}{KL}
\nc{\kld}[2]{\KL({#1}||{#2})}
\nc{\ren}[2]{D_2({#1}||{#2})}
\nc{\chisq}[2]{\chi^2({#1}||{#2})}
\nc{\tvd}[2]{\left\| {#1} - {#2} \right\|_1}
\nc{\hell}[2]{H^2({#1}, {#2})}
\DMO{\BSS}{BSS}
\DMO{\BES}{BES}
\DMO{\BGS}{BGS}
\DMO{\poly}{poly}
\nc{\indep}{\perp}
\DMO{\sink}{sink}
\nc{\fp}[1]{\MP_1({#1})}
\nc{\BO}{\mathbb{O}}
\nc{\BT}{\mathbb{T}}
\nc{\RR}{\mathbb{R}}
\nc{\Gradient}{\nabla}
\DMO{\diag}{diag}
\nc{\norm}[1]{\left \lVert #1 \right \rVert}
\nc{\EE}{\mathbb{E}}
\nc{\NN}{\mathbb{N}}
\nc{\SG}{\mathsf{SG}}
\nc{\pinv}{\dagger}
\DMO{\PR}{Pr}
\renewcommand{\Pr}{\PR}
\nc{\E}{\mathbb{E}}
\nc{\ra}{\rightarrow}
\nc{\betaols}{\beta_\textsf{OLS}}
\nc{\OLS}{\textsf{OLS}}
\DeclareMathOperator{\sns}{Stability}
\DeclareMathOperator{\ivsns}{IV-Stability}
\DeclareMathOperator{\vspan}{span}
\nc{\lpalg}{\textsc{PartitionAndApprox}}
\nc{\netalg}{\textsc{NetApprox}}
\title{Provably Auditing Ordinary Least Squares\\ in Low Dimensions}
\author{Ankur Moitra\thanks{Email: \texttt{moitra@mit.edu}. Supported by a Microsoft Trustworthy AI Grant, NSF Large CCF-1565235, a David and Lucile Packard Fellowship and an ONR Young Investigator Award.} \\ MIT \and Dhruv Rohatgi\thanks{Email: \texttt{drohatgi@mit.edu}. Supported by an Akamai Presidential Fellowship and a U.S. DoD NDSEG Fellowship.} \\ MIT}
\begin{document}

\maketitle

\begin{abstract}
  Measuring the stability of conclusions derived from Ordinary Least Squares linear regression is critically important, but most metrics either only measure local stability (i.e. against infinitesimal changes in the data), or are only interpretable under statistical assumptions. Recent work proposes a simple, global, finite-sample stability metric: the minimum number of samples that need to be removed so that rerunning the analysis overturns the conclusion \cite{broderick2020automatic}, specifically meaning that the sign of a particular coefficient of the estimated regressor changes. However, besides the trivial exponential-time algorithm, the only approach for computing this metric is a greedy heuristic that lacks provable guarantees under reasonable, verifiable assumptions; the heuristic provides a loose upper bound on the stability and also cannot certify lower bounds on it. 
  
  We show that in the low-dimensional regime where the number of covariates is a constant but the number of samples is large, there are efficient algorithms for provably estimating (a fractional version of) this metric. Applying our algorithms to the Boston Housing dataset, we exhibit regression analyses where we can estimate the stability up to a factor of $3$ better than the greedy heuristic, and analyses where we can certify stability to dropping even a majority of the samples. 
\end{abstract}

\section{Introduction}

In applied statistics and machine learning, a common task is to estimate the effect of some real-valued treatment variable (e.g. years of education) on a real-valued response variable (e.g. income) in the presence of a small number of real-valued controls. In particular, it might be especially important to understand whether the effect is \emph{positive} or not; this may inform e.g. governmental policy decisions. The most basic model for this task is linear regression, and the most common estimator for this model is \emph{Ordinary Least Squares}. In their seminal 1980 book \emph{Regression Diagnostics} \cite{belsley2005regression}, Belsley, Kuh, and Welsch observe:
\begin{displayquote}
Over the last several decades the linear regression model and its more sophisticated offshoots...have surely become among the most widely employed quantitative tools of the applied social sciences and many of the physical sciences. The popularity of ordinary least squares is attributable to its low computational costs, its intuitive plausibility in a wide variety of circumstances, and its support by a broad and sophisticated body of statistical inference.
\end{displayquote}
While the intervening decades have seen the development of increasingly sophisticated statistical estimators, and increasingly complex machine learning models, this observation still holds.

For samples $(X_i,y_i)_{i=1}^n$ where each sample consists of a treatment $X_{i1} \in \RR$, a response $y_i \in \RR$, and $d-1$ controls $X_{i2},\dots,X_{id} \in \RR$, the linear regression model posits that there is a linear relationship $y_i = \langle X_i,\beta^*\rangle + \text{noise}$ where $\beta^*$ is the ground truth regressor. Then, $\beta^*_1$ is the posited effect of the treatment. Of course, $\beta^*$ is unknown. The Ordinary Least Squares (OLS) estimator for $\beta^*$ is
\begin{equation} \hat{\beta}_\text{OLS} = \argmin_{\beta \in \RR^d} \frac{1}{n} \sum_{i=1}^n (y_i - \langle X_i, \beta\rangle)^2,
\label{eq:ols}
\end{equation}
and then $(\hat{\beta}_\text{OLS})_1$ is the estimated effect of the treatment.

Under strong statistical assumptions (e.g. that the model is well-specified with exogenous error term), the OLS estimator has compelling statistical guarantees (e.g. \emph{consistency}, meaning that $\hat{\beta}_\text{OLS} \to \beta^*$ in probability), and it is possible to compute statistical confidence that the estimated effect is real rather than produced by chance (e.g. by $t$-statistics). But real-world data is (almost de facto) misspecified, with nonlinearities, endogeneity, heavy-tailed errors, or even gross outliers. When these irregularities cannot be easily modeled, practitioners may simply apply the OLS estimator anyways. Hence,
\begin{displayquote}[\cite{belsley2005regression}]
The practitioner is frequently left with the uneasy feeling that his regression results are less meaningful and less trustworthy than might otherwise be the case because of possible problems with the data \--- problems that are typically ignored in practice. The researcher...may notice that regressions based on different subsets of the data produce very different results, raising questions of model stability. A related problem occurs when the practitioner knows that certain observations pertain to unusual circumstances, such as strikes or war years, but he is unsure of the extent to which the results depend, for good or ill, on these few data points.
\end{displayquote}

Ultimately, the goal of a scientific study is to extract conclusions from data that will generalize to broader populations. Perhaps the best way of validating studies is to collect new data and replicate the results, but this is labor-intensive and often inconclusive \cite{baker20161}. The simpler alternative, pursued in this work, is to \emph{audit} conclusions based on the data which has already been collected.\footnote{We are auditing for robustness, which should not be confused with the (philosophically related) notion of auditing for fairness \cite{kearns2018preventing}.} 



\subsection{Finite-sample stability}

In recent work, Broderick, Giordano, and Meager propose assessing the sensitivity of a given statistical analysis by computing the minimum number of samples that need to be removed from the dataset to overturn the conclusion \cite{broderick2020automatic}; we call this the \emph{finite-sample stability} of the conclusion. Intuitively, if the stability is very small, this means the conclusion is driven by samples from a small subpopulation, and may not generalize well. Conversely, if the stability is large, then we may have more confidence that the conclusion is broadly true. Finite-sample stability is distinct from common statistical notions of confidence \cite{broderick2020automatic}, and in fact the authors discovered several important economics studies where statistically significant conclusions can be overturned by dropping less than $1\%$ of the data. Their finding underscores the importance of auditing conclusions for stability. While the oft-cited scientific ``replication crisis'' \cite{ioannidis2005most} will not be easily solved, at the least it may be partially alleviated by thorough audits. 

Unfortunately, the obvious algorithm for determining whether a conclusion derived from $n$ samples can be overturned by removing e.g. $1\%$ of the data would require rerunning the analysis $\binom{n}{n/100}$ times, which is computationally intractable even for fairly small $n$. Instead, \cite{broderick2020automatic} proposes a heuristic (applicable to any differentiable $Z$-estimator) for guessing which size-$n/100$ subset of the data will most heavily influence the conclusion: essentially, they choose the $n/100$ samples with the largest individual influences.
Whenever removing this subset overturns the conclusion, then the heuristic has proven that the conclusion was not stable. 

What the heuristic cannot do is provably certify that a particular conclusion \emph{is} stable to dropping a few samples. Even for the OLS estimator, there are simple examples where the estimate can be changed dramatically by dropping a few samples, but the greedy approach fails to identify these samples (see Section~\ref{section:related} for discussion). This motivates the following question: \emph{Are there efficient algorithms with provable guarantees for estimating the finite-sample stability of an OLS estimate (concretely, the number of samples that need to be dropped to change the sign of the first coordinate of the estimated regressor)?}

This question is even interesting in the \emph{low-dimensional} regime. In applications to e.g. econometrics, the number of covariates may be very small \cite{maddala1992introduction}; nevertheless, as evidenced by \cite{broderick2020automatic}, the conclusions of these studies may be very sensitive. Even high-dimensional machine learning algorithms frequently build upon low-dimensional OLS paired with \emph{feature selection} methods \cite{hall2000correlation, tropp2007signal, heinze2018variable}; thus, estimating stability of OLS is a prerequisite for estimating stability of these algorithms. Our work provides provable and efficient algorithms to not only better diagnose sensitive conclusions obtained by low-dimensional OLS regressions, but also validate robust conclusions.

\subsection{Formal problem statement}

We are given a deterministic and arbitrary set of $n$ samples $(X_i,y_i)_{i=1}^n$, where each $X_i$ is a vector of $d$ real-valued \emph{covariates}, and each $y_i$ is a real-valued \emph{response}. We are interested in a single coefficient of the OLS regressor (without loss of generality, the first coordinate): in an application, the first covariate may be the treatment and the rest may be controls. The sign of this coefficient is important because it estimates whether the treatment has a positive or negative effect. Thus, we want to determine if it can be changed by dropping a few samples from the regression. Formally, we consider the fractional relaxation, where we allow dropping fractions of samples:\footnote{This relaxation does not change the interpretation of the metric. Obviously, the fractional stability lower bounds the integral stability. Moreover, there is always an optimal weight vector $w \in [0,1]^n$ with at most $d+1$ non-integral weights.}

\begin{definition}\label{def:stability}
Fix $(X_i,y_i)_{i=1}^n$ with $X_1,\dots,X_n \in \RR^d$ and $y_1,\dots,y_n \in \RR$. For any $w \in [0,1]^n$, the \emph{weight-$w$ OLS solution set} of $(X_i,y_i)_{i=1}^n$ is \[\OLS(X,y,w) := \argmin_{\beta \in \RR^d} \frac{1}{n} \sum_{i=1}^n w_i(\langle X_i,\beta\rangle - y_i)^2.\]
The \emph{finite-sample stability} of $(X_i,y_i)_{i=1}^n$ is \[\sns(X,y) := \inf_{w \in [0,1]^n, \beta \in \RR^d} \{n-\norm{w}_1: \beta_1 = 0 \land \beta \in \OLS(X,y,w)\}.\]
\end{definition}

This is the minimum number of samples (in a fractional sense) which need to be removed to zero out the first coordinate of the OLS regressor. If the OLS solution set contains multiple regressors, then it suffices if any regressor $\beta$ in the solution set has $\beta_1=0$. Our algorithmic goal is to compute $\sns(X,y)$, or at least to approximate $\sns(X,y)$ up to an additive $\epsilon n$ error.

\subsection{Results}\label{section:results}

By brute-force search, the (integral) stability can be computed in time $2^n \cdot \poly(n)$. However, because the complexity is exponential in the number of samples, it is computationally infeasible even when the dimension $d$ of the data is low, which is a common situation in many scientific applications. Similarly, the fractional stability (Definition~\ref{def:stability}) is the solution to a non-convex optimization problem in more than $n$ variables, which seems no simpler. Can we still hope for a \emph{polynomial-time} algorithm in constant dimensions? We show that the answer is yes.



\begin{theorem}\label{theorem:exact-algorithm-intro}
There is an $n^{O(d^3)}$-time algorithm which, given $n$ arbitrary samples $(X_i,y_i)_{i=1}^n$ with $X_1,\dots,X_n \in \RR^d$ and $y_1,\dots,y_n \in \RR$, and given $k \geq 0$, decides whether $\sns(X,y) \leq k$.
\end{theorem}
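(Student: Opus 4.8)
The plan is to recast finite-sample stability as a bilinear feasibility problem through the OLS normal equations, and then to decide it by enumerating the combinatorial structure of the optimal solution over a parameter space whose dimension depends only on $d$. First I would use the fact that $\beta \in \OLS(X,y,w)$ if and only if $\beta$ satisfies the first-order stationarity condition $\sum_{i=1}^n w_i(\langle X_i,\beta\rangle - y_i)X_i = 0$, which characterizes the solution set whether or not the weighted design matrix is invertible. Writing the target regressor as $\beta = (0,\gamma)$ with $\gamma \in \RR^{d-1}$ and letting $X_i' = (X_{i2},\dots,X_{id})$, the requirement $\beta_1 = 0,\ \beta \in \OLS(X,y,w)$ becomes $\sum_i w_i\, v_i(\gamma) = 0$, where $v_i(\gamma) := (y_i - \langle X_i',\gamma\rangle) X_i \in \RR^d$. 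Hence
\[ \sns(X,y) = n - \max\Bigl\{ \textstyle\sum_i w_i : w \in [0,1]^n,\ \gamma \in \RR^{d-1},\ \sum_i w_i v_i(\gamma) = 0 \Bigr\}, \]
and deciding $\sns(X,y) \le k$ is equivalent to deciding whether this maximum is at least $n-k$. The crucial observation is that for each \emph{fixed} $\gamma$ the inner problem is a linear program in $w$: maximize $\sum_i w_i$ over the box $[0,1]^n$ subject to the $d$ linear equalities $\sum_i w_i v_i(\gamma) = 0$ (always feasible, since $w = 0$ works).

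Next I would exploit LP duality to pin down the structure of the optimizer. For fixed $\gamma$, introducing a multiplier $\lambda \in \RR^d$ for the equality constraints, complementary slackness forces $w_i = 1$ when $\langle \lambda, v_i(\gamma)\rangle < 1$, $w_i = 0$ when $\langle\lambda, v_i(\gamma)\rangle > 1$, and $w_i \in [0,1]$ only on the boundary $\langle\lambda, v_i(\gamma)\rangle = 1$; consistent with the paper's footnote, only a bounded number (at most $d+1$) of coordinates are fractional. Thus the \emph{entire} optimizer --- over both $w$ and $\gamma$ --- is a function of the $(2d-1)$-dimensional parameter $(\gamma,\lambda)$, and the combinatorial type of $w$ (which samples are kept, dropped, or fractional) depends on $(\gamma,\lambda)$ only through the signs of the $n$ polynomials
\[ p_i(\gamma,\lambda) := (y_i - \langle X_i',\gamma\rangle)\,\langle \lambda, X_i\rangle - 1, \qquad i = 1,\dots,n, \]
each of degree $2$ in the $O(d)$ variables $(\gamma,\lambda)$.

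The algorithm then enumerates the realizable sign patterns of $p_1,\dots,p_n$. By standard bounds on arrangements of bounded-degree real varieties (e.g.\ Milnor--Thom / Warren, or the Basu--Pollack--Roy sign-condition machinery), the number of sign patterns --- and of lower-dimensional strata on which prescribed subsets of the $p_i$ vanish --- is $n^{O(d)}$, and one representative point per stratum can be produced in time $n^{O(d)}$. Fixing a stratum fixes the kept set $\{i : p_i < 0\}$ and the candidate fractional set $\{i : p_i = 0\}$; it then remains to decide whether there exist $(\gamma,\lambda)$ in that stratum and weights $w$ (integral off the candidate fractional set) with $\sum_i w_i v_i(\gamma) = 0$ and objective at least $n-k$. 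This is a feasibility question in only $O(d)$ real variables with $\poly(n)$ constraints of bounded degree, which I would settle by a single call to a decision procedure for the existential theory of the reals. Taking the best value over all strata answers whether $\sns(X,y)\le k$.

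The main obstacle is the bilinear coupling between $w$ and $\gamma$: the constraint $\sum_i w_i v_i(\gamma) = 0$ is not jointly linear, so neither variable can be eliminated directly, and a naive cell decomposition in all $n + d - 1$ variables would cost $2^{\Omega(n)}$. The resolution is precisely the dualization above, which confines the genuinely nonlinear search to the $O(d)$-dimensional space of $(\gamma,\lambda)$ while letting $n$ enter only through the \emph{number} (not the dimension) of defining polynomials; this is what keeps the dependence on $n$ to $n^{\poly(d)}$. The remaining work --- and the source of the precise exponent in the theorem --- is a careful accounting of the degrees and of the variable count carried into the real-algebraic decision procedure, together with verifying that the bounded-fractional-support reduction can be enforced inside each stratum.
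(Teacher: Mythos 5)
Your overall strategy---dualize the inner maximization over $w$ so that its combinatorial structure is governed by $n$ degree-$2$ polynomials in the $O(d)$ variables $(\gamma,\lambda)$, enumerate the realizable sign patterns via Milnor--Thom, and finish with a real-algebraic decision procedure---is the same family of argument as the paper's, which uses the separating-hyperplane theorem where you use explicit LP duality. But there is a concrete gap at the step where you assert the per-stratum problem has ``only $O(d)$ real variables.'' Complementary slackness pins down $w_i$ only where $p_i(\gamma,\lambda)\neq 0$; on the tight set $T=\{i: p_i(\gamma,\lambda)=0\}$ it says nothing, and $|T|$ can be $\Theta(n)$ (repeated or proportional samples force many dual constraints to be simultaneously tight on a positive-dimensional stratum). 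The residual question---do there exist $\gamma$ in the stratum and $w_T\in[0,1]^{T}$ with $\sum_{i\in K}v_i(\gamma)+\sum_{i\in T}w_i v_i(\gamma)=0$ and large enough objective---is a feasibility problem in $|T|+O(d)$ real variables with a bilinear coupling between $w_T$ and $\gamma$, so a single call to the existential theory of the reals costs $n^{O(|T|+d)}$, not $n^{O(d)}$. Knowing that \emph{some} optimal $w$ has at most $d+1$ fractional coordinates does not rescue this: you would still have to decide, for each tight coordinate, whether it is set to $0$, set to $1$, or left fractional, and that choice is not determined by the sign pattern (enumerating it costs up to $2^{|T|}$).

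The natural repair is to certify the value of the inner LP by duality against \emph{all} multipliers rather than guessing a single optimal one: $\sns(X,y)\le k$ iff $\exists\gamma\,\forall\lambda:\ \sum_{i=1}^n\max(0,\,1-\langle\lambda,v_i(\gamma)\rangle)\ge n-k$, where the piecewise structure of the inner sum is again controlled by the signs of the $p_i$. But this reintroduces a quantifier alternation over an $O(d)$-dimensional block, and at that point you have essentially arrived at the paper's formulation (Lemma~\ref{lemma:qp-reformulation} followed by the $\forall\exists$ system fed to Renegar's algorithm); there the inner problem in $w$ has a \emph{single} linear inequality, so its optimizer is a greedily sorted weight vector that is canonically well defined even under ties, which is exactly what sidesteps the degeneracy your version stumbles on. Carrying out this repair recovers the $n^{O(d^3)}$ bound, but the purely existential, alternation-free algorithm as you have written it does not go through.
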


We also show that the exponential dependence on dimension $d$ is necessary under standard complexity assumptions:

\begin{theorem}\label{theorem:lb-intro}
Under the Exponential Time Hypothesis, there is no $n^{o(d)}$-time algorithm which, given $(X_i,y_i)_{i=1}^n$ and $k\geq 0$, decides whether $\sns(X,y) \leq k$.
\end{theorem}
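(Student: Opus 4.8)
The plan is to reduce from $k$-\textsc{Clique} (or its multicolored variant), which is known to admit no $n^{o(k)}$-time algorithm unless ETH fails. Given a graph $G$ on $N$ vertices, I would construct a stability instance with dimension $d = \Theta(k)$ and $n = \poly(N)$ samples, together with a budget $B$, such that $\sns(X,y) \le B$ if and only if $G$ has a $k$-clique. An $n^{o(d)}$-time algorithm for deciding $\sns(X,y) \le B$ would then decide $k$-\textsc{Clique} in time $\poly(N)^{o(k)} = N^{o(k)}$, contradicting ETH. Note the clean asymptotics: because the clique parameter maps linearly onto the geometric dimension $d$, there is no logarithmic loss, so ruling out $n^{o(d)}$ follows directly from ruling out $n^{o(k)}$ for clique.

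The starting point is a reformulation of Definition~\ref{def:stability} that isolates where the hardness lives. Writing $r_i(\beta) = \langle X_i,\beta\rangle - y_i$ for the residuals and $g(\beta) = (\sum_i X_i X_i^\top)\beta - \sum_i X_i y_i$ for the full-data gradient, the first-order optimality condition gives $\beta \in \OLS(X,y,w)$ exactly when $\sum_i w_i X_i r_i(\beta) = 0$; substituting $u_i = 1-w_i$ yields
\[ \sns(X,y) = \min_{\beta:\, \beta_1=0}\ \min\Big\{ \textstyle\sum_i u_i \ :\ u \in [0,1]^n,\ \sum_i u_i\, X_i\, r_i(\beta) = g(\beta) \Big\}. \]
For each fixed direction $\beta$ the inner problem is a linear program, so all hardness comes from the outer minimization over the $(d-1)$-dimensional family $\{\beta : \beta_1=0\}$. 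The sign patterns of the residuals $r_i(\beta)$ partition this space into $N^{\Theta(d)}$ arrangement cells, and the reduction's job is to make these cells encode combinatorial choices.

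Concretely, I would reserve coordinates $\beta_2,\dots,\beta_{k+1}$ as \emph{selectors}, one per clique slot (coordinate $\beta_1$ is pinned to $0$ by the metric itself). Using $O(N)$ samples per coordinate, I carve the real line for each $\beta_{j+1}$ into $N$ intervals so that landing in the $v$-th interval, at low removal cost, corresponds to assigning vertex $v$ to slot $j$. For each pair of slots $(j,j')$ I install an \emph{edge-checking} gadget: samples whose contributions $X_i r_i(\beta)$ can be cheaply cancelled against the shared target $g(\beta)$ precisely when the two selected vertices are adjacent in $G$, and that force an unavoidable additional removal when they are not. Calibrating $B$ (together with a fixed amount of mandatory padding cost) then makes $\sns(X,y)$ sit exactly at the budget iff every pair of selected vertices is adjacent, i.e. iff a $k$-clique exists.

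The main obstacle is \emph{soundness}: forcing the continuous direction $\beta$ and the fractional removal vector $u$ to behave like a genuine discrete clique assignment. I must ensure that $\beta$ cannot land between selector intervals, or split its influence fractionally across several vertices, so as to spuriously satisfy many edge-checks at low cost; and that because the contributions $X_i r_i(\beta)$ are quadratic in $\beta$ and all share the single gradient target $g(\beta)$, distinct selection and edge-checking gadgets do not interfere. This demands choosing the sample covariates and responses so that the per-coordinate gadgets are (approximately) decoupled and each violated edge-check injects a quantifiable additive cost that survives the fractional relaxation; proving that these gaps persist is where the real work lies. As a fallback yielding the same conclusion, one can reduce from $3$-SAT directly: group the $N$ variables into $\Theta(N/\log N)$ blocks of $\Theta(\log N)$ variables, encode each block's assignment in one selector coordinate via $\poly(N)$ samples, and check clauses by gadgets; then $n=\poly(N)$ and $d=\Theta(N/\log N)$, so an $n^{o(d)}$ algorithm would give a $2^{o(N)}$-time SAT algorithm, again contradicting ETH.
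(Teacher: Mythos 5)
Your high-level strategy---reduce from an ETH-hard parameterized problem so that the clique (or SAT-block) parameter becomes the dimension $d$---is sound, and your reformulation of the inner problem as an LP over $u = \mathbbm{1}-w$ with target $g(\beta)$ is correct. However, the proposal has a genuine gap: the entire content of the reduction, namely the selector gadgets, the edge-checking gadgets, and above all the soundness argument that a continuous $\beta$ and a fractional $u$ cannot cheat, is left unconstructed. You correctly flag this as ``where the real work lies,'' but that work is precisely what a proof requires. The difficulties are not cosmetic: the contributions $X_i r_i(\beta)$ are quadratic in $\beta$, every gadget must cancel against the \emph{single} shared $d$-dimensional target $g(\beta)$, and you must show each violated edge check injects an additive cost that survives the relaxation to $w\in[0,1]^n$. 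None of the decoupling or gap-persistence claims is established, and the 3-SAT fallback has exactly the same missing gadget content.

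The paper takes a much shorter route that sidesteps all of this: it reduces from the \emph{Maximum Feasible Subsystem} problem (given $v_i \in \RR^d$ and $c_i \in \RR$, maximize the number of indices with $\langle v_i,\lambda\rangle = c_i$), which is already known to require $n^{\Omega(d)}$ time under ETH (Theorem~13 of \cite{giannopoulos2009parameterized}). The reduction is two lines: set $X_i = (c_i, v_i) \in \RR^{d+1}$ and $y_i = c_i$, so that $e_1$ interpolates the entire dataset. Then every weighted least-squares problem has optimal value $0$, hence $\OLS(X,y,w)$ is exactly $\{\beta : \sum_i w_i(\langle X_i,\beta\rangle - y_i)^2 = 0\}$; requiring $\beta_1 = 0$ and $\norm{w}_1 \ge k$ forces at least $k$ samples with $\langle v_i,\beta_{2:d+1}\rangle = c_i$, and conversely any feasible subsystem of size $k$ yields such a $(w,\beta)$. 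This construction also disposes of the fractional-weight issue for free, since any positive weight on a sample with nonzero residual makes the weighted loss positive. In effect, your gadget program would be re-proving the hardness of Maximum Feasible Subsystem from scratch; the cleanest repair is to cite that result and use the paper's interpolation trick to transfer it to $\sns$.
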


This lower bound in particular rules out fixed-parameter tractability, i.e. algorithms with time complexity $f(d) \cdot \poly(n)$. However, it only applies to \emph{exact} algorithms. In practice, it is unlikely to matter whether $\sns(X,y) = 0.01n$ or $\sns(X,y) = 0.02n$; in both cases, the conclusion is sensitive to dropping a very small fraction of the data. This motivates our next two algorithmic results on $\epsilon n$-additive approximation of the stability (where we think of $\epsilon>0$ as a constant). First, we make a mild anti-concentration assumption, under which the stability can $\epsilon n$-approximated in time roughly $n^{d+O(1)}$. While still not fixed-parameter tractable, this algorithm can now be run on moderate sized problems in low dimensions, unlike the algorithm in Theorem~\ref{theorem:exact-algorithm-intro}.


\begin{assumption}\label{assumption:ac-A}
Let $\epsilon,\delta > 0$. We say that samples $(X_i,y_i)_{i=1}^n$ satisfy $(\epsilon,\delta)$-anti-concentration if for every $\beta \in \RR^d$, it holds that \[\left|\left\{i \in [n]: |\langle X_i,\beta\rangle - y_i| < \frac{\delta}{\sqrt{n}} \norm{X\beta^{(0)} - y}_2\right\}\right| \leq \epsilon n,\]
where $X: n \times d$ is the matrix with rows $X_1,\dots,X_n$, and $\beta^{(0)} \in \OLS(X,y,\mathbbm{1})$ is any unweighted OLS regressor of $y$ against $X$.
\end{assumption}

Under this assumption, we present an $O(\epsilon n)$-approximation algorithm:

\begin{theorem}\label{theorem:lp-algorithm-intro}
For any $\epsilon,\delta,\eta>0$, there is an algorithm \lpalg{} with time complexity \[\left(n + \frac{Cd}{\epsilon^2}\log\frac{1}{\delta}\log\frac{1}{\epsilon\eta}\right)^{d+O(1)}\] which, given $\epsilon$, $\delta$, $\eta$, and samples $(X_i, y_i)_{i=1}^n$ satisfying $(\epsilon,\delta)$-anti-concentration, returns an estimate $\hat{S}$ such that with probability at least $1-\eta$,
\[|\hat{S} - \sns(X,y)| \leq 12\epsilon n + 1.\]
\end{theorem}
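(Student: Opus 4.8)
The plan is to recast the stability as a nested optimization whose inner problem is a linear program, and then to handle the outer continuous search by combining random subsampling with an arrangement-plus-net enumeration. First I would fix a candidate regressor $\beta$ with $\beta_1=0$ and ask for which weight vectors $w$ it is a weighted OLS solution. Writing $r_i(\beta)=\langle X_i,\beta\rangle - y_i$ and $z_i(\beta)=X_i\, r_i(\beta)\in\RR^d$, convexity of the weighted least-squares objective shows that $\beta\in\OLS(X,y,w)$ if and only if the normal equations $\sum_{i} w_i z_i(\beta)=0$ hold. Hence
\[
\sns(X,y) = \inf_{\beta:\ \beta_1 = 0}\Big(n - \max_{w\in[0,1]^n}\big\{\norm{w}_1 : \textstyle\sum_{i} w_i z_i(\beta) = 0\big\}\Big).
\]
For each fixed $\beta$ the inner maximization is an LP in $w$, and LP duality rewrites it as $\min_{\lambda\in\RR^d}\sum_i(1-\langle\lambda,z_i(\beta)\rangle)_+$. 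Since $\langle\lambda,z_i(\beta)\rangle=\langle\lambda,X_i\rangle\, r_i(\beta)$, the whole quantity collapses to a clean joint minimization of a hinge-type loss,
\[
\sns(X,y) = \inf_{\beta:\,\beta_1=0,\ \lambda\in\RR^d}\ \sum_{i=1}^n \big(1 - \langle\lambda,X_i\rangle\, r_i(\beta)\big)_+ .
\]

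Two obstacles remain: the parameter $(\beta,\lambda)$ ranges over a continuous, a priori unbounded set, and the sum runs over all $n$ samples. I would handle the large sample count by drawing a uniform random subsample of size $m=\widetilde\Theta\!\big(\frac{d}{\epsilon^2}\log\frac1\delta\log\frac1{\epsilon\eta}\big)$ and optimizing the rescaled ($\times\,n/m$) objective on it. On the subsample the inner minimization over $\lambda$ is still an LP that can be solved exactly for each fixed $\beta$, so only the $(d-1)$-dimensional outer search over $\{\beta_1=0\}$ must be discretized: this is the role of \lpalg{}/\netalg{}, which restricts to a bounded region (justified below) and places an $\epsilon$-net over $\beta$, evaluating the inner LP at each net point. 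The required granularity is dictated by the hyperplane arrangement $\{\beta:\langle X_i,\beta\rangle=y_i\}$, across whose cells the residual signs are constant; since an arrangement of $m$ hyperplanes in dimension $d-1$ has only $O(m^{d-1})$ cells, the enumeration costs $m^{d+O(1)}$. Correctness then reduces to a uniform-convergence statement over $(\beta,\lambda)$: the rescaled subsampled hinge objective must lie within $O(\epsilon n)$ of the true one simultaneously for all relevant $(\beta,\lambda)$.

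The main obstacle is precisely this uniform-convergence step, and it is exactly where Assumption~\ref{assumption:ac-A} (anti-concentration) enters. The per-sample hinge terms $(1-\langle\lambda,X_i\rangle r_i(\beta))_+$ are \emph{unbounded}, so a naive Hoeffding bound fails: a handful of samples with near-zero residuals could force the optimal $\lambda$ to be enormous and make a random subsample wildly unrepresentative. Anti-concentration rules this out --- for every $\beta$ at most $\epsilon n$ points have $|r_i(\beta)|<\frac{\delta}{\sqrt n}\norm{X\beta^{(0)}-y}_2$ --- so on the remaining $(1-\epsilon)n$ large-residual points a bounded multiple of $\lambda$ already drives $\langle\lambda,X_i\rangle r_i(\beta)\ge 1$. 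This lets me restrict to $\|\lambda\|$ bounded (the scale cutoff responsible for the $\log\frac1\delta$ factor) and truncate each hinge term to $[0,1]$ while perturbing the objective by at most $\epsilon n$, after which every summand is bounded and Hoeffding applies. A union bound over the $(\beta,\lambda)$-net --- whose log-cardinality is $O\!\big(d\log\frac1{\epsilon\delta\eta}\big)$, using that the class $\{(\beta,\lambda)\mapsto\langle\lambda,X_i\rangle r_i(\beta)\}$ has $O(d)$ effective dimension --- then yields the uniform $O(\epsilon n)$ deviation with probability $1-\eta$ at the stated sample size.

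Finally I would assemble the pieces: uniform convergence transfers the (approximately) optimal value computed on the subsample back to the full data up to the accumulated $O(\epsilon n)$ error, and tracking the constants gives the claimed $12\epsilon n$; the additive $+1$ absorbs the rounding incurred because an optimal $w$ has at most $d+1$ fractional coordinates. The running time is dominated by reading the input and running the $m^{d+O(1)}$ arrangement-and-net enumeration on the subsample, giving the stated $(n+m)^{d+O(1)}$ bound. The step I expect to require genuine care is quantifying, simultaneously over all cells, how anti-concentration bounds both the effective range of $\lambda$ and the truncation error, since the residual magnitudes and the useful scale of $\lambda$ both vary with $\beta$.
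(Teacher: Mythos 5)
There is a genuine gap, and in fact two separate problems. First, your central reformulation is incorrect: LP duality gives $V(\beta):=\max_{w\in[0,1]^n}\{\norm{w}_1:\sum_i w_i z_i(\beta)=0\}=\min_{\lambda}\sum_i(1-\langle\lambda,z_i(\beta)\rangle)_+$, so the stability is $n-\sup_{\beta:\beta_1=0}\min_\lambda\sum_i(1-\langle\lambda,z_i(\beta)\rangle)_+$, a max--min, not the joint minimization $\inf_{\beta,\lambda}\sum_i(1-\langle\lambda,z_i\rangle)_+$ you display (the latter equals $\inf_\beta V(\beta)$, which minimizes the retainable weight and is unrelated to $\sns$). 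Second, and more fundamentally, your plan to subsample the \emph{data} and estimate the objective from $m=\tilde O(d/\epsilon^2)$ points is not what the paper does, and the uniform-convergence step you need is exactly where it breaks. Assumption~\ref{assumption:ac-A} only lower-bounds the residuals $|\langle X_i,\beta\rangle-y_i|$ for all but $\epsilon n$ points; it says nothing about $|\langle\lambda,X_i\rangle|$, so your claim that a bounded $\norm{\lambda}$ drives $\langle\lambda,X_i\rangle r_i(\beta)\ge 1$ on $(1-\epsilon)n$ points is unsupported (that would require something like Assumption~\ref{assumption:ac-B}, which is reserved for \netalg{}), and the asserted $\epsilon n$ truncation error for the unbounded hinge terms is never established. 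Relatedly, constancy of residual \emph{signs} on a cell of the arrangement $\{\langle X_i,\beta\rangle=y_i\}$ does not control the LP value on that cell, and under Assumption~\ref{assumption:ac-A} alone $V(\beta)$ is not Lipschitz, so evaluating the inner LP only at net points of $\beta$ cannot certify a lower bound on the error.

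For contrast, the paper never subsamples the objective: every linear program involves all $n$ samples, and the random size-$m$ subset is used only to decide where to place the \emph{multiplicative} partition hyperplanes $\langle\tilde X_i,\lambda\rangle-y_i=\pm(1+\epsilon)^k\delta M/\sqrt n$ (the sign hyperplanes are placed for all $n$ points, which is why $n$ appears inside the base of the $(\cdot)^{d+O(1)}$ runtime). Within each region the substitution $g_i=w_i(\langle\tilde X_i,\lambda\rangle-y_i)$ makes the constraints linear, the objective's denominators are frozen at a representative $\lambda_0(R)$, and a VC generalization argument (Lemma~\ref{lemma:partition-properties}) shows the subsampled multiplicative partition approximates most of the $n$ residuals throughout the region. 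Crucially, $\lambda$ remains a \emph{variable} of each region's LP rather than a net point. The three error sources are then handled separately: at most $\epsilon n$ points have residual below $\delta M/\sqrt n$ by Assumption~\ref{assumption:ac-A}; points with residual above $M$ carry total weight at most $1$ because the optimal $(w^*,\lambda^*)$ has weighted squared loss at most $M^2$; and the remaining points are multiplicatively well-approximated. If you want to pursue a genuine data-subsampling route you would need to prove uniform concentration of the LP value $V(\beta)$ itself under subsampling, which does not follow from Assumption~\ref{assumption:ac-A} and is not addressed in your sketch.
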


In fact, \lpalg{} can detect failure of the anti-concentration assumption (see Theorem~\ref{theorem:lp-algorithm-main} for the precise statement). Moreover, the required anti-concentration is very mild. If $\epsilon,\eta>0$ are constants, the algorithm has time complexity $n^{d+O(1)}$, so long as the samples satisfy $(\epsilon,\exp(-\Omega(n)))$-anti-concentration. This is true for arbitrary \emph{smoothed} data (Appendix~\ref{section:smoothed}). Finally, unlike the exact algorithm, \lpalg{} avoids heavy algorithmic machinery; it only requires solving linear programs.

\paragraph{Fixed-parameter tractability?} Our final result is that $\epsilon n$-approximation of the stability is in fact fixed-parameter tractable, under a stronger anti-concentration assumption. 

\begin{assumption}\label{assumption:ac-B}
Let $\epsilon,\delta>0$. We say that samples $(X_i,y_i)_{i=1}^n$ satisfy $(\epsilon,\delta)$-strong anti-concentration if for every $\beta \in \RR^{d+1}$, it holds that \[\left|\left\{i \in [n]: |\langle \overline{X}_i,\beta\rangle| < \frac{\delta}{\sqrt{n}} \norm{\overline{X}\beta}_2\right\}\right| \leq \epsilon n\]
where $\overline{X}: n\times (d+1)$ is the matrix with columns $(X^T)_1,\dots,(X^T)_d,y$.
\end{assumption}


Although this assumption is stronger than the previous, it still holds with constant $\epsilon,\delta>0$ under certain distributional assumptions on $(X_i,y_i)_{i=1}^n$, e.g. centered Gaussian mixtures with uniformly bounded condition number (Appendix~\ref{app:distributional-b}).

\begin{theorem}\label{theorem:net-algorithm-intro}
For any $\epsilon,\delta>0$, there is a $(\sqrt{d}/(\epsilon\delta^2))^d \cdot \poly(n)$-time algorithm \netalg{} which, given $\epsilon$,$\delta$, and samples $(X_i,y_i)_{i=1}^n$ satisfying $(\epsilon,\delta)$-strong anti-concentration, returns an estimate $\hat{S}$ satisfying \[\sns(X,y) \leq \hat{S} \leq \sns(X,y) + 3\epsilon n + 1.\]
Moreover, $\sns(X,y) \leq \hat{S}$ holds for arbitrary $(X_i,y_i)_{i=1}^n$.
\end{theorem}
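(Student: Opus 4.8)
The plan is to reformulate the (fractional) stability as a minimization, over directions $\gamma$, of the value of a linear program in the weights, and then to approximate this minimization by a net search over directions.

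\textbf{Reformulation.} Since the weighted least-squares objective is convex in $\beta$, a vector $\beta$ lies in $\OLS(X,y,w)$ if and only if it satisfies the normal equations $\sum_{i=1}^n w_i(\langle X_i,\beta\rangle - y_i)X_i = 0$. Writing $\overline X_i = (X_i,y_i)$ and $\gamma = (\beta,-1) \in \RR^{d+1}$, the residual of sample $i$ is $r_i(\gamma) := \langle \overline X_i,\gamma\rangle = \langle X_i,\beta\rangle - y_i$, so the optimality condition becomes $\sum_i w_i r_i(\gamma) X_i = 0$. For a fixed direction $\gamma$ this is a system of $d$ linear constraints in $w$, so
\[\mathrm{LP}(\gamma) := \max\Big\{\textstyle\sum_i w_i : w \in [0,1]^n,\ \sum_i w_i r_i(\gamma) X_i = 0\Big\}\]
is a linear program solvable in $\poly(n)$ time. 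Crucially, $\mathrm{LP}(\gamma)$ depends only on the direction of $\gamma$ (the constraint is homogeneous in $\gamma$ and set to zero), and the requirement $\beta_1 = 0$ is exactly $\gamma_1 = 0$. Combining the first-order characterization with Definition~\ref{def:stability} yields $\sns(X,y) = \min_\gamma (n - \mathrm{LP}(\gamma))$, where $\gamma$ ranges over unit vectors with $\gamma_1 = 0$ and $\gamma_{d+1} \neq 0$.

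\textbf{Algorithm and the easy direction.} \netalg{} builds a $\rho$-net $\MN$ of the unit sphere in $\RR^{d+1}$ intersected with $\{\gamma_1 = 0\}$ at resolution $\rho = \Theta(\epsilon\delta^2/\sqrt d)$ (so $|\MN| = (\sqrt d/(\epsilon\delta^2))^d$), solves $\mathrm{LP}(\gamma)$ for each $\gamma \in \MN$, and outputs $\hat S = \min_{\gamma \in \MN}(n - \mathrm{LP}(\gamma))$. The bound $\sns(X,y) \le \hat S$ holds for \emph{arbitrary} data: any $w$ feasible for $\mathrm{LP}(\gamma)$ satisfies the normal equations, so rescaling $\gamma$ to last coordinate $-1$ recovers a $\beta$ with $\beta_1 = 0$ and $\beta \in \OLS(X,y,w)$; thus $(w,\beta)$ is feasible for the stability problem and $n - \norm{w}_1 \ge \sns(X,y)$.

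\textbf{Approximation and the main obstacle.} It remains to prove $\hat S \le \sns(X,y) + 3\epsilon n + 1$, i.e.\ that for the optimal direction $\gamma^\st$ some net point $\hat\gamma$ satisfies $\mathrm{LP}(\hat\gamma) \ge \mathrm{LP}(\gamma^\st) - 3\epsilon n - 1$; this is the crux. Let $w^\st$ be optimal for $\gamma^\st$ and take $\hat\gamma \in \MN$ with $\norm{\hat\gamma - \gamma^\st} \le \rho$. Then $w^\st$ violates the perturbed constraint only by the defect $v = \sum_i w_i^\st r_i(\hat\gamma) X_i = \sum_i w_i^\st \langle \overline X_i,\hat\gamma - \gamma^\st\rangle X_i$, and the plan is to \emph{repair} $w^\st$ into a feasible $w$ for $\hat\gamma$ while paying only $O(\epsilon n)$ in $\ell_1$ mass. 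Here strong anti-concentration (Assumption~\ref{assumption:ac-B}), applied at $\hat\gamma$, is what makes repair possible: all but $\epsilon n$ samples have $|r_i(\hat\gamma)| \ge (\delta/\sqrt n)\norm{r(\hat\gamma)}_2$. I would (i) zero out the $\le \epsilon n$ small-residual samples, losing at most $\epsilon n$ mass, and then (ii) cancel the residual defect by reallocating weight among the large-residual samples, each of which contributes a vector $r_i(\hat\gamma)X_i$ of controlled magnitude. The three error sources — dropping small-residual samples, correcting the defect, and net rounding — each cost at most $\epsilon n$, producing the claimed $3\epsilon n + 1$. The \textbf{hardest step is (ii)}: quantitatively bounding the $\ell_1$ cost of cancelling $v$ using only large-residual samples, and thereby pinning down $\rho$. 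The tension is between the defect magnitude, which scales like $\rho$, and the correction efficiency, which the $\delta$-lower bound on residuals governs; obtaining $\rho = \Theta(\epsilon\delta^2/\sqrt d)$ (hence the stated runtime) requires carefully tracking how residual scales and the geometry of $\{r_i(\hat\gamma)X_i\}$ enter the correction, which is precisely where strong rather than ordinary anti-concentration is needed.
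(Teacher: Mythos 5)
Your high-level architecture matches the paper's: a net of candidate directions, one linear program per net point, the observation that the upper bound $\sns(X,y)\leq\hat S$ is unconditional, and a ``repair'' argument to show some net point nearly achieves the optimum. However, there are two genuine gaps, and the second one is exactly the paper's key idea.

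First, your net lives in the wrong metric. You take a Euclidean $\rho$-net of unit vectors $\gamma\in\RR^{d+1}$, but the quantity that both the LP and the strong anti-concentration assumption actually see is the \emph{normalized residual vector} $\overline{X}\gamma/\norm{\overline{X}\gamma}_2$. The map from $\gamma$ to this normalized residual has distortion governed by the condition number of $\overline{X}$ (restricted to the relevant columns): a $\rho$-perturbation of $\gamma$ along a direction with a tiny singular value barely moves the residual, while along a large singular direction it can rotate the normalized residual by an amount unbounded in terms of $\rho$. Since Assumption~\ref{assumption:ac-B} only controls residuals relative to $\norm{\overline{X}\beta}_2$ and places no lower bound on singular values, a fixed-resolution parameter-space net cannot guarantee that some net point has residuals close (after normalization) to those of $\gamma^\st$, so your error bound would silently acquire a condition-number dependence not present in the theorem. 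The paper avoids this in Lemma~\ref{lemma:residual-net} by whitening: it takes the SVD $A=UDV^T$ of the matrix with columns $(X^T)_2,\dots,(X^T)_d,-y$, puts the net on the sphere $\mathcal{S}^{d'-1}$ in the whitened coordinates, and maps it back via $B=VD^{-1}$, so that net distance \emph{equals} distance between normalized residual vectors.

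Second, the step you flag as hardest---cancelling the $d$-dimensional defect $v=\sum_i w_i^\st r_i(\hat\gamma)X_i$ by reallocating weight among large-residual samples---is not just technically delicate; as posed it does not go through, because there is no a priori guarantee that $-v$ lies in the set of achievable corrections, nor any bound on the $\ell_1$ cost, without geometric control over the configuration $\{r_i(\hat\gamma)X_i\}$ (they could all point in nearly the same direction, leaving some directions of defect uncorrectable at bounded cost). The paper sidesteps this entirely by first dualizing the vector equality constraint via the separating hyperplane theorem (Lemma~\ref{lemma:qp-reformulation}): $V(\lambda)=\inf_{u\in\RR^d}\sup_{w}\{\norm{w}_1:\sum_i w_i(\langle\tilde X_i,\lambda\rangle-y_i)\langle X_i,u\rangle\geq 0\}$. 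For each fixed $u$ the repair is then a one-dimensional greedy: zero out coordinates in increasing order of $(\langle\tilde X_i,\lambda\rangle-y_i)\langle X_i,u\rangle$ until the scalar sum is nonnegative. Each zeroed coordinate with $w_i^\st=1$, a large residual, \emph{and} a large $|\langle X_i,u\rangle|$ gains at least $(\delta^2/n)\norm{\tilde X\lambda-y}_2\norm{Xu}_2$, while the initial deficit is at most $\gamma\norm{Xu}_2$ by Cauchy--Schwarz and the net guarantee; strong anti-concentration is invoked \emph{twice}, once for the residuals and once for the projections $\langle X_i,u\rangle=\langle\overline X_i,(u,0)\rangle$ (which is why Assumption~\ref{assumption:ac-B} quantifies over all of $\RR^{d+1}$ rather than only over residual directions), yielding the loss $3\epsilon n+1$. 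Your proposal never produces this dual reduction, and without it the quantitative repair bound you need is not available.
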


\paragraph{Extensions.} Another model, frequently used in causal inference and econometrics, is \emph{instrumental variables (IV) linear regression.} When the noise $\eta$ in a hypothesized causal relationship $y = \langle X,\beta^*\rangle + \eta$ is believed to be endogenous (i.e. correlated with $X$), a common approach \cite{sargan1958estimation, angrist1996identification, card2001estimating} is to find a $p$-dimensional variable $Z$ (the \emph{instrument}) for which domain knowledge suggests that $\EE[\eta|Z] = 0$. Positing that $\beta^*$ is identified by the moment condition $\EE[Z(y - \langle X,\beta\rangle)] = 0$, the IV estimator set given samples $(X_i,y_i,Z_i)_{i=1}^n$ is then \[\textsf{IV}(X,y,Z) = \{\beta \in \RR^d: Z^T (w \star (X\beta - y) = 0\}\]

where $a \star b$ denotes elementwise product, and $Z: n \times p$ and $X: n \times d$ are the matrices of instruments and covariates respectively. Stability can be defined as in Definition~\ref{def:stability}. Although for simplicity we state all of our results for OLS (i.e. the special case $Z = X$), it can be seen that Theorem~\ref{theorem:exact-algorithm-intro} and Theorem~\ref{theorem:net-algorithm-intro} both extend directly to the IV regression setting. See Appendix~\ref{app:iv-extension} for further discussion.

\paragraph{Experiments.} We implement modifications of \netalg{} and \lpalg{} which give \emph{unconditional}, \emph{exact} upper and lower bounds on stability, respectively. We use these algorithms to obtain tight data-dependent bounds on stability of isotropic Gaussian datasets for a broad range of signal-to-noise ratios, and we demonstrate heterogeneous synthetic datasets where our algorithms' upper bounds are an order of magnitude better than upper bounds obtained by the prior heuristic. On the Boston Housing dataset \cite{harrison1978hedonic}, we regress house values against all pairs of features. For the majority of these regressions, we bound the stability within a factor of two. On the one hand, we detect many sensitive conclusions (including some which the greedy heuristic claims are stable); on the other hand, we certify that some conclusions are stable to dropping as much as half the dataset.

\paragraph{Broader Context.} Taking a step back, OLS is important not only in its own right but also as a key building block in more complex machine learning systems, ranging from regression trees \cite{loh2011classification} to generative adversarial networks \cite{mao2017least} and policy iteration in linear MDPs \cite{lagoudakis2003least}. Our work on estimating stability of OLS is also a first step towards estimating stability for these systems.

\subsection{Organization}

In Section~\ref{section:related} we review related work. In Section~\ref{section:preliminaries} we collect notation and formulas that will be useful later. In Section~\ref{section:overview} we sketch the intuition behind our algorithmic results. Section~\ref{section:experiments} covers our experiments. In Appendices~\ref{section:exact}, \ref{section:lower-bound}, \ref{section:lp-algorithm}, and \ref{section:net} we prove Theorems~\ref{theorem:exact-algorithm-intro}, \ref{theorem:lb-intro}, \ref{theorem:lp-algorithm-intro}, and \ref{theorem:net-algorithm-intro} respectively.

\section{Related work}\label{section:related}


\paragraph{Local and global sensitivity metrics.} Post-hoc evaluation of the sensitivity of a statistical inference to various types of model misspecification has long been recognized as an important research direction. Within this area, there is a distinction between \emph{local} sensitivity metrics, which measure the sensitivity of the inference to infinitesimal misspecifications of the assumed model $M_0$ (e.g. \cite{polasek1984regression, castillo2004general, belsley2005regression}), and \emph{global} sensitivity metrics, which measure the set of possible inferences as the model ranges in some fixed set $\mathcal{M}$ around $M_0$ (e.g. \cite{leamer1984global, tanaka1989possibilistic, vcerny2013possibilistic}). For OLS in particular, there is a well-established literature on the influences of individual data points \cite{cook1977detection, chatterjee1986influential}, which falls under local sensitivity analysis, since deleting a single data point is an infinitesimal perturbation to a dataset of size $n$ as $n \to \infty$. In contrast, identifying jointly influential subsets of the data (the ``global'' analogue) has been a long-standing challenge due to computational issues (see e.g. page 274 of \cite{belsley2005regression}). Existing approaches typically focus on identifying outliers in a generic sense rather than with respect to a specific inference \cite{hadi1993procedures}, or study computationally tractable variations of deletion (e.g. constant-factor reweighting \cite{leamer1984global}).

\paragraph{Finite-sample stability.} Finite-sample stability is a \emph{global} sensitivity metric; it essentially seeks to identify jointly influential subsets. Most directly related to our work is the prior work on heuristics for the finite-sample stability \cite{broderick2020automatic, kuschnig2021hidden}. The heuristic given by \cite{broderick2020automatic} (to approximate the most-influential $k$ samples) is simply the \emph{local} approximation: compute the local influence of each sample at $w = \mathbbm{1}$, sort the samples from largest to smallest influence, and output the top $k$ samples. Subsequent work \cite{kuschnig2021hidden} proposed refining this heuristic by recomputing the influences after removing each sample, which alleviates issues such as masking \cite{chatterjee1986influential}. But this is still just a greedy heuristic, and it may fail when samples are jointly influential but not individually influential. Except under the strong assumption that the sample covariance remains nearly constant when we remove any $\epsilon n$ samples (see Theorem~1 in \cite{broderick2020automatic}, which relies on Condition~1 in \cite{giordano2019swiss}), the local influence approach can upper bound the finite-sample stability but cannot provably lower bound it. In fact, in Section~\ref{section:experiments} we provide examples where the greedy heuristic of \cite{kuschnig2021hidden} is very inaccurate due to instability in the sample covariance.

\paragraph{The $s$-value.} Closely related to finite-sample stability, the $s$-value \cite{gupta2021r} is the minimum Kullback-Leibler divergence $D(P||P_0)$ over all distributions $P$ for which the conclusion is null, where $P_0$ is the empirical distribution of the samples. Unfortunately, while the $s$-value is an interesting and well-motivated metric, computing the $s$-value for OLS estimation appears to be computationally intractable, and the algorithms given by \cite{gupta2021r} lack provable guarantees.

\paragraph{Robustified estimators.} Ever since the work of Tukey and Huber, one of the central areas of statistics has been robustifying statistical estimators to be resilient to outliers (see, e.g. \cite{huber2004robust}). While a valuable branch of research, we view robust statistics as incomparable if not orthogonal to post-hoc sensitivity evaluation, for three reasons. First, samples that \emph{drive} the conclusion (in the sense that deleting them would nullify the conclusion) are not synonymous with outliers: removing an outlier that works against the conclusion only makes the conclusion stronger. Indeed, outlier-trimmed datasets are not necessarily finite-sample robust \cite{broderick2020automatic}. Rather, finite-sample stability (along with the $s$-value \cite{gupta2021r}), in the regime where a constant fraction of samples is removed, may be thought of as a measure of resilience to heterogeneity and distribution shift.

Second, it is unreasonable to argue that using robustified estimators obviates the need for sensitivity evaluation. Robust statistics has seen a recent algorithmic revival, with the development of computationally efficient estimators, for problems such as linear regression, that are robust in the strong contamination model (e.g. \cite{klivans2018efficient,diakonikolas2019sever, bakshi2021robust}). However, even positing that the strong contamination model is correct, estimation guarantees for these algorithms require strong, unverifiable (and unavoidable \cite{klivans2018efficient}) assumptions about the uncorrupted data, such as hypercontractivity. Sensitivity analyses should support modeling assumptions, not depend upon them.

Third and perhaps most salient, classical estimators such as OLS are ubiquitous in practice, despite the existence of robust estimators. This alone justifies sensitivity analysis of the resulting scientific conclusions.

\paragraph{Distributionally robust optimization.} Our motivations mirror a recent line of work in machine learning \cite{sinha2017certifying, duchi2018learning, cauchois2020robust, jeong2020robust} which suggests that the lack of resilience of Empirical Risk Minimization to distribution shift can be mitigated by minimizing the supremum of risks with respect to distributions near the empirical training distribution (under e.g. Wasserstein distance or an $f$-divergence). Again, this approach of robustifying the estimator is valuable but incomparable to sensitivity analysis.








\section{Preliminaries}\label{section:preliminaries}

For vectors $u,v \in \RR^m$, we let $u \star v$ denote the elementwise product $(u\star v)_i = u_i v_i$. Throughout the paper, we will frequently use the closed-form expression for the (weighted) OLS solution set \[\OLS(X,y,w) = \{\beta \in \RR^d: X^T (w \star (X\beta - y)) = 0\}\]
where $X: n \times d$ is the matrix with rows $X_1,\dots,X_n$.
In particular, setting $\lambda = \beta_{2:d}$, this means that the finite-sample stability can be rewritten as 

\begin{equation}
\sns(X,y) = \inf_{w \in [0,1]^n, \lambda \in \RR^{d-1}} \{n - \norm{w}_1: X^T (w \star (\tilde{X} \lambda - y)) = 0\}
\label{eq:sns-prelim-formulation}
\end{equation}
where (here and throughout the paper) $\tilde{X}: n \times (d-1)$ is the matrix with columns $(X^T)_2,\dots,(X^T)_d$.

\section{Overview of Algorithms}\label{section:overview}

\paragraph{An exact algorithm.} Our main tool for Theorem~\ref{theorem:exact-algorithm-intro} is the following special case of an important result due to \cite{renegar1992computational} on solving quantified polynomial systems of inequalities:

\begin{theorem}[\cite{renegar1992computational}]\label{theorem:renegar}
Given an expression \[\forall x \in \RR^{n_1}: \exists y \in \RR^{n_2}: P(x,y),\] where $P(x,y)$ is a system of $m$ polynomial inequalities with maximum degree $d$, the truth value of the expression can be decided in time $(md)^{O(n_1n_2)}$.\footnote{This is in the real number model; a similar statement can be made in the bit complexity model.}
\end{theorem}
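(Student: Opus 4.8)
The plan is to reduce the decision of the one-alternation sentence to a finite sequence of computations over explicitly described real-algebraic sample points, processing the two quantifier blocks from the inside out. First I would treat the inner statement $\exists y \in \RR^{n_2}: P(x,y)$ as a \emph{parametric} feasibility problem in the variables $y$, with the outer variables $x$ playing the role of symbolic parameters. The goal of this stage is to produce a quantifier-free formula $Q(x)$ — a Boolean combination of sign conditions on polynomials in $x$ alone — that holds exactly when the inner existential block is satisfiable. Once $Q(x)$ is in hand, the outer universal quantifier becomes the assertion that $Q(x)$ holds for all $x$, equivalently that the semialgebraic set $\{x : \lnot Q(x)\}$ is empty, which is again a feasibility test that I would settle with the same machinery.

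The core technical engine is the \emph{critical-point method} for producing, for a semialgebraic set defined by $m$ polynomials of degree $\le d$, a finite set of sample points meeting every connected component. The standard route is to reduce to the compact smooth case: intersect with a ball of radius $1/\epsilon$ (an infinitesimal $\epsilon$, working in the real closed field extension $\RR\langle\epsilon\rangle$), replace the closed conditions by a generic infinitesimal perturbation that puts the boundary in general position, and pass to the limit $\epsilon \to 0$ at the end. On a bounded smooth variety a generic linear projection attains its extrema, and its critical points are the common zeros of the defining equations together with the vanishing of the appropriate Jacobian minors; this is a zero-dimensional system whose solutions I would enumerate through a univariate \emph{rational univariate representation}. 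Bézout-type bounds then control the number of sample points by $d^{O(n_2)}$ and the degree of their univariate encodings by $(md)^{O(n_2)}$.

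The key to obtaining the advertised exponent $O(n_1 n_2)$, rather than something like $O((n_1+n_2)^2)$, is to carry out this construction parametrically in $x$. With $x$ symbolic, the defining equations and Jacobian minors have entries polynomial in $x$, so the sample points become algebraic functions of $x$, and the discriminant-type conditions governing when a putative critical point is real and feasible yield polynomials purely in $x$. Assembling these produces $Q(x)$ together with the quantitative control that there are $(md)^{O(n_2)}$ output polynomials, each of degree $(md)^{O(n_2)}$, computed in time $(md)^{O(n_2)}$. A second, now non-parametric, sample-point computation on these $\le (md)^{O(n_2)}$ polynomials in the $n_1$ variables $x$ costs $\big((md)^{O(n_2)}\big)^{O(n_1)} = (md)^{O(n_1 n_2)}$, and checking that no sample point satisfies $\lnot Q(x)$ decides the whole sentence. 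The multiplicativity of the two block contributions is precisely what yields $(md)^{O(n_1 n_2)}$.

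I expect the hardest and most delicate part to be the uniform reduction of the general feasibility problem — which may be non-smooth, unbounded, and mixing strict with non-strict inequalities — to the clean compact smooth setting while preserving the degree and cardinality bounds, in particular justifying the infinitesimal deformations and the limiting arguments rigorously inside a real closed extension, and ensuring that the \emph{parametric} form of this reduction does not inflate the $x$-degree beyond $(md)^{O(n_2)}$. A secondary difficulty is the careful bookkeeping that guarantees the two quantifier blocks contribute multiplicatively rather than additively to the exponent, and (in the bit-complexity model) the control of the bit sizes of all intermediate univariate representations.
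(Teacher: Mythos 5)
This theorem is imported verbatim from \cite{renegar1992computational}; the paper offers no proof of it and uses it purely as a black box, so there is no internal argument to compare your sketch against, and I can only assess it against the published proofs. On that score your outline is essentially correct and follows the standard route: eliminate the inner block parametrically, producing a quantifier-free formula $Q(x)$ built from $(md)^{O(n_2)}$ polynomials in $x$, each of degree $(md)^{O(n_2)}$; then decide the outer universal block by computing sample points meeting every realizable sign condition of this family in $\RR^{n_1}$, at cost $\bigl((md)^{O(n_2)}\bigr)^{O(n_1)} = (md)^{O(n_1 n_2)}$ --- and your observation that, since $\lnot Q$ is a Boolean combination of sign conditions, it suffices for the sample points to meet every realizable sign vector, is exactly the right justification for the final check. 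The multiplicative (rather than additive) exponent indeed arises precisely from this composition of block eliminations, which is the point that separates Renegar's bound from the doubly exponential cost of cylindrical algebraic decomposition. One historical nuance: the specific engine you describe --- infinitesimal deformations in $\RR\langle\epsilon\rangle$, reduction to the compact smooth case, critical points of a generic projection, and rational univariate representations --- is really the Basu--Pollack--Roy treatment of this result; Renegar's own proof achieves the parametric sample-point construction through multivariate $u$-resultant machinery. Either engine yields the stated bound, and your identification of the delicate points (the uniform, degree-controlled parametric reduction to the compact smooth setting, and bit-size bookkeeping) matches where the technical weight lies in the literature, so as a blind reconstruction of the cited result your proposal is sound, though of course it is a proof sketch whose hard steps are exactly the ones you flag rather than a complete argument.
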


Roughly, for a constant number of quantifier alternations, a quantified polynomial system can be decided in time exponential in the number of variables. Unfortunately, a naive formulation of the expression $\sns(X,y) \leq k$, by direct evaluation of Equation~\ref{eq:sns-prelim-formulation}, has $n+d-1$ variables: \[\exists \lambda \in \RR^{d-1}, w \in [0,1]^n: \sum_{i=1}^n w_i \geq n-k \land X^T(w\star (\tilde{X}\lambda-y)) = 0.\]
Intuitively, it may not be necessary to search over all $w \in [0,1]^n$; for fixed $\lambda$, the maximum-weight $w$ is described by a simple linear program. Formally, the linear program can be rewritten (Lemma~\ref{lemma:qp-reformulation}) by the separating hyperplane theorem, so that the overall expression becomes: \begin{equation}
\exists \lambda \in \RR^{d-1}: \forall u \in \RR^d: \exists w \in [0,1]^n: \norm{w}_1 \geq n-k \land \sum_{i = 1}^n w_i (\langle \tilde{X}_i,\lambda\rangle - y_i)\langle X_i,u\rangle \geq 0.
\label{eq:sns-dual-formulation}
\end{equation}
Now, for fixed $\lambda$ and $u$, the maximum-weight $w$ has very simple description: it only depends on the \emph{relative ordering} of the $n$ summands $(\langle \tilde{X},\lambda\rangle - y_i)\langle X_i,u\rangle$. By classical results on connected components of varieties, since the summands have only $2d-1$ variables, the number of achievable orderings is only $n^{\Omega(d)}$ rather than $n!$, and the orderings can be enumerated efficiently \cite{milnor1964betti, renegar1992computational}. This allows the quantifier over $w \in [0,1]^n$ to be replaced by a quantifier over the $n^{\Omega(d)}$ achievable orderings, after which Theorem~\ref{theorem:renegar} implies that the overall expression can be decided in time $n^{\Omega(d^3)}$. See Appendix~\ref{section:exact} for details.

\paragraph{Approximation via partitioning.} Next, we show how to avoid the heavy algorithmic machinery used in the previous result. For Theorem~\ref{theorem:lp-algorithm-intro}, the strategy is to partition the OLS solution space $\RR^{d-1}$ into roughly $n^d$ regions, such that if we restrict $\lambda$ to any one region, the bilinear program which defines the stability can be approximated by a linear program.

Concretely, we start by writing the formulation (\ref{eq:sns-prelim-formulation}) as \begin{equation} n - \sns(x,y) = \sup_{w \in [0,1]^n, \lambda \in \RR^{d-1}} \left\{\sum_{i\in[n]} w_i \middle| X^T (w\star (\tilde{X} \lambda - y)) = 0\right\}.
\label{eq:sns-sup-formulation}
\end{equation}
This has a nonlinear (and nonconvex) constraint due to the pointwise product between $w$ and the residual vector $\tilde{X}\lambda - y$. Thus, we can introduce the change of variables $g_i = w_i(\langle \tilde{X}_i,\lambda\rangle - y_i)$ for $i \in [n]$. This causes two issues. First, the constraint $0 \leq w_i \leq 1$ becomes $0 \leq g_i/(\langle X_i,\lambda\rangle - y_i) \leq 1$, which is no longer linear. To fix this, suppose that instead of maximizing over all $\lambda \in \RR^{d-1}$, we maximize over a region $R \subseteq \RR^{d-1}$ where each residual $\langle \tilde{X},\lambda\rangle - y_i$ has constant sign $\sigma_i$. The constraint $0 \leq w_i \leq 1$ then becomes one of two linear constraints, depending on $\sigma_i$. Let $V_R$ denote the value of Program~\ref{eq:sns-sup-formulation} restricted to $\lambda \in R$. Then with the change of variables, we have that \[V_R = \sup_{g \in \RR^n, \lambda \in R} \left\{\sum_{i \in [n]} \frac{g_i}{\langle X_i,\lambda\rangle - y_i} \middle | \begin{aligned} 
&X^Tg = 0 &\\ 
&0 \leq g_i \leq \langle X_i,\lambda\rangle - y_i & \forall i \in [n]: \sigma_i = 1 \\
&\langle X_i,\lambda\rangle - y_i \leq g_i \leq 0 & \forall i \in [n]: \sigma_i = -1
\end{aligned}
\right\},\]
with the convention that $0/0 = 1$. Now the constraints are linear. Unfortunately, (and this is the second issue), the objective is no longer linear. The solution is to partition the region $R$ further: if the region were small enough that every residual $\langle X_i,\lambda\rangle - y_i$ had at most $(1\pm \epsilon)$-multiplicative variation, then the objective could be approximated to within $1\pm\epsilon$ by a linear objective. 

How many regions do we need? Let $M = \norm{X\beta^{(0)}-y}_2$ be the unweighted OLS error. If all the residuals were bounded between $\delta M/\sqrt{n}$ and $M$ in magnitude, for all $\lambda \in \RR^{d-1}$, then the regions could be demarcated by $O(n\log_{1+\epsilon}(n/\delta))$ hyperplanes, for a total of $O(n\log_{1+\epsilon}(n/\delta))^d$ regions. Of course, for some $\lambda$, some residuals may be very small or very large. But $(\epsilon,\delta)$-anti-concentration implies that for every $\lambda$, at most $\epsilon n$ residuals are very small, and it can be shown that if $\lambda$ is a weighted OLS solution, the total weight on samples with large residuals is low. Thus, for any region, we can exclude from the objective function the samples with residuals that are not well-approximated within the region, and this only affects the objective by $O(\epsilon n)$. 

This gives an algorithm with time complexity $(n\epsilon^{-1}\log(1/\delta))^{d+O(1)}$. To achieve the time complexity in Theorem~\ref{theorem:lp-algorithm-intro}, where the $\log(n/\delta)$ is additive rather than multiplicative, we use subsampling. Every residual is still partitioned by sign, but we multiplicatively partition only a random $\tilde{O}(d/\epsilon)$-size subset of the residuals. Intuitively, most residuals will still be well-approximated in any given region. This can roughly be formalized via a VC dimension argument, albeit with some technical complications. See Section~\ref{section:lp-algorithm} for details and Appendix~\ref{app:pseudocode} for formal pseudocode of the algorithm \lpalg{}.

\paragraph{Net-based approximation.} The algorithm for Theorem~\ref{theorem:net-algorithm-intro} is intuitively the simplest. For any fixed $\lambda \in \RR^{d-1}$, Program~\ref{eq:sns-prelim-formulation} reduces to a linear program with value denoted $S(\lambda)$. Thus, an obvious approach is to construct a net $\mathcal{N} \subseteq \RR^{d-1}$ in some appropriate metric, and compute $\min_{\lambda \in \mathcal{N}} S(\lambda)$. This always upper bounds the stability, but to prove that it's an approximate lower bound, we need $S(\lambda)$ to be Lipschitz under the metric.

The right metric turns out to be \[d(\lambda,\lambda') = \norm{\frac{\tilde{X}\lambda - y}{\norm{\tilde{X}\lambda-y}_2} - \frac{\tilde{X}\lambda' - y}{\norm{\tilde{X}\lambda'-y}_2}}_2.\]
Under this metric, $\RR^{d-1}$ essentially embeds into a $d$-dimensional subspace of the Euclidean sphere $\mathcal{S}^{n-1}$, and therefore has a $\gamma$-net of size $O(1/\gamma)^d$. Why is $S(\lambda)$ Lipschitz under $d$? First, if $\tilde{X}\lambda-y$ equals $\tilde{X}\lambda' - y$ up to rescaling, then it can be seen from Program~\ref{eq:sns-sup-formulation} that $S(\lambda) = S(\lambda')$. More generally, if the residuals are close up to rescaling, we apply the dual formulation of $S(\lambda)$ from expression (\ref{eq:sns-dual-formulation}): \[n-S(\lambda) = \inf_{u \in \RR^d} \sup_{w \in [0,1]^n} \norm{w}_1: \sum_{i=1}^n w_i(\langle \tilde{X}_i,\lambda\rangle - y_i)\langle X_i,u\rangle \geq 0.\]
For any $u$, the optimal $w$ for $\lambda$ and $u$ can be rounded to some feasible $w'$ for $\lambda'$ and $u$ without decreasing the $\ell_1$ norm too much, under strong anti-concentration. This shows that $S(\lambda)$ and $S(\lambda')$ are close. See Appendix~\ref{section:net} for details and Appendix~\ref{app:pseudocode} for formal pseudocode of the algorithm \netalg{}.

\section{Experiments}\label{section:experiments}

In this section, we apply (modifications of) \netalg{} and \lpalg{} to various datasets. The modifications are made for practical reasons, and while our theory will no longer apply, we emphasize that the modified \netalg{} will still provide an unconditional \emph{upper bound} on stability (referred to henceforth as ``net upper bound''), and the modified \lpalg{} now provides an unconditional \emph{lower bound} (``LP lower bound'').

As a baseline upper bound, we implement the greedy heuristic of \cite{kuschnig2021hidden} which refines \cite{broderick2020automatic}. We are not aware of any prior work on lower bounding stability, so we implement a simplification of our full lower bound algorithm as a baseline. See Appendix~\ref{app:further-experimental} for implementation details and hyperparameter choices of our algorithms and baselines. All error bars are at $25$th and $75$th percentiles over independent trials.


\subsection{Synthetic data}

\paragraph{Heterogeneous data.} We start with a simple two-dimensional dataset with two disparate subpopulations, where the greedy baseline fails to estimate the stability but our algorithms give tight estimates. For parameters $n$, $k$, and $\sigma$, we generate $k$ independent samples $(X_i,y_i)$, where $X_i \in \RR^2$ has independent coordinates $X_{i1} \sim N(-1, 0.01)$ and $X_{i2} \sim N(0,1)$, and $y_i = X_{i1}$. Then, we generate $n-k$ independent samples $(X_i,y_i)$ where $X_{i1} = 0$ and $X_{i2} \sim N(0,1)$, and $y \sim N(0,1)$. It always suffices to remove the first subpopulation, so the stability is at most $k$. However, the first subpopulation has small individual influences, because the OLS regressor on the whole dataset can nearly interpolate the first subpopulation. Thus, we expect that the greedy algorithm will fail to notice the first subpopulation, and therefore remove far more than $k$ samples.

Indeed, this is what happens. For $n = 1000$ and $k$ varying from $10$ to $500$, we compare our net upper bound and LP lower bound with the baselines. As seen in Figure~\ref{fig:mixture-experiment}, our methods are always better than the baselines, and certifiably approximate the stability within a small constant factor. In the regime where $k$ is small, our upper bound outperforms the greedy upper bound by a factor of $30$.

\begin{figure}[t]
\begin{minipage}[b]{0.35\textwidth}

\begin{figure}[H]
    \centering
    \includegraphics[width=0.99\textwidth]{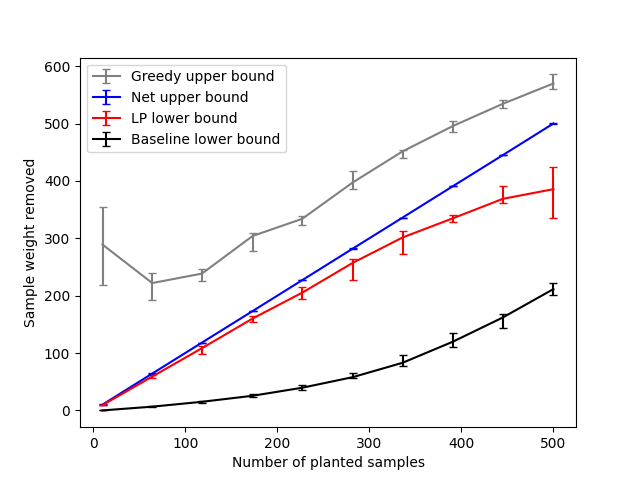}
    \caption{Heterogeneous data}
    \label{fig:mixture-experiment}
\end{figure}

\end{minipage}
\begin{minipage}[b]{0.65\textwidth}

\begin{figure}[H]
    \centering
    \begin{subfigure}[t]{0.48\textwidth}
        \includegraphics[width=\textwidth]{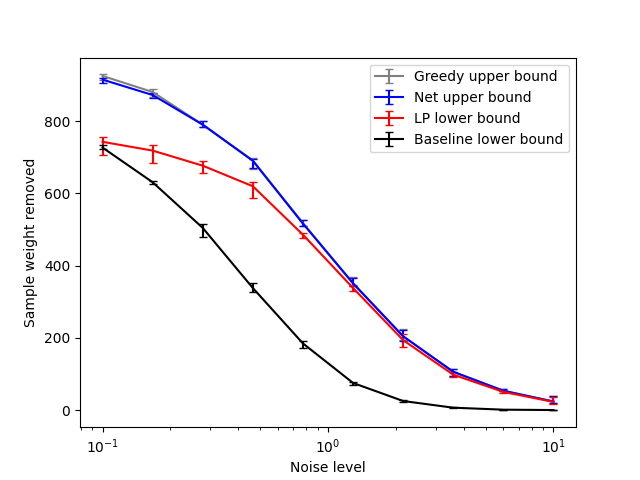}
        \caption{$d = 2$ and $n = 1000$; median of $10$ trials for each noise level and algorithm}
        \label{fig:iso-2d}
    \end{subfigure}
    \begin{subfigure}[t]{0.48\textwidth}
        \includegraphics[width=\textwidth]{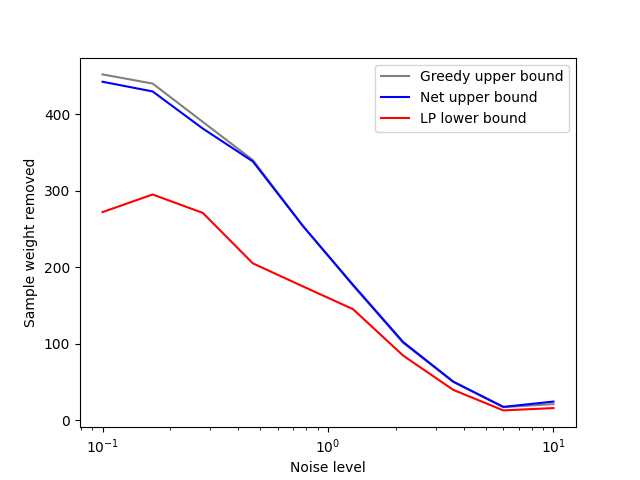}
        \caption{$d = 3$ and $n = 500$; one trial for each noise level and algorithm}
        \label{fig:iso-3d}
    \end{subfigure}
    
    \caption{Isotropic Gaussian data}
    
\end{figure}

\end{minipage}
\end{figure}

\paragraph{Covariance shift.} In the previous example, removing $k$ samples caused a pathological change in the sample covariance; it became singular. However, even modest, constant-factor instability in the sample covariance can cause the greedy algorithm to fail; see Appendix~\ref{app:covariance-shift-experiment} for details.

\paragraph{Isotropic Gaussian data.} Instability can arise even in homogeneous data, as a result of a low signal-to-noise ratio \cite{broderick2020automatic}. But when the noise level is low, can we certify stability? For a broad range of noise levels, we experimentally show that this is the case. Specifically, for $d \in \{2,3\}$ and noise parameter $\sigma$ ranging from $0.1$ to $10$, we generate $n$ independent samples $(X_i,y_i)_{i=1}^n$ where $X_i \sim N(0,I_d)$ and $y_i = \langle X_i,\mathbbm{1}\rangle + N(0,\sigma^2)$. For $d = 2$ and $n = 1000$ (Figure~\ref{fig:iso-2d}), our LP lower bound is nearly tight with the upper bounds, particularly as the noise level increases (in comparison, the baseline lower bound quickly degenerates towards zero). For $d=3$ and $n = 500$ (Figure~\ref{fig:iso-3d}), the bounds are looser for small noise levels but still always within a small constant factor.



\subsection{Boston Housing dataset}

The Boston Housing dataset \cite{harrison1978hedonic,gilley1996harrison} consists of data from $506$ census tracts of Greater Boston in the 1970 Census. There are $14$ real-valued features, one of which---the median house value in USD 1000s---we designate as the response. Unfortunately the entire set of features is too large for our algorithms, so for our experiments we pick various subsets of two or three features to use as covariates.

\begin{figure}
    \centering
    \begin{subfigure}[b]{0.3\textwidth}
    \includegraphics[width=\textwidth]{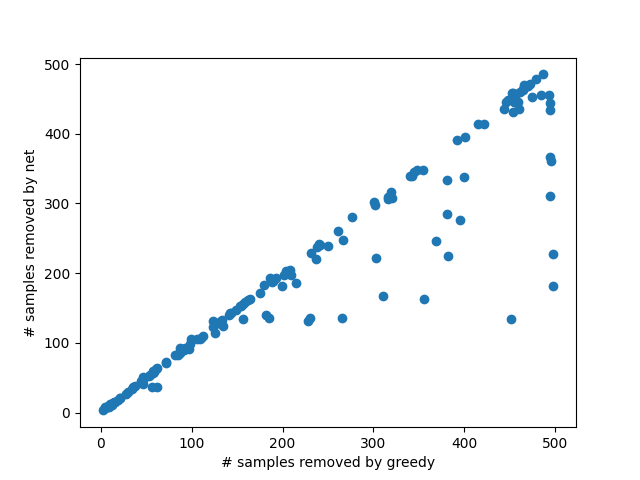}
    \caption{Net upper bound ($y$) vs greedy upper bound ($x$)}
    \label{fig:netvsgreedy}
    \end{subfigure}
    \begin{subfigure}[b]{0.3\textwidth}
    \includegraphics[width=\textwidth]{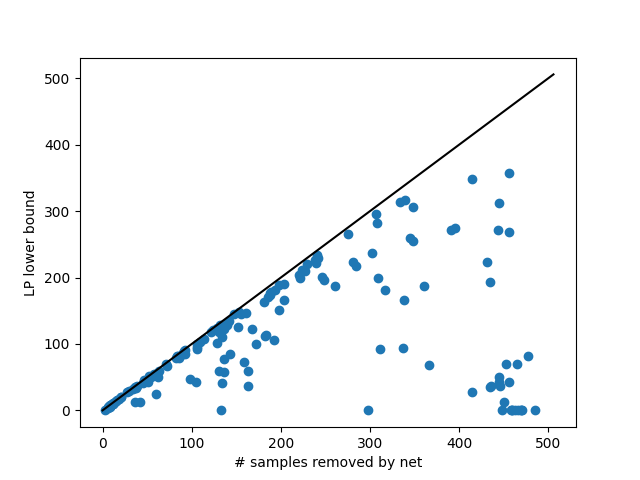}
    \caption{LP lower bound ($y$) vs net upper bound ($x$)}
    \end{subfigure}
    \begin{subfigure}[b]{0.3\textwidth}
    \includegraphics[width=\textwidth]{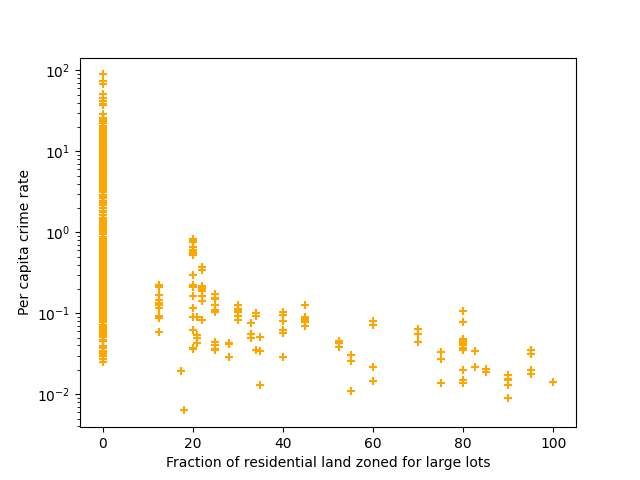}
    \caption{\textsf{crim} ($y$) vs \textsf{zn} ($x$)}
    \label{fig:zn-vs-crim}
    \end{subfigure}    
    \caption{Results from Boston Housing dataset. Figure (a) plots the net upper bounds on the $y$-axis against the greedy upper bounds on the $x$-axis; Figure (b) plots the LP lower bounds on the $y$-axis against the net upper bounds on the $x$-axis. In both (a) and (b), each mark corresponds to one of the $156$ feature pairs. Figure (c) plots the feature \textsf{zn} against the feature \textsf{crim} (on log scale); each mark is one of the $506$ datapoints.}
    \label{fig:boston-housing-all-pairs}
\end{figure}

\paragraph{A Tale of Two Datasets.} We exemplify our results with two particular feature subsets. First, we investigate the effect of \textsf{zn} (percentage of residential land zoned for large lots) on house values, controlling for \textsf{rm} (average number of rooms per home) and \textsf{rad} (highway accessibility index) but no bias term. On the entire dataset, we find a modest positive effect: the estimated coefficient of \textsf{zn} is roughly $0.06$. Both the greedy heuristic and our net algorithm find subsets of just $8\%$ of the data ($38$-$40$ samples) which, if removed, would nullify the effect. But is this tight, or could there be a much smaller subset with the same effect? Our LP lower bound \textbf{certifies that removing at least $22.4$ samples is necessary}.

Second, we investigate the effect of \textsf{zn} on house values, this time controlling only for \textsf{crim} (per capita crime rate). Our net algorithm finds a subset of just $27\%$ of the data which was driving the effect, and the LP lower bound certifies that the stability is at least $8\%$. But this time, the greedy algorithm removes $90\%$ of the samples, a clear failure. What happened? Plotting \textsf{zn} against \textsf{crim} reveals a striking heterogeneity in the data: $73\%$ of the samples have $\textsf{zn} = 0$, and the remaining $27\%$ of the samples (precisely those removed by the net algorithm) have $\textsf{crim} < 0.83$, i.e. very low crime rates. As in the synthetic example, this heterogeneity explains the greedy algorithm's failure. But heterogeneity is very common in real data: in this case, it's between the city proper and the suburbs, and in fact the OLS regressors of these two subpopulations on all $13$ features are markedly different (Appendix~\ref{appendix:omitted-figures}). Thus, it's important to have algorithms with provable guarantees for detecting when heterogeneity causes (or doesn't cause) unstable conclusions.


\paragraph{All-feature-pairs analysis.} To be thorough, we also apply our algorithms to all $156$ ordered pairs of features. For each pair, we regress the response (i.e. median house value) against the two features by Ordinary Least Squares, and we use our algorithms on this $2$-dimensional dataset to estimate how many samples need to be removed to nullify the effect of the first feature on the response. We also compare to the greedy upper bound. See Figure~\ref{fig:boston-housing-all-pairs} for a perspective on the results. In each figure, each point corresponds to the results of one dataset. The left figure plots the net upper bound against the greedy upper bound: we can see that our net algorithm substantially outperforms the greedy heuristic on some datasets (i.e. finds a much smaller upper bound) and never performs much worse. The right figure plots the LP lower bound against the net upper bound (along with the line $y=x$). For a majority of the datasets, the upper bound and lower bound are close. Concretely, for $116$ of the $156$ datasets, \textbf{we certifiably estimate the stability up to a factor of two} \--- some are sensitive to removing less than $10$ samples, and some are stable to removing even a majority of the samples.

\paragraph{Acknowledgments.} We thank Allen Liu, Morris Yau, Fred Koehler, Jon Kelner, Sam Hopkins, and Vasilis Syrgkanis for valuable conversations on this and related topics.

\bibliographystyle{amsalpha}
\bibliography{bib.bib}

\newcommand{\etalchar}[1]{$^{#1}$}
\providecommand{\bysame}{\leavevmode\hbox to3em{\hrulefill}\thinspace}
\providecommand{\MR}{\relax\ifhmode\unskip\space\fi MR }
\providecommand{\MRhref}[2]{%
  \href{http://www.ams.org/mathscinet-getitem?mr=#1}{#2}
}
\providecommand{\href}[2]{#2}
\begin{thebibliography}{CHCFC04}

\bibitem[AIR96]{angrist1996identification}
Joshua~D Angrist, Guido~W Imbens, and Donald~B Rubin, \emph{Identification of
  causal effects using instrumental variables}, Journal of the American
  statistical Association \textbf{91} (1996), no.~434, 444--455.

\bibitem[Bak16]{baker20161}
Monya Baker, \emph{1,500 scientists lift the lid on reproducibility}, Nature
  \textbf{533} (2016), no.~7604.

\bibitem[BGM20]{broderick2020automatic}
Tamara Broderick, Ryan Giordano, and Rachael Meager, \emph{An automatic
  finite-sample robustness metric: Can dropping a little data change
  conclusions}, arXiv preprint arXiv:2011.14999 (2020), 16.

\bibitem[BKW80]{belsley2005regression}
David~A Belsley, Edwin Kuh, and Roy~E Welsch, \emph{Regression diagnostics:
  Identifying influential data and sources of collinearity}, John Wiley \&
  Sons, 1980.

\bibitem[BP21]{bakshi2021robust}
Ainesh Bakshi and Adarsh Prasad, \emph{Robust linear regression: Optimal rates
  in polynomial time}, Proceedings of the 53rd Annual ACM SIGACT Symposium on
  Theory of Computing, 2021, pp.~102--115.

\bibitem[{\v{C}}AH13]{vcerny2013possibilistic}
Michal {\v{C}}ern{\`y}, Jarom{\'\i}r Antoch, and Milan Hlad{\'\i}k, \emph{On
  the possibilistic approach to linear regression models involving uncertain,
  indeterminate or interval data}, Information Sciences \textbf{244} (2013),
  26--47.

\bibitem[Car01]{card2001estimating}
David Card, \emph{Estimating the return to schooling: Progress on some
  persistent econometric problems}, Econometrica \textbf{69} (2001), no.~5,
  1127--1160.

\bibitem[CGAD20]{cauchois2020robust}
Maxime Cauchois, Suyash Gupta, Alnur Ali, and John~C Duchi, \emph{Robust
  validation: Confident predictions even when distributions shift}, arXiv
  preprint arXiv:2008.04267 (2020).

\bibitem[CH86]{chatterjee1986influential}
Samprit Chatterjee and Ali~S Hadi, \emph{Influential observations, high
  leverage points, and outliers in linear regression}, Statistical science
  (1986), 379--393.

\bibitem[CHCFC04]{castillo2004general}
Enrique Castillo, Ali~S Hadi, Antonio Conejo, and Alfonso
  Fern{\'a}ndez-Canteli, \emph{A general method for local sensitivity analysis
  with application to regression models and other optimization problems},
  Technometrics \textbf{46} (2004), no.~4, 430--444.

\bibitem[Coo77]{cook1977detection}
R~Dennis Cook, \emph{Detection of influential observation in linear
  regression}, Technometrics \textbf{19} (1977), no.~1, 15--18.

\bibitem[DKK{\etalchar{+}}19]{diakonikolas2019sever}
Ilias Diakonikolas, Gautam Kamath, Daniel Kane, Jerry Li, Jacob Steinhardt, and
  Alistair Stewart, \emph{Sever: A robust meta-algorithm for stochastic
  optimization}, International Conference on Machine Learning, PMLR, 2019,
  pp.~1596--1606.

\bibitem[DN18]{duchi2018learning}
John Duchi and Hongseok Namkoong, \emph{Learning models with uniform
  performance via distributionally robust optimization}, arXiv preprint
  arXiv:1810.08750 (2018).

\bibitem[GKR09]{giannopoulos2009parameterized}
Panos Giannopoulos, Christian Knauer, and G{\"u}nter Rote, \emph{The
  parameterized complexity of some geometric problems in unbounded dimension},
  International Workshop on Parameterized and Exact Computation, Springer,
  2009, pp.~198--209.

\bibitem[GP{\etalchar{+}}96]{gilley1996harrison}
Otis~W Gilley, R~Kelley Pace, et~al., \emph{On the harrison and rubinfeld
  data}, Journal of Environmental Economics and Management \textbf{31} (1996),
  no.~3, 403--405.

\bibitem[GR21]{gupta2021r}
Suyash Gupta and Dominik Rothenh{\"a}usler, \emph{The $ r $-value: evaluating
  stability with respect to distributional shifts}, arXiv preprint
  arXiv:2105.03067 (2021).

\bibitem[GSL{\etalchar{+}}19]{giordano2019swiss}
Ryan Giordano, William Stephenson, Runjing Liu, Michael Jordan, and Tamara
  Broderick, \emph{A swiss army infinitesimal jackknife}, The 22nd
  International Conference on Artificial Intelligence and Statistics, PMLR,
  2019, pp.~1139--1147.

\bibitem[{Gur}22]{gurobi}
{Gurobi Optimization, LLC}, \emph{{Gurobi Optimizer Reference Manual}}, 2022.

\bibitem[Hal00]{hall2000correlation}
Mark~A Hall, \emph{Correlation-based feature selection of discrete and numeric
  class machine learning}.

\bibitem[HJR78]{harrison1978hedonic}
David Harrison~Jr and Daniel~L Rubinfeld, \emph{Hedonic housing prices and the
  demand for clean air}, Journal of environmental economics and management
  \textbf{5} (1978), no.~1, 81--102.

\bibitem[HS93]{hadi1993procedures}
Ali~S Hadi and Jeffrey~S Simonoff, \emph{Procedures for the identification of
  multiple outliers in linear models}, Journal of the American statistical
  association \textbf{88} (1993), no.~424, 1264--1272.

\bibitem[Hub04]{huber2004robust}
Peter~J Huber, \emph{Robust statistics}, vol. 523, John Wiley \& Sons, 2004.

\bibitem[HWD18]{heinze2018variable}
Georg Heinze, Christine Wallisch, and Daniela Dunkler, \emph{Variable
  selection--a review and recommendations for the practicing statistician},
  Biometrical journal \textbf{60} (2018), no.~3, 431--449.

\bibitem[Ioa05]{ioannidis2005most}
John~PA Ioannidis, \emph{Why most published research findings are false}, PLoS
  medicine \textbf{2} (2005), no.~8, e124.

\bibitem[JN20]{jeong2020robust}
Sookyo Jeong and Hongseok Namkoong, \emph{Robust causal inference under
  covariate shift via worst-case subpopulation treatment effects}, Conference
  on Learning Theory, PMLR, 2020, pp.~2079--2084.

\bibitem[KKM18]{klivans2018efficient}
Adam Klivans, Pravesh~K Kothari, and Raghu Meka, \emph{Efficient algorithms for
  outlier-robust regression}, Conference On Learning Theory, PMLR, 2018,
  pp.~1420--1430.

\bibitem[KNRW18]{kearns2018preventing}
Michael Kearns, Seth Neel, Aaron Roth, and Zhiwei~Steven Wu, \emph{Preventing
  fairness gerrymandering: Auditing and learning for subgroup fairness},
  International Conference on Machine Learning, PMLR, 2018, pp.~2564--2572.

\bibitem[KV94]{kearns1994introduction}
Michael~J Kearns and Umesh Vazirani, \emph{An introduction to computational
  learning theory}, MIT press, 1994.

\bibitem[KZC21]{kuschnig2021hidden}
Nikolas Kuschnig, Gregor Zens, and Jes Cuaresma, \emph{Hidden in plain sight:
  Influential sets in linear models}.

\bibitem[Lea84]{leamer1984global}
Edward~E Leamer, \emph{Global sensitivity results for generalized least squares
  estimates}, Journal of the American Statistical Association \textbf{79}
  (1984), no.~388, 867--870.

\bibitem[Loh11]{loh2011classification}
Wei-Yin Loh, \emph{Classification and regression trees}, Wiley
  interdisciplinary reviews: data mining and knowledge discovery \textbf{1}
  (2011), no.~1, 14--23.

\bibitem[LP03]{lagoudakis2003least}
Michail~G Lagoudakis and Ronald Parr, \emph{Least-squares policy iteration},
  The Journal of Machine Learning Research \textbf{4} (2003), 1107--1149.

\bibitem[Mil64]{milnor1964betti}
John Milnor, \emph{On the betti numbers of real varieties}, Proceedings of the
  American Mathematical Society \textbf{15} (1964), no.~2, 275--280.

\bibitem[ML92]{maddala1992introduction}
Gangadharrao~S Maddala and Kajal Lahiri, \emph{Introduction to econometrics},
  vol.~2, Macmillan New York, 1992.

\bibitem[MLX{\etalchar{+}}17]{mao2017least}
Xudong Mao, Qing Li, Haoran Xie, Raymond~YK Lau, Zhen Wang, and Stephen
  Paul~Smolley, \emph{Least squares generative adversarial networks},
  Proceedings of the IEEE international conference on computer vision, 2017,
  pp.~2794--2802.

\bibitem[Pol84]{polasek1984regression}
Wolfgang Polasek, \emph{Regression diagnostics for general linear regression
  models}, Journal of the American Statistical Association \textbf{79} (1984),
  no.~386, 336--340.

\bibitem[Ren92]{renegar1992computational}
James Renegar, \emph{On the computational complexity and geometry of the
  first-order theory of the reals. part i: Introduction. preliminaries. the
  geometry of semi-algebraic sets. the decision problem for the existential
  theory of the reals}, Journal of symbolic computation \textbf{13} (1992),
  no.~3, 255--299.

\bibitem[Sar58]{sargan1958estimation}
John~D Sargan, \emph{The estimation of economic relationships using
  instrumental variables}, Econometrica: Journal of the Econometric Society
  (1958), 393--415.

\bibitem[SNVD17]{sinha2017certifying}
Aman Sinha, Hongseok Namkoong, Riccardo Volpi, and John Duchi, \emph{Certifying
  some distributional robustness with principled adversarial training}, arXiv
  preprint arXiv:1710.10571 (2017).

\bibitem[TG07]{tropp2007signal}
Joel~A Tropp and Anna~C Gilbert, \emph{Signal recovery from random measurements
  via orthogonal matching pursuit}, IEEE Transactions on information theory
  \textbf{53} (2007), no.~12, 4655--4666.

\bibitem[THW89]{tanaka1989possibilistic}
Hideo Tanaka, Isao Hayashi, and Junzo Watada, \emph{Possibilistic linear
  regression analysis for fuzzy data}, European Journal of Operational Research
  \textbf{40} (1989), no.~3, 389--396.

\bibitem[Ver18]{vershynin2018high}
Roman Vershynin, \emph{High-dimensional probability: An introduction with
  applications in data science}, vol.~47, Cambridge university press, 2018.

\end{thebibliography}

\appendix

\section{Proof of Theorem~\ref{theorem:exact-algorithm-intro}}\label{section:exact}

In this section, we show how to exactly compute the stability of a $d$-dimensional dataset in time $n^{O(d^3)}$, proving Theorem~\ref{theorem:exact-algorithm-intro}. Our main tool is Theorem~\ref{theorem:renegar}, a special case of an important result due to \cite{renegar1992computational} on solving quantified polynomial systems of inequalities.

The expression $\sns(X,y) \leq k$ can indeed be written as a polynomial system of (degree-$2$) equations, with only an $\exists$ quantifier. Unfortunately, the number of variables in this naive formulation is $n+d-1$ ($n$ for the weights and $d-1$ for the regressor), which yields an algorithm exponential in $n$. Thus, to take advantage of the above theorem, we need to reformulate the expression with fewer variables. The following lemma rewrites the stability, via the separation theorem for convex sets, in a form where the variable reduction will become apparent.

\begin{lemma}\label{lemma:qp-reformulation}
For any $(X_i,y_i)_{i=1}^n$ and $k \geq 0$, it holds that $\sns(X,y) \leq k$ if and only if
\begin{equation} \exists \lambda \in \RR^{d-1}: \forall u \in \RR^d: \exists w \in [0,1]^n: \norm{w}_1 \geq n-k \land \sum_{i = 1}^n w_i (\langle \tilde{X}_i,\lambda\rangle - y_i)\langle X_i,u\rangle \geq 0,
\label{eq:qp-reformulation}
\end{equation}
where $\tilde{X}: n \times (d-1)$ is the matrix with columns $(X^T)_2,\dots,(X^T)_d$.
\end{lemma}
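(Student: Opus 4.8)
The plan is to peel the two outer quantifiers off separately: first rewrite $\sns(X,y) \le k$ as the existence of a feasible weight–regressor pair, and then convert the inner feasibility statement into the universally-quantified form via the separating hyperplane theorem.

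First I would use the closed-form weighted OLS solution set from Section~\ref{section:preliminaries} together with the substitution $\beta = (0,\lambda)$ (so that $\beta_1 = 0$ and $X\beta = \tilde{X}\lambda$) to record the equivalence
\[ \sns(X,y) \le k \iff \exists\, \lambda \in \RR^{d-1},\, w \in [0,1]^n : \ \norm{w}_1 \ge n-k \ \land\ X^T(w \star (\tilde{X}\lambda - y)) = 0. \]
The direction $\Leftarrow$ is immediate, since any such $(w,\lambda)$ is feasible in Equation~\ref{eq:sns-prelim-formulation} with objective $n - \norm{w}_1 \le k$. For $\Rightarrow$ one must check that the infimum defining $\sns$ is actually attained, so that the bound $\sns(X,y) \le k$ is certified by a genuine feasible point rather than an approaching limit; I would establish this by compactness of the cube $[0,1]^n$ together with a short argument controlling $\lambda \to \infty$: along any escaping direction $\hat\lambda$, positive-semidefiniteness of $\sum_i w_i \tilde{X}_i \tilde{X}_i^T$ forces the limiting weight to concentrate on the samples with $\langle \tilde{X}_i,\hat\lambda\rangle = 0$, whose residuals are constant along the ray, which reduces matters to a lower-dimensional instance and lets me close the argument by induction on $d$.

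With that reduction in hand, fix $\lambda$ and set $r = \tilde{X}\lambda - y$. I would introduce the compact convex slice $D = \{w \in [0,1]^n : \norm{w}_1 \ge n-k\}$ and the linear map $T(w) = X^T(w \star r) = \sum_{i} w_i r_i X_i$, so that $K := T(D) \subseteq \RR^d$ is compact and convex and the inner constraint of the reduction above is exactly $0 \in K$. The identity $\sum_i w_i r_i \langle X_i, u\rangle = \langle u, T(w)\rangle$ rewrites the inner clause of Equation~\ref{eq:qp-reformulation} as $\langle u, T(w)\rangle \ge 0$. Now the separating hyperplane theorem gives, for each fixed $\lambda$, the equivalence between $0 \in K$ and the inner $\forall u\,\exists w$ statement: if $0 \in K$, the witnessing $w$ has $\langle u, T(w)\rangle = 0 \ge 0$ simultaneously for every $u$; conversely, if $0 \notin K$, then since $K$ is closed and convex there is a $u^\ast$ with $\langle u^\ast, z\rangle < 0$ for all $z \in K$, so no $w \in D$ satisfies $\langle u^\ast, T(w)\rangle \ge 0$ and the inner statement fails at $u = u^\ast$. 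Reintroducing the outer $\exists\lambda$ then yields Equation~\ref{eq:qp-reformulation}.

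The separation step is the conceptual heart, but it is routine once the right convex body $K$ is identified and the inner constraint is recognized as the single membership condition $0 \in K$; the genuine subtlety I expect to spend effort on is the attainment of the infimum in the first reduction, which is exactly what is needed so that the non-strict inequality $\norm{w}_1 \ge n-k$ faithfully captures $\sns(X,y) \le k$ rather than a strict approximation. The degenerate ranges are harmless and I would dispatch them separately: $\sns(X,y) \le n$ always holds, and when $n - k \le 0$ the choice $w = 0$ trivially witnesses the right-hand side for every $\lambda$ and $u$.
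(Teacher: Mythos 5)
Your proof follows essentially the same route as the paper's: both reduce $\sns(X,y)\leq k$ to the existence of a feasible pair $(w,\lambda)$ via the normal equations, and both then apply the separating hyperplane theorem to a compact convex body to convert the inner feasibility condition into the $\forall u\,\exists w$ form. The only cosmetic difference in the separation step is that you separate $0$ from the image $K=X^T(D\star r)\subseteq\RR^d$, whereas the paper separates $0$ from $D+\ker(X^T)\subseteq\RR^n$ and then observes that the separating direction may be taken in $\vspan(X^T)$, i.e.\ of the form $Xu$; these are the same argument written on opposite sides of the linear map.

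The one substantive point where you diverge is the attainment of the infimum in Definition~\ref{def:stability}. The paper simply asserts the equivalence of $\sns(X,y)\leq k$ with the existence of a feasible $(w,\lambda)$ satisfying $\norm{w}_1\geq n-k$, which in the forward direction implicitly assumes the infimum is attained (the two statements can only differ in the boundary case $k=\sns(X,y)$ with non-attainment, but the lemma is claimed for all $k\geq 0$). You are right that this needs justification: $w$ lives in a compact cube but $\lambda$ does not, and a limiting weight vector $w^*$ can fail to admit any finite solution $\lambda$ of $(X^T\diag(w^*)\tilde{X})\lambda=X^T\diag(w^*)y$ even though each $w^{(j)}$ does. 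Your proposed fix (escaping directions $\hat\lambda$ force $w^*_i\langle\tilde{X}_i,\hat\lambda\rangle=0$ by positive semidefiniteness, reducing to a lower-dimensional instance) identifies the correct mechanism, but as written it is only a sketch; the induction on $d$ would need to be carried out, in particular verifying that the reduced instance really certifies feasibility of some pair with $\norm{w}_1\geq n-k$ in the original problem. So your plan is sound and in one respect more careful than the paper's own proof, but the attainment step is the part you would still have to write out in full.
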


\begin{proof}
From formulation~(\ref{eq:sns-prelim-formulation}) of the stability, we know that $\sns(X,y) \leq k$ if and only if \[\exists \lambda \in \RR^{d-1}: \exists w \in [0,1]^n: \norm{w}_1 \geq n-k \land X^T (w \star (\tilde{X}\lambda - y)) = 0.\]

Fix $\lambda \in \RR^{d-1}$. Define the set \[D(n-k) = \left\{(w \star (\tilde{X}\lambda - y)): w \in [0,1]^n \land \norm{w}_1 \geq n-k\right\}.\]
We are interested in the predicate $D(n-k) \cap \ker(X^T) \neq \emptyset$, or equivalently $0 \in D(n-k) + \ker(X^T)$. Observe that $D(n-k)$ is convex, since $w$ ranges over a convex set. Thus, by the separation theorem for a point and a convex set, $0 \in D(n-k) + \ker(X^T)$ if and only if for every $v \in \RR^n$, we have $\sup_{x \in D(n-k)+\ker(X^T)} \langle v,x\rangle \geq 0$. If $v$ is not orthogonal to $\ker(X^T)$, then the inner product can be made arbitrarily large. Thus, it suffices to restrict to $v \in \vspan(X^T)$, in which case the supremum is simply over $x \in D(n-k)$. That is, $0 \in D(n-k) \cap \ker(X^T)$ if and only if \[\forall u \in \RR^d: \exists w \in [0,1]^n: \norm{w}_1 \geq n-k \land \left\langle Xu, (w \star (\tilde{X}\lambda - y))\right\rangle \geq 0.\]
Quantifying over $\lambda$, we get the claimed expression.
\end{proof}

The expression in Lemma~\ref{lemma:qp-reformulation} still has $O(n)$ variables. However, we can now actually eliminate the variable $w$ at the cost of increasing the number of equations. This is because the optimal $w$ for fixed $\lambda$ and $u$ only depends on the relative order of the terms $(\langle \tilde{X}_i,\lambda\rangle - y_i)\langle X_i,u\rangle$. We make the following definition:

\begin{definition}
For any $\lambda \in \RR^{d-1}$ and $u \in \RR^d$, let $\pi(\lambda,u)$ be the unique permutation on $[n]$ such that for all $1 \leq i \leq n-1$, \[(\langle \tilde{X}_{\pi_i},\lambda\rangle-y_{\pi_i})\langle X_{\pi_i},u\rangle \geq (\langle \tilde{X}_{\pi_{i+1}},\lambda\rangle-y_{\pi_{i+1}})\langle X_{\pi_{i+1}},u\rangle,\] and such that equality implies $\pi_i < \pi_{i+1}$. Let $\Pi = \{\pi(\lambda,u): \lambda \in \RR^{d-1}, u \in \RR^d\}$. 
\end{definition}

Then it can be seen that for fixed $\lambda$ and $u$, the optimal choice of $w$ has coefficients $1$ on $\pi(\lambda,u)_1,\dots,\pi(\lambda,u)_{\lfloor n-k\rfloor}$, and coefficient $n-k-\lfloor n-k\rfloor$ for $\pi(\lambda,u)_{\lfloor n-k\rfloor + 1}$: if there is any feasible $w$ which makes the sum non-negative, then this choice of $w$ makes the sum non-negative as well. Denoting this vector by $w(\pi(\lambda,u))$, we have that in Equation~\ref{eq:qp-reformulation} it suffices to restrict to $w \in \{w(\pi): \pi \in \Pi\}$.

A priori, the number of achievable permutations could be $n!$, in which case we would not have gained anything. However, because $\pi(\lambda,u)$ is defined by low-degree polynomials in only $2d-1$ variables, we can actually show that $|\Pi|$ is at most exponential in $d$, using the following result:

\begin{theorem}[Sign Partitions \cite{milnor1964betti, renegar1992computational}]\label{theorem:sign-set}
Let $g_1,\dots,g_m: \RR^n \to \RR$ be arbitrary polynomials each with total degree at most $d$. Let $\SG(g)$ be the set of vectors $\sigma \in \{-1,0,1\}^m$ such that $\sigma$ is an achievable sign vector, i.e. there exists some $x \in \RR^n$ with $\text{sign}(g_i) = \sigma_i$ for all $i \in [m]$. Then $|\SG(g)| \leq (md)^{O(n)}$. Moreover, $\SG(g)$ can be enumerated in time $(md)^{O(n)}$.
\end{theorem}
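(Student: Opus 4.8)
The plan is to treat this as a classical result in real algebraic geometry and reduce it to the Milnor--Thom bound on the number of connected components of a real algebraic variety. I would first reduce the task of counting arbitrary sign conditions to that of counting \emph{strict} sign conditions (those $\sigma$ with no zero coordinate), and then bound the latter by the number of connected components of the complement of a single hypersurface.

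First I would handle strict sign conditions. For a strict $\sigma \in \{-1,1\}^m$, its realization is an open subset of $\RR^n \setminus Z(Q)$, where $Z(\cdot)$ is the zero set and $Q = g_1 \cdots g_m$ has degree at most $md$; distinct strict sign vectors have disjoint realizations, and each $g_i$ has locally constant sign on $\RR^n \setminus Z(Q)$. Hence the number of realizable strict sign vectors is at most the number of connected components of $\RR^n \setminus Z(Q)$. Since Milnor--Thom bounds connected components of a \emph{variety} rather than a complement, I would apply the standard graph trick: the set $\{(x,t) \in \RR^{n+1} : t\,Q(x) = 1\}$ is a real variety cut out by one polynomial of degree at most $md+1$ in $n+1$ variables, and projection to $x$ is a homeomorphism onto $\{Q \ne 0\}$. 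Milnor--Thom then yields at most $(md+1)(2md+1)^{n} = (md)^{O(n)}$ components.

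Next I would reduce general sign conditions to strict ones by an infinitesimal perturbation. Replace $\{g_1,\dots,g_m\}$ by the enlarged family $\{g_i - \epsilon,\, g_i + \epsilon : i \in [m]\}$ of $2m$ polynomials, each still of degree at most $d$. At a point where $g_i$ is positive, negative, or zero, the pair $(g_i+\epsilon,\, g_i-\epsilon)$ has strict signs $(+,+)$, $(-,-)$, $(+,-)$ respectively, as long as $\epsilon$ is smaller than every nonzero value $|g_i|$ at a chosen set of representatives. Picking one representative per realizable sign vector (at most $3^m$, so a finite choice) and taking $\epsilon$ below the minimum nonzero $|g_i|$ over all of them gives an injection from $\SG(g)$ into the strict sign vectors of the enlarged family, since the coordinatewise decoding $(+,+)\mapsto +$, $(-,-)\mapsto -$, $(+,-)\mapsto 0$ recovers $\sigma$. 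Applying the previous paragraph to the product of the $2m$ polynomials (degree at most $2md$) bounds $|\SG(g)|$ by $(2md+1)(4md+1)^n = (md)^{O(n)}$, as claimed.

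The enumeration claim is where I expect the real work to lie and where I would lean hardest on prior results: finiteness and the cardinality bound do not by themselves produce the list within the stated time. For that I would invoke the singly-exponential algorithms of \cite{renegar1992computational} (or Basu--Pollack--Roy), which compute a set of sample points meeting every connected component of every realizable sign condition in time $(md)^{O(n)}$, and read off the sign vector at each sample point. The main obstacle is therefore not the counting bound---a clean application of Milnor--Thom once the perturbation and graph tricks are set up---but certifying the singly-exponential running time of the enumeration, which genuinely requires the effective real-algebraic machinery rather than a naive cell decomposition, whose cost would be doubly exponential in $n$.
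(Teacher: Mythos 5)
The paper gives no proof of this theorem; it imports it wholesale from the cited sources, which is precisely the split you arrive at (Milnor--Thom for the cardinality bound, Renegar's sample-point machinery for the enumeration). Your reconstruction of the counting bound---strict sign conditions counted by connected components of the complement of the product hypersurface, the graph trick $\{(x,t): tQ(x)=1\}$ to reduce to a variety, and the $g_i\pm\epsilon$ perturbation with representative points to handle zero coordinates---is the standard and correct argument behind those citations, and you correctly identify that the enumeration claim is the part that genuinely needs the effective algorithms rather than the counting bound alone.
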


Putting everything together, we have the following theorem, which proves Theorem~\ref{theorem:exact-algorithm-intro}.

\begin{theorem}
For any permutation $\pi$ on $[n]$, define $w(\pi) \in [0,1]^n$ by \[w(\pi)_{\pi_i} = \begin{cases} 1 & \text{ if } i \leq n-k \\ n-k-\lfloor n-k \rfloor & \text{ if } i = n-k+1 \\ 0 & \text{ otherwise}\end{cases}.\]
Then for any $k \in [0,n]$, it holds that $\sns(X,y) > k$ if and only if 
\begin{equation}
\forall \lambda \in \RR^{d-1}: \exists u \in \RR^d: \forall \pi \in \Pi: \sum_{i=1}^n w(\pi)_i (\langle \tilde{X}_i,\lambda\rangle - y_i)\langle X_i,u\rangle < 0.
\label{eq:qp-reformulation-2}
\end{equation}
Moreover, $\Pi$ can be enumerated in time $n^{O(d)}$. Thus, the expression $\sns(X,y)>k$ can be decided in time $n^{O(d^3)}$.
\end{theorem}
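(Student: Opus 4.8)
The plan is to derive the equivalence by negating Lemma~\ref{lemma:qp-reformulation} and then eliminating the innermost quantifier over $w$. Negating the characterization of $\sns(X,y) \le k$ turns the leading $\exists\lambda$ into $\forall\lambda$, the $\forall u$ into $\exists u$, and the innermost $\exists w$ into $\forall w$; writing $c_i(\lambda,u) = (\langle \tilde{X}_i,\lambda\rangle - y_i)\langle X_i,u\rangle$, this gives that $\sns(X,y) > k$ holds iff
\[ \forall \lambda \in \RR^{d-1}: \exists u \in \RR^d: \forall w \in [0,1]^n \text{ with } \norm{w}_1 \ge n-k:\ \sum_{i=1}^n w_i c_i(\lambda,u) < 0. \]
The goal is then to replace the quantifier over the continuum of feasible $w$ by a quantifier over the finite set $\Pi$.

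First I would establish the greedy-optimality claim underlying the definition of $w(\pi)$: for fixed $\lambda,u$, among all feasible $w$ the maximum of $\sum_i w_i c_i$ is nonnegative iff $w(\pi(\lambda,u))$ already makes the sum nonnegative. This is a one-dimensional LP fact, but the subtlety is that the constraint is $\norm{w}_1 \ge n-k$ rather than an equality. If fewer than $n-k$ of the $c_i$ are positive, then placing weight $1$ on all positive coordinates and filling the remaining required mass with the least-negative coordinates is exactly $w(\pi(\lambda,u))$, so it maximizes the sum; if at least $n-k$ coordinates are positive, then $w(\pi(\lambda,u))$ already has strictly positive sum, as does the true maximizer. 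In either case the sign of the maximum agrees with the sign of $\sum_i w(\pi(\lambda,u))_i c_i$, so the innermost $\forall w$ condition is equivalent to $\sum_i w(\pi(\lambda,u))_i c_i < 0$.

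Next I would remove the dependence of the sorting permutation on $(\lambda,u)$. Since $w(\pi)$ assigns the nonincreasing weight profile $(1,\dots,1,\text{frac},0,\dots,0)$ to the coordinates in the order prescribed by $\pi$, the rearrangement inequality gives $\sum_i w(\pi)_i c_i(\lambda,u) \le \sum_i w(\pi(\lambda,u))_i c_i(\lambda,u)$ for every permutation $\pi$, with equality when $\pi$ sorts $c(\lambda,u)$ in descending order. Because $\pi(\lambda,u) \in \Pi$ by definition, the quantity $\sum_i w(\pi(\lambda,u))_i c_i$ equals $\max_{\pi \in \Pi}\sum_i w(\pi)_i c_i$; hence $\sum_i w(\pi(\lambda,u))_i c_i < 0$ iff $\forall \pi \in \Pi: \sum_i w(\pi)_i c_i < 0$. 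Substituting into the displayed expression yields precisely (\ref{eq:qp-reformulation-2}). The payoff is that $\Pi$ is a fixed finite set enumerable in advance, so the inner condition becomes a conjunction of genuine polynomial inequalities in $(\lambda,u)$ rather than a statement about an argsort.

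Finally I would bound and enumerate $\Pi$ and invoke Renegar's theorem. The order $\pi(\lambda,u)$ is determined by the signs of the $\binom{n}{2}$ pairwise differences $c_i(\lambda,u) - c_j(\lambda,u)$ (with ties broken by index, as in the definition). Each difference is a degree-$2$ polynomial in the $2d-1$ variables $(\lambda,u)$, so by Theorem~\ref{theorem:sign-set} the number of achievable sign patterns, and hence $|\Pi|$, is at most $(O(n^2))^{O(d)} = n^{O(d)}$, and they can be enumerated in time $n^{O(d)}$, each pattern reconstructing one $\pi$. Expression (\ref{eq:qp-reformulation-2}) is then of the form $\forall \lambda \in \RR^{d-1}: \exists u \in \RR^d: P(\lambda,u)$, where $P$ is a system of $|\Pi| = n^{O(d)}$ degree-$2$ inequalities. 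Applying Theorem~\ref{theorem:renegar} with $n_1 = d-1$, $n_2 = d$, and $m = n^{O(d)}$ gives decision time $(n^{O(d)})^{O(d^2)} = n^{O(d^3)}$, dominating the $n^{O(d)}$ enumeration cost. I expect the main obstacle to be the greedy-optimality claim under the inequality constraint $\norm{w}_1 \ge n-k$, since it is tempting but incorrect to assert that $w(\pi)$ maximizes the sum outright (it only matches the \emph{sign} of the maximum); getting this reduction exactly right is what licenses replacing the continuous $w$-quantifier by the finite $\Pi$-quantifier.
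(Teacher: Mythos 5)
Your proposal is correct and follows essentially the same route as the paper: negate the $\exists\lambda\,\forall u\,\exists w$ characterization from Lemma~\ref{lemma:qp-reformulation}, observe that the greedy weight vector $w(\pi(\lambda,u))$ witnesses the sign of the maximum of $\sum_i w_i c_i(\lambda,u)$ over feasible $w$ (including the point that the inequality constraint $\norm{w}_1 \ge n-k$ costs nothing), replace the argsort by a maximum over the finite set $\Pi$, bound $|\Pi|$ via sign patterns of the degree-$2$ pairwise differences in $2d-1$ variables, and finish with Renegar. Your explicit treatment of the $\norm{w}_1 \ge n-k$ versus $\norm{w}_1 = n-k$ subtlety is slightly more careful than the paper's one-line remark that ``increasing the norm cannot help,'' but the argument is the same.
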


\begin{proof}
Fix $\lambda \in \RR^{d-1}$ and $u \in \RR^d$. If
\begin{equation}
\exists \pi \in \Pi: \sum_{i=1}^n w(\pi)_i (\langle \tilde{X}_i,\lambda\rangle - y_i)\langle X_i,u\rangle \geq 0,
\label{eq:w-in-pi}
\end{equation}
then because $\norm{w(\pi)}_1 \geq n-k$, we obviously get
\begin{equation}
\exists w \in [0,1]^n: \norm{w}_1 \geq n-k \land \sum_{i=1}^n w_i (\langle \tilde{X}_i,\lambda\rangle - y_i)\langle X_i,u\rangle \geq 0.
\label{eq:all-w}
\end{equation}
Conversely, if (\ref{eq:w-in-pi}) is false, then in particular $w(\pi(\lambda,u))$ produces a negative sum $\sum_{i=1}^n w(\pi(\lambda,u))_i (\langle \tilde{X}_i,\lambda\rangle - y_i)\langle X_i,u\rangle$. But by construction, $w(\pi(\lambda,u))$ maximizes this sum, over all $w \in [0,1]^n$ with $\norm{w}_1 = n-k$. Therefore no weight vector with $\ell_1$ norm exactly $n-k$ produces a nonnegative sum, and increasing the norm cannot help. Thus, (\ref{eq:all-w}) and (\ref{eq:w-in-pi}) are equivalent. Quantifying over $\lambda$ and $u$, we have $\sns(X,y)\leq k$ if and only if \[\exists \lambda \in \RR^{d-1}: \forall u \in \RR^d: \exists \pi \in \Pi: \sum_{i=1}^n w(\pi)_i (\langle \tilde{X}_i,\lambda\rangle - y_i)\langle X_i,u\rangle \geq 0.\]
Taking the negation yields expression (\ref{eq:qp-reformulation-2}). If we can compute $\Pi$, then this expression is a $\forall\exists$-system of polynomial inequalities with $2d-1$ variables and $|\Pi|$ degree-$2$ inequalities, so by Theorem~\ref{theorem:renegar} it can be decided in time $|\Pi|^{O(d^2)}$. It remains to show that $\Pi$ can be enumerated in time $n^{O(d)}$ (which bounds $|\Pi|$).

For any $i,j \in [n]$ with $i<j$ define the polynomial \[f_{i,j}(\lambda,u) = (\langle \tilde{X}_i,\lambda\rangle - y_i)\langle X_i,u\rangle - (\langle \tilde{X}_j,\lambda\rangle - y_j)\langle X_j,u\rangle.\]
For any $\lambda$ and $u$, the permutation $\pi(\lambda,u)$ is determined by the signs of the polynomials $\{f_{i,j}\}_{i<j}$ at $(\lambda,u)$. But by Theorem~\ref{theorem:sign-set}, the set of sign vectors can be computed in time $n^{O(d)}$. So $\Pi$ can be found in time $n^{O(d)}$ as well.
\end{proof}

\section{Proof of Theorem~\ref{theorem:lb-intro}}\label{section:lower-bound}

In this section, we prove Theorem~\ref{theorem:lb-intro}. That is, we show that exactly computing the stability requires $n^{\Omega(d)}$ time under the Exponential Time Hypothesis, by a simple reduction from the \emph{Maximum Feasible Subsystem} problem. This latter problem is already known to take $n^{\Omega(d)}$ time in $d$ dimensions under the Exponential Time Hypothesis:

\begin{theorem}[Theorem~13 in \cite{giannopoulos2009parameterized}]
Suppose that there is an $n^{o(d)}$-time algorithm for the following problem: given $n$ vectors $v_1,\dots,v_n \in \RR^d$, real numbers $c_1,\dots,c_n \in \RR$, and an integer $0 \leq k \leq n$, determine whether \[\max_{\lambda \in \RR^d} \sum_{i=1}^n \mathbbm{1}[\langle v_i, \lambda\rangle = c_i] \geq k.\]
Then the Exponential Time Hypothesis is false.
\end{theorem}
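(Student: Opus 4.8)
The plan is to prove this lower bound by a parameterized reduction from \emph{Multicolored Clique}, which under the Exponential Time Hypothesis admits no $f(\kappa)\cdot N^{o(\kappa)}$-time algorithm (here $\kappa$ is the clique size and $N$ the number of vertices per color class). Given a Multicolored Clique instance with color classes $V_1,\dots,V_\kappa$, each of size $N$, I would construct an instance of the maximum-feasible-subsystem problem in dimension $d=\kappa$ with $n=\poly(N)$ equations, calibrated so that its optimum is at least a threshold $T=\kappa+\binom{\kappa}{2}$ if and only if the graph has a multicolored clique. Since the reduction runs in polynomial time and keeps $d=\kappa$, any $n^{o(d)}$-time algorithm for the subsystem problem would decide Multicolored Clique in $\poly(N)^{o(\kappa)}=N^{o(\kappa)}$ time, contradicting ETH.

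The heart of the construction is an encoding in which a single linear equation verifies one edge while the dimension stays linear in $\kappa$. I would use one coordinate $\lambda_j$ per color class, intended to \emph{select} a vertex of $V_j$, and assign the vertices distinct integer \emph{codes} $t_a$ drawn from a Sidon set (a set all of whose pairwise sums are distinct), which exists with entries of magnitude $\poly(N,\kappa)$ so that the bit-complexity stays polynomial. For each $j$ and each $a\in V_j$ I add a \emph{selection} equation $\lambda_j=t_a$ (that is, $v_i=e_j$, $c_i=t_a$), and for each edge $\{a,b\}$ with $a\in V_j$, $b\in V_{j'}$, $j<j'$, I add an \emph{edge} equation $\lambda_j-\lambda_{j'}=t_a-t_b$ (that is, $v_i=e_j-e_{j'}$). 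This produces $\kappa N$ selection equations and polynomially many edge equations.

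The key claim is that for every $\lambda\in\RR^{\kappa}$ the number of satisfied equations is at most $T$, with equality exactly when $\lambda$ encodes a multicolored clique. For the cap: each color class contributes at most one satisfied selection equation (distinct codes), so at most $\kappa$ in total; and because the pairwise differences $t_a-t_b$ are distinct within each color pair (a direct consequence of the Sidon property), the fixed scalar $\lambda_j-\lambda_{j'}$ matches at most one edge equation per pair $(j,j')$, so at most $\binom{\kappa}{2}$ in total. Attaining $T$ therefore forces exactly one selection equation per class, pinning $\lambda$ to a grid point $(t_{a_1},\dots,t_{a_\kappa})$ that selects one vertex per class, and one edge equation per color pair; on such a grid point the Sidon property guarantees that the unique satisfiable edge equation for pair $(j,j')$ is the one indexed by the selected pair $(a_j,a_{j'})$, which is present precisely when $\{a_j,a_{j'}\}$ is an edge. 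Hence the optimum equals $T$ if and only if the selected transversal is a clique.

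The step I expect to be the main obstacle is exactly this upper bound for \emph{off-grid} $\lambda$: a $\lambda$ that does not select valid codes could a priori satisfy many edge equations and overshoot the threshold, which would invalidate the reduction. The Sidon encoding is the device that rules this out, since it caps the satisfied edge equations at one per color pair uniformly over all $\lambda$, grid or not. Once this is in place, the remaining work, namely exhibiting a Sidon set of polynomial size and magnitude, checking the equation and variable counts, and verifying the calibration $T=\kappa+\binom{\kappa}{2}$, is routine, and composing the reduction with the ETH lower bound for Multicolored Clique yields the stated $n^{\Omega(d)}$ bound.
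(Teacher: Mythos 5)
Your proof is correct, but note that the paper does not actually prove this statement: it is imported as a black box (Theorem~13 of \cite{giannopoulos2009parameterized}), and the paper's own contribution is the further reduction \emph{from} Maximum Feasible Subsystem to stability. So the comparison is really with the cited work, which likewise establishes the bound by a parameterized, clique-based reduction into low-dimensional linear-equation systems; your write-up is a clean, self-contained variant of that route. The logic is sound: within each color class at most one selection equation $\lambda_j=t_a$ can hold since the codes are distinct, and for each pair $(j,j')$ the quantity $\lambda_j-\lambda_{j'}$ is a single scalar while the Sidon property makes the differences $t_a-t_b$ pairwise distinct, so at most one edge equation per pair holds for \emph{every} $\lambda\in\RR^{\kappa}$ --- which is precisely the off-grid cap you correctly flagged as the crux. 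Hence the optimum never exceeds $T=\kappa+\binom{\kappa}{2}$, and attaining $T$ forces one selection per class (pinning $\lambda$ to a code vector) and, again by distinctness of differences, forces the unique satisfiable edge equation for each pair to be the one indexed by the selected vertices, which is present in the system iff those vertices are adjacent; the converse direction is immediate by evaluating at the grid point of a clique. The bookkeeping also checks out: Sidon sets of size $\kappa N$ with entries of magnitude $O((\kappa N)^2)$ are classical (Erd\H{o}s--Tur\'an/Singer) and poly-time constructible, the instance has $d=\kappa$ and $n=\kappa N+|E|=\poly(N)$, so an $n^{o(d)}$-time algorithm would decide Multicolored Clique in $N^{o(\kappa)}$ time, contradicting the standard ETH lower bound of Chen et al. (Multicolored Clique being parameter-preservingly equivalent to Clique). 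What your route buys over the citation is a transparent one-equation-per-edge gadget in dimension exactly $\kappa$, at no cost beyond the Sidon construction.
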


From this, it can be easily seen that (exactly) computing stability also requires $n^{\Omega(d)}$ time.

\begin{theorem}
Suppose that there is an $n^{o(d)}$-time algorithm for the following problem: given $X_1,\dots,X_n \in \RR^d$ and $y_1,\dots,y_n \in \RR$, as well as an integer $0 \leq k \leq n$, determine whether \[\sns(X,y) \leq n-k.\]
Then the Exponential Time Hypothesis is false.
\end{theorem}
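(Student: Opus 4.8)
The plan is to give a polynomial-time reduction from the Maximum Feasible Subsystem problem, preserving the dimension up to an additive constant, and then invoke the cited ETH-hardness of that problem. Given a MaxFS instance $(v_i,c_i)_{i=1}^n$ with $v_i \in \RR^{d-1}$ and $c_i \in \RR$, I would construct a $d$-dimensional stability instance by declaring each sample's controls to be the MaxFS vector and, crucially, by tying together the first covariate and the response: set $\tilde{X}_i = v_i$ (so $\tilde{X}$ has the correct $d-1$ columns), $y_i = c_i$, and $X_{i1} = c_i$. The goal is to prove $\sns(X,y) \le n-k$ if and only if some $\lambda \in \RR^{d-1}$ satisfies at least $k$ of the equations $\langle v_i,\lambda\rangle = c_i$.

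The heart of the argument is an algebraic identity that forces every retained sample to have zero residual. Using formulation~(\ref{eq:sns-prelim-formulation}), a feasible point is a pair $(w,\lambda)$ with $w \in [0,1]^n$, $\lambda \in \RR^{d-1}$, and $X^T(w \star (\tilde{X}\lambda - y)) = 0$ (this realizes a weighted OLS solution $\beta = (0,\lambda)$). Write $r_i = \langle v_i,\lambda\rangle - c_i$ for the residual of sample $i$. Coordinates $2,\dots,d$ of the normal equation say $\sum_i w_i r_i v_i = 0 \in \RR^{d-1}$, while coordinate $1$ (where $X_{i1}=c_i$) says $\sum_i c_i w_i r_i = 0$. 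The main obstacle in such reductions is that samples with nonzero residual might be kept if their contributions to the normal equations cancel one another; the gadget $X_{i1}=y_i=c_i$ is precisely what rules this out, via
\[
\sum_{i} w_i r_i^2 = \left\langle \lambda, \sum_i w_i r_i v_i\right\rangle - \sum_i c_i w_i r_i = 0.
\]
Since $w_i \ge 0$ and $r_i^2 \ge 0$, this forces $w_i = 0$ whenever $r_i \ne 0$, so every sample with positive weight satisfies $\langle v_i,\lambda\rangle = c_i$ exactly.

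Both directions of the equivalence then follow quickly. For the easy direction, if $\lambda^\st$ satisfies $k$ equations, put $w_i = 1$ on those $k$ samples and $w_i = 0$ elsewhere; the kept residuals vanish, so the normal equations hold with $\beta = (0,\lambda^\st)$ (it suffices that this $\beta$ lies in the solution set, regardless of uniqueness), giving $n - \norm{w}_1 = n-k$ and hence $\sns(X,y) \le n-k$. For the converse, take any feasible $(w,\lambda^\st)$ with $\norm{w}_1 \ge k$; the identity above shows every index with $w_i > 0$ satisfies $\langle v_i,\lambda^\st\rangle = c_i$, and since each $w_i \le 1$ the number of such indices is at least $\norm{w}_1 \ge k$, so the MaxFS optimum is at least $k$.

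Finally I would handle the bookkeeping. The construction runs in $O(nd)$ time and maps a $(d-1)$-dimensional MaxFS instance to a $d$-dimensional stability instance, so an $n^{o(d)}$-time algorithm deciding $\sns(X,y) \le n-k$ would decide MaxFS in dimension $d-1 = \Theta(d)$ in time $n^{o(d)}$, contradicting the cited Theorem~13 of \cite{giannopoulos2009parameterized} and therefore refuting ETH. I expect the only nontrivial step to be the cancellation-prevention identity; the remaining verifications are routine.
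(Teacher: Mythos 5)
Your proposal is correct and is essentially the paper's own reduction: the same gadget $X_i=(c_i,v_i)$, $y_i=c_i$ mapping MaxFS to stability with a dimension shift of one. The only cosmetic difference is in how you justify that positively weighted samples must have zero residual \--- you derive $\sum_i w_i r_i^2=0$ directly from the normal equations, whereas the paper observes that $e_1$ interpolates the data exactly so the weighted OLS solution set consists precisely of exact interpolators of the supported samples; these are equivalent observations.
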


\begin{proof}
We reduce to Maximum Feasible Subsystem. Given $v_1,\dots,v_n \in \RR^d$ and $c_1,\dots,c_n \in \RR$, define $X_i = (c_i, v_i) \in \RR^{d+1}$ and $y_i = c_i$. Then the regressor $e_1 = (1,0,\dots,0) \in \RR^{d+1}$ perfectly fits the data set, i.e. \[\sum_{i=1}^n (\langle X_i,e_1\rangle - y_i)^2 = 0.\]
Thus, for any $w \in [0,1]^n$, \[\OLS(X,y,w) = \left\{\beta \in \RR^{d+1}: \sum_{i=1}^n w_i(\langle X_i,\beta\rangle - y_i)^2 = 0\right\}.\]
Suppose that $\sns(X,y) \leq n-k$. Then there is some $w \in [0,1]^n$ and $\beta_1 = 0$ with $\norm{w}_1 \geq k$ and $\sum_{i=1}^n w_i(\langle X_i,\beta\rangle - y_i)^2 = 0$. Let $S\subseteq [n]$ be the support of $w$; then $|S| \geq k$. Moreover for every $i \in S$, it holds that $\langle X_i,\beta \rangle - y_i = 0$. Since $\beta_1=0$, by definition of $X_i$ and $y_i$, this implies that $\langle v_i, \beta_{2:d+1}\rangle = c_i$. Thus, \[\sum_{i=1}^n \mathbbm{1}[\langle v_i,\beta_{2:d+1}\rangle = c_i] \geq k.\]
Conversely, suppose that there exists some $\lambda\in\RR^d$ and set $S\subseteq [n]$ of size $k$ such that $\langle v_i,\lambda\rangle = c_i$ for all $i \in S$. Define $w = \mathbbm{1}_S \in [0,1]^n$, and define $\beta = (0,\lambda)$. Then it is clear that \[\sum_{i=1}^n w_i (\langle X_i,\beta\rangle - y_i)^2 = \sum_{i \in S} (\langle v_i,\lambda\rangle - c_i)^2 = 0.\]
Thus, $\beta \in \OLS(X,y,w)$, so $\sns(X,y) \leq n-k$. This completes the reduction.
\end{proof}

\begin{remark}
Maximum Feasible Subsystem does have an $\epsilon n$-additive approximation algorithm in time $\tilde{O}((d/\epsilon)^d)$, by subsampling. Thus, proving $n^{\Omega(d)}$-hardness for $\epsilon n$-approximation of stability would require a different technique.
\end{remark}

\section{Proof of Theorem~\ref{theorem:lp-algorithm-intro}}\label{section:lp-algorithm}

In this section, we prove Theorem~\ref{theorem:lp-algorithm-intro}. The main idea of \lpalg{} is that under Assumption~\ref{assumption:ac-A}, we can approximate the stability by partitioning $\RR^{d-1}$ into roughly $n^d$ regions and solving a linear program on each region. See Appendix~\ref{app:pseudocode} for the complete algorithm.

\subsection{Partitioning scheme}\label{section:partitioning}

Given samples $(X_i, y_i)_{i=1}^n$, let $M = \norm{X\beta^{(0)} - y}_2$, where $\beta^{(0)} \in \OLS(X,y,\mathbbm{1})$. Let $S \subseteq [n]$ be a uniformly random size-$m$ subset of $[n]$. Let $R_1,\dots,R_p$ be the closed connected subsets of $\RR^{d-1}$ cut out by the following set of equations $\mathcal{E}$:

\[\left\{
\begin{aligned} 
&\langle \tilde{X}_i, \lambda\rangle - y_i = 0 & \forall i \in [n] \\ 
&\langle \tilde{X}_i,\lambda\rangle - y_i = \sigma M & \forall i \in [n], \forall \sigma \in \{-1,1\} \\
&\langle \tilde{X}_i,\lambda\rangle - y_i = \sigma \delta M/\sqrt{n} & \forall i \in [n], \forall \sigma \in \{-1,1\} \\
&\langle \tilde{X}_i,\lambda\rangle - y_i = \sigma (1+\epsilon)^k \delta M/\sqrt{n} & \forall i \in S, \forall \sigma \in \{-1,1\}, \forall 0 \leq k \leq \lceil \log_{1+\epsilon}(\sqrt{n}/\delta)\rceil
\end{aligned}\right.\]

Formally, we define a region for every feasible assignment of equations to $\{=,<,>\}$, and then replace each strict inequality by a non-strict inequality, so that the region is closed. First, we observe that the number of regions is not too large, and in fact we can enumerate the regions efficiently.

\begin{lemma}\label{lemma:enumerate-regions}
Each region $R_i$ is the intersection of $O(|\mathcal{E}|)$ linear equalities or inequalities. Moreover, the regions $R_1,\dots,R_p$ can be enumerated in time $O(|\mathcal{E}|)^{d+O(1)}$.
\end{lemma}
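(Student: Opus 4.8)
The plan is to recognize the equations $\mathcal{E}$ as the defining equations of an arrangement of affine hyperplanes in $\RR^{d-1}$ and to read off both claims from the structure of such an arrangement. Each equation in $\mathcal{E}$ has the form $\langle \tilde{X}_i,\lambda\rangle - y_i = c$ for some constant $c$, so (after discarding the degenerate equations with $\tilde{X}_i = 0$, which are either vacuous or infeasible and create no hyperplane) it cuts out a single hyperplane $H_e = \{g_e = 0\}$, where $g_e(\lambda) := \langle \tilde{X}_i,\lambda\rangle - y_i - c$ is affine in $\lambda$. By construction a region $R_i$ is the set of $\lambda$ satisfying one relation from $\{=,\le,\ge\}$ for each $e \in \mathcal{E}$; that is, it is the solution set of $\{g_e(\lambda)\,\sigma_e\,0 : e \in \mathcal{E}\}$ for a fixed pattern $\sigma \in \{=,\le,\ge\}^{\mathcal{E}}$. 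This is an intersection of exactly $|\mathcal{E}|$ linear (in)equalities, which gives the first claim immediately.

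For the count, I would index each region by the realizable sign vector $\sigma \in \{-1,0,1\}^{\mathcal{E}}$ of the affine functions $\{g_e\}$ that produces it (distinct regions come from distinct realizable sign vectors, so it suffices to bound and enumerate the latter). The $g_e$ are polynomials of degree $1$ in the $d-1$ variables $\lambda$, so Theorem~\ref{theorem:sign-set}, applied with degree $1$, bounds the number of realizable sign vectors, which is exactly the number of faces of all dimensions of the arrangement $\{H_e\}$. Specialized to degree $1$, this is the classical face-count of a hyperplane arrangement, $O(|\mathcal{E}|^{d-1}) = O(|\mathcal{E}|)^{d+O(1)}$.

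To actually enumerate the regions within the stated time, I would run a standard cell-enumeration procedure for the arrangement of the $|\mathcal{E}|$ hyperplanes $\{H_e\}$ in $\RR^{d-1}$ (e.g. incremental insertion of hyperplanes while maintaining the face lattice, or the degree-$1$ specialization of sign-condition enumeration). Such a procedure lists every nonempty face together with its sign vector $\sigma$, and from $\sigma$ we immediately recover the $O(|\mathcal{E}|)$ defining constraints of the corresponding region; its running time is the number of faces times a $\poly(|\mathcal{E}|,d)$ overhead, i.e. $O(|\mathcal{E}|)^{d+O(1)}$.

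The main obstacle I anticipate is not the combinatorics but the exponent in the time bound. The black-box enumeration guarantee in Theorem~\ref{theorem:sign-set} only gives $(|\mathcal{E}|)^{O(d)}$, whose hidden constant in the exponent would propagate to the final complexity of \lpalg{}. Getting the sharp $d+O(1)$ exponent requires exploiting that the $g_e$ are affine, so that the arrangement has only $O(|\mathcal{E}|^{d-1})$ faces and admits an output-sensitive enumeration with merely polynomial overhead per face. A secondary point to handle cleanly is degeneracy (parallel, coincident, or trivial equations), which I would dispose of by deleting duplicate and vacuous hyperplanes up front, since these can only shrink both the number of regions and the enumeration time.
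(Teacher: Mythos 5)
Your proposal is correct and matches the paper's argument: the paper likewise observes that each region is cut out by one (in)equality per equation in $\mathcal{E}$, enumerates regions by incremental insertion of the hyperplanes (using a linear program to test whether each new hyperplane splits an existing region), and bounds the total count by the standard $O(|\mathcal{E}|)^{d}$ face-count of a hyperplane arrangement rather than invoking the general sign-condition theorem. Your remark that the affine structure is what yields the sharp $d+O(1)$ exponent is exactly the point the paper's output-sensitive procedure exploits.
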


\begin{proof}
Order the set of equations $\mathcal{E}$ arbitrarily. We recursively construct the set of regions demarcated by the first $t$ equations. For each such region, we solve a linear program to check if the $t+1$th hyperplane intersects the interior of the region. If so, we split the region according to the sign of the $t+1$th hyperplane. The overall time complexity of this procedure is $O(p \cdot \poly(n,|\mathcal{E}|))$, where $p$ is the final number of regions. But the number of regions which can be cut out by $t$ hyperplanes in $\RR^d$ is at most $O(t)^d$ by standard arguments.
\end{proof}

Next, we argue that even though we only multiplicatively partitioned a small subset of the residuals, with high probability most residuals are well-approximated. More precisely, we show that for the (random) region $R$ containing a fixed point $\lambda^*$, with high probability, for every other $\lambda$ in the region, most residuals at $\lambda$ are multiplicatively close to the corresponding residuals at $\lambda^*$. This can be proven by the generalization bound for function classes with low VC dimension:

\begin{theorem}[Theorem 3.4 in \cite{kearns1994introduction}]\label{theorem:vc-generalization}
Let $\mathcal{X}$ be a set. Let $\mathcal{C} \subset \{0,1\}^\mathcal{X}$ be a binary concept class with VC dimension $d$. Let $\mathcal{D}$ be a distribution on $\mathcal{X} \times \{0,1\}$. Let $\epsilon,\delta > 0$ and pick $m \in \NN$ satisfying \[m \geq C_1\left(\frac{d}{\epsilon}\log \frac{1}{\epsilon} + \frac{1}{\epsilon}\log\frac{1}{\delta}\right)\] for a sufficiently large constant $C_1$. Let $(x_i,y_i)_{i \in [m]}$ be $m$ independent samples from $\mathcal{D}$. Then with probability at least $1-\delta$, the following holds. For all $h \in \mathcal{C}$ with $h(x_i) = y_i$ for all $i \in [m]$, \[\Pr_{(x,y) \sim \mathcal{D}}[h(x) \neq y] \leq \epsilon.\]
\end{theorem}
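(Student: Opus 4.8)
The plan is to prove this by the classical two-stage argument for uniform convergence in the realizable setting: \emph{symmetrization} (a ``ghost sample'' double-sampling trick) followed by a counting bound from the Sauer--Shelah lemma. Let $A$ denote the ``bad event'' that there exists some $h \in \mathcal{C}$ consistent with the sample (i.e. $h(x_i)=y_i$ for all $i\in[m]$) yet with true error $\Pr_{(x,y)\sim\mathcal{D}}[h(x)\neq y] > \epsilon$; the goal is to show $\Pr[A]\le\delta$. The difficulty is that the offending $h$ ranges over an infinite class, so a naive union bound over $\mathcal{C}$ is unavailable; the whole point is to replace $\mathcal{C}$ by its finitely many behaviors on a finite sample.

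First I would introduce a second independent ghost sample $S' = (x_i',y_i')_{i\in[m]}$ drawn from $\mathcal{D}$ and define the two-sample event $B$ that some $h\in\mathcal{C}$ is consistent on $S$ but misclassifies more than $\epsilon m/2$ points of $S'$. The symmetrization step is to show $\Pr[B]\ge\tfrac12\Pr[A]$: on the event $A$, fix (measurably) a witness $h$ depending only on $S$; since $S'$ is independent of $S$ and $h$ has true error $>\epsilon$, its empirical error on the fresh sample $S'$ is an average of $m$ independent indicators of mean $>\epsilon$, so by a Chernoff (or Chebyshev) bound it exceeds $\epsilon/2$ with probability at least $1/2$, provided $\epsilon m$ is large enough — which the assumed lower bound on $m$ guarantees. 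Hence $\Pr[A]\le 2\Pr[B]$, and it suffices to bound $\Pr[B]$.

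Next I would bound $\Pr[B]$ by the swapping argument. Conditioning on the unordered multiset of the $2m$ labeled points $S\cup S'$, it suffices to bound, over a uniformly random partition of these $2m$ points into two halves of size $m$ (the realized $S$ and the ghost $S'$), the probability that some fixed labeling is correct on all of $S$ and wrong on more than $\epsilon m/2$ points of $S'$. For any single labeling with $r$ errors among the $2m$ points, this event forces all $r$ errors to land in the ghost half with $r>\epsilon m/2$; a short computation shows each error falls into the ghost half with conditional probability at most $1/2$, so the probability is at most $2^{-r}\le 2^{-\epsilon m/2}$. Crucially, by the Sauer--Shelah lemma the number of distinct labelings $\mathcal{C}$ can realize on these $2m$ fixed points is at most $\sum_{i=0}^{d}\binom{2m}{i}\le (2em/d)^d = O(m^d)$, so a union bound over realizable labelings gives $\Pr[B]\le O(m^d)\cdot 2^{-\epsilon m/2}$.

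Combining the two stages yields $\Pr[A]\le 2\,O(m^d)\,2^{-\epsilon m/2}$, and I would finish by choosing the constant $C_1$ so that the stated sample size forces $\epsilon m/2 \ge d\log m + \log(1/\delta)+O(1)$, making the right-hand side at most $\delta$; the $\tfrac{d}{\epsilon}\log\tfrac1\epsilon$ term in the hypothesis is precisely what dominates the $d\log m$ counting factor, and the $\tfrac1\epsilon\log\tfrac1\delta$ term absorbs the confidence. I expect the main obstacle to be the symmetrization step: because the witness $h$ is itself a function of the random sample $S$, one cannot apply concentration to a fixed hypothesis, so the argument must carefully condition on $S$ (and on the chosen witness) before invoking independence of the ghost sample, and must verify that the sample size is large enough for the ghost-sample concentration to hold uniformly.
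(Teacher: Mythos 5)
Your proposal is correct: the paper does not prove this statement but imports it verbatim from the cited textbook, and the argument you give (symmetrization via a ghost sample, the swapping/random-partition bound of $2^{-r}$ per realizable labeling, Sauer--Shelah to count labelings, and then inverting the sample-size bound) is precisely the classical proof of that textbook result. The only details worth double-checking in a full write-up are the measurable selection of the witness $h$ from $S$ in the symmetrization step and the requirement $\epsilon m \geq \Omega(1)$ for the ghost-sample concentration, both of which you already flag.
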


Specifically, for any $\lambda \in \RR^{d-1}$, we define a function $f_\lambda: [n] \to \{0,1\}$ as the indicator function of samples for which the residual at $\lambda$ is close to the residual at $\lambda^*$. Then the region containing $\lambda^*$ is precisely the set of functions which perfectly fit $S \times \{1\}$, and the generalization bound implies that all such functions fit most of $[n] \times 1$, which is what we wanted to show. The following lemma formalizes this argument, with some additional steps to deal with very small and very large residuals.

\begin{lemma}\label{lemma:partition-properties}
Let $\lambda^* \in \RR^{d-1}$ and let $\eta>0$. Let $R$ be the region containing $\lambda^*$. Let $B_M \subseteq [n]$ be the set of $i \in [n]$ such that $|\langle \tilde{X}_i,\lambda^*\rangle-y_i| > M$, and let $B_{\delta M} \subseteq [n]$ be the set of $i \in [n]$ such that $|\langle \tilde{X}_i,\lambda^*\rangle - y_i| \leq \delta M/\sqrt{n}$. If \[m \geq \frac{C}{\epsilon}\left(d\log\frac{1}{\epsilon} + \log\frac{1}{\eta}\right)\] for an absolute constant $C$, then with probability at least $1-\eta$ over the choice of $S$, the following holds. For every $\lambda \in R$, the number of $i \in [n] \setminus (B_M \cup B_{\delta M})$ such that \[\left|\frac{|\langle \tilde{X}_i,\lambda\rangle - y_i|}{|\langle \tilde{X}_i,\lambda\rangle - y_i|} - 1\right| > \epsilon\] is at most $\epsilon n$.
\end{lemma}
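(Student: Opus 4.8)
The plan is to recast the statement as a uniform-convergence bound over a concept class indexed by the free parameter $\lambda$, and then invoke Theorem~\ref{theorem:vc-generalization}. Write $r_i(\lambda) = \langle \tilde{X}_i,\lambda\rangle - y_i$ for the $i$-th residual. For a fixed index $i$, the equations of $\mathcal{E}$ involving $i$ assign to $r_i(\lambda)$ a \emph{bucket} $b_i(\lambda)$ recording its sign, whether $|r_i(\lambda)| > M$, whether $|r_i(\lambda)| \le \delta M/\sqrt{n}$, and---for $i \in S$ only---which multiplicative cell $\big((1+\epsilon)^k\delta M/\sqrt{n},\,(1+\epsilon)^{k+1}\delta M/\sqrt{n}\big]$ contains $|r_i(\lambda)|$. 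Since $\lambda^*$ is fixed, $b_i(\lambda^*)$ is a fixed target, and I define the concept $h_\lambda:[n]\to\{0,1\}$ by $h_\lambda(i) = \mathbbm{1}[b_i(\lambda)\ne b_i(\lambda^*)]$, with concept class $\mathcal{C} = \{h_\lambda : \lambda\in\RR^{d-1}\}$. The key reformulation is that for $i\in[n]\setminus(B_M\cup B_{\delta M})$ the residual $r_i(\lambda^*)$ lies in a fine cell of multiplicative width $(1+\epsilon)$, so $b_i(\lambda)=b_i(\lambda^*)$ forces $|r_i(\lambda)|/|r_i(\lambda^*)|\in[1/(1+\epsilon),\,1+\epsilon]$ and hence relative error at most $\epsilon$. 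Thus the bad indices among $[n]\setminus(B_M\cup B_{\delta M})$ are contained in $\{i:h_\lambda(i)=1\}$, and it suffices to prove $|\{i:h_\lambda(i)=1\}|\le\epsilon n$ for every $\lambda\in R$.

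Next I would check the two hypotheses of the VC bound. First, for every $\lambda\in R$ the concept $h_\lambda$ fits $S$ with the all-zero label: by construction $R$ is the sign cell of $\mathcal{E}$ containing $\lambda^*$, and for $i\in S$ the equations of $\mathcal{E}$ (sign, the $\pm M$ and $\pm\delta M/\sqrt{n}$ thresholds, and the entire multiplicative grid) pin down $b_i(\lambda)=b_i(\lambda^*)$, so $h_\lambda(i)=0$ on all of $S$. Second, $\mathcal{C}$ has VC dimension $O(d)$. Here I observe that for each $i$ the good set $\{\lambda:b_i(\lambda)=b_i(\lambda^*)\}$ is a slab $\{a_i\le r_i(\lambda)\le b_i\}$ (a single halfspace in the $\mathrm{BIG}/\mathrm{SMALL}$ cases), so $h_\lambda(i)$ is a union $h^+_\lambda(i)\vee h^-_\lambda(i)$ of two halfspace tests $\langle\tilde{X}_i,\lambda\rangle > c^+_i$ and $\langle\tilde{X}_i,\lambda\rangle < c^-_i$. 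Homogenizing via $\lambda\mapsto(\lambda,1)$ and $i\mapsto(\tilde{X}_i,-c_i)$, each of $\mathcal{C}^+$ and $\mathcal{C}^-$ is a class of homogeneous halfspaces in $\RR^d$ of VC dimension $\le d$; since $\mathcal{C}$ lies in the class of pairwise unions, its growth function is at most $\Pi_{\mathcal{C}^+}(m)\Pi_{\mathcal{C}^-}(m)=O\big((m/d)^{2d}\big)$, so Sauer--Shelah gives $\mathrm{VC}(\mathcal{C})=O(d)$.

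With these in hand, I apply Theorem~\ref{theorem:vc-generalization} to the distribution $\Unif([n])\times\{0\}$ with error parameter $\epsilon$, confidence $\eta$, and VC dimension $O(d)$; its sample-complexity requirement is exactly $m\ge\frac{C}{\epsilon}\big(d\log\frac1\epsilon+\log\frac1\eta\big)$, matching the hypothesis of the lemma. The conclusion is that with probability $\ge 1-\eta$ over $S$, every $h\in\mathcal{C}$ that fits $S$ has $\Pr_{i\sim\Unif[n]}[h(i)=1]\le\epsilon$. Since every $\lambda\in R$ yields such an $h_\lambda$ (by the first bullet of the previous paragraph), this bounds $|\{i:h_\lambda(i)=1\}|\le\epsilon n$ simultaneously for all $\lambda\in R$, and combined with the containment from the first paragraph this is precisely the claim.

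The main obstacle, and the only genuine subtlety, is that $S$ is a uniformly random size-$m$ subset (sampling \emph{without} replacement), whereas Theorem~\ref{theorem:vc-generalization} assumes i.i.d.\ samples. I would close this gap by noting that empirical-deviation tail bounds for sampling without replacement are no worse than in the i.i.d.\ case, so the same $m$ suffices; alternatively, one draws $m$ i.i.d.\ indices and folds the $O(m^2/n)$ collision probability into $\eta$. A secondary point to dispatch is the degenerate boundary cases (residuals landing exactly on a threshold, or sign $0$), which occur only on the measure-zero hyperplanes already present in $\mathcal{E}$ and hence do not disturb the bucket-matching argument.
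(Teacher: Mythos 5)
Your proposal is essentially the paper's argument: reformulate the claim as uniform convergence of a halfspace-based concept class that is pinned to the all-correct labeling on $S$ by the multiplicative grid, bound its VC dimension by $O(d)$, and invoke Theorem~\ref{theorem:vc-generalization}. One definitional slip needs repair before the argument is airtight: you define the bucket $b_i(\lambda)$ to record the multiplicative cell ``for $i \in S$ only,'' but your key containment step --- that $b_i(\lambda)=b_i(\lambda^*)$ forces $|r_i(\lambda)|/|r_i(\lambda^*)| \in [1/(1+\epsilon),\,1+\epsilon]$ for \emph{every} $i \in [n]\setminus(B_M\cup B_{\delta M})$ --- requires the bucket to record the fine multiplicative cell for all $i$, not just those in $S$. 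As written, for $i\notin S$ bucket equality only pins the sign and the coarse range $[\delta M/\sqrt{n}, M]$, which gives no multiplicative control. The fix is what you clearly intend: define $b_i(\lambda)$ via the full grid (sign plus multiplicative cell) for every $i$, while observing that only the $i\in S$ hyperplanes appear in $\mathcal{E}$, which is exactly what is needed for the ``$h_\lambda$ fits $S$'' step. With that correction your route differs from the paper's only in minor, arguably cleaner ways: you use a single bucket-mismatch class (a union of two halfspace tests, VC dimension $O(d)$ by a growth-function product) where the paper runs two separate one-sided classes $h_\lambda$ and $g_\lambda$ each contributing $\epsilon n/2$; and by making your concepts correct on all of $S$ (including $S\cap(B_M\cup B_{\delta M})$) you can take the distribution to be $\Unif([n])$ and skip the paper's Chernoff step that lower-bounds $|S\cap([n]\setminus(B_M\cup B_{\delta M}))|$. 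Your explicit treatment of sampling without replacement is also more careful than the paper, which silently treats $S'$ as i.i.d.
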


\begin{proof}
Let $\sigma \in \{-1,0,1\}^n$ be defined by $\sigma_i = \text{sign}(\langle \tilde{X}_i,\lambda^*\rangle - y_i)$. Note that by construction of the regions, $\text{sign}(\langle \tilde{X_i}, \lambda\rangle - y_i) = \sigma_i$ for every $\lambda \in R$.

Let $S' = S \cap ([n] \setminus (B_M\cup B_{\delta M}))$. If $n' := |[n]\setminus (B_M\cup B_{\delta M})| \leq \epsilon n$, then the lemma statement is trivially true. Otherwise, by a Chernoff bound, \[\Pr\left[|S'| \geq \frac{n'}{2n}m\right] \geq 1 - \exp(-\Omega(n'm/n)) \geq 1 - \eta/3.\]
Condition on the event $|S'| \geq n'm/(2n)$. Then $S'$ is uniform over size-$m'$ subsets of $[n]\setminus (B_M \cup B_{\delta M})$. Define $\mathcal{X} = [n]$, and define a concept class $\mathcal{C} = \{h_\lambda: \mathcal{X} \to \{0,1\}| \lambda \in \RR^{d-1}\}$ as the set of binary functions \[h_\lambda(i) = \mathbbm{1}\left[\sigma_i\left(\left\langle \tilde{X}_i, \lambda - (1+\epsilon)\lambda^*\right\rangle + \epsilon y_i\right) \leq 0\right].\]
Let $\mathcal{D}$ be the distribution on $\mathcal{X} \times \{0,1\}$ where $(i,s) \sim \mathcal{D}$ has $i \sim \Unif([n] \setminus (B_M \cup B_{\delta M}))$ and $s = 1$. Then $S' \times \{1\}$ consists of at least $n'm/(2n)$ independent samples from $\mathcal{D}$. For any $\lambda \in R$, we claim that the function $h_\lambda$ fits $S' \times \{1\}$ perfectly. Indeed, for any $i \in S'$, we know that $\delta M/\sqrt{n} \leq \sigma_i(\langle \tilde{X}_i,\lambda^*\rangle - y_i) \leq M$, since $i \not \in B_M\cup B_{\delta M}$. So by construction of the regions, it holds that \[\frac{1}{1+\epsilon} \leq \frac{\sigma_i(\langle \tilde{X}_i\lambda\rangle - y_i)}{\sigma_i(\langle \tilde{X}_i\lambda^*\rangle - y_i)} \leq 1+\epsilon.\]
From the right-hand side of the equation, we precisely get that $h_\lambda(i) = 1$, as claimed. By Lemma~\ref{lemma:vc-bound}, we have $\text{vc}(\mathcal{C}) \leq d$. Thus, since \[|S'| \geq \frac{n'm}{2n} \geq \frac{Cd}{(2n/n')\epsilon}\log\frac{1}{\epsilon} + \frac{C}{(2n/n')\epsilon}\log \frac{1}{\eta},\]
if $C$ is a sufficiently large absolute constant, we can apply Theorem~\ref{theorem:vc-generalization} with failure parameter $\epsilon' := n\epsilon/(2n')$ to get that with probability at least $1-\eta/3$, \[\sup_{\lambda \in R} \Pr_{(i,s) \sim \mathcal{D}}[h_\lambda(i) \neq s] \leq \epsilon'.\]
Equivalently, for every $\lambda \in R$, the number of $i \in [n] \setminus (B_M \cup B_{\delta M})$ such that $\sigma_i(\langle \tilde{X}_i,\lambda\rangle - y_i) > (1+\epsilon)\sigma_i(\langle \tilde{X}_i,\lambda\rangle - y_i)$ is at most $\epsilon 'n' = \epsilon n/2$. An identical argument with the binary functions \[g_\lambda(i) = \mathbbm{1}\left[\sigma_i\left(\left\langle \tilde{X}_i, \lambda - (1-\epsilon)\lambda^*\right\rangle - \epsilon y_i\right) \geq 0\right]\] proves that with probability at least $1-\eta/3$, for every $\lambda \in R$, the number of $i \in [n] \setminus (B_M \cup B_{\delta M})$ such that $\sigma_i(\langle \tilde{X}_i,\lambda\rangle - y_i) < (1-\epsilon)\sigma_i(\langle \tilde{X}_i,\lambda\rangle - y_i)$ is at most $\epsilon n/2$. Overall, by the union bound (taking into account the event that $|S'| < n'm/(2n)$), with probability at least $1-\eta$ it holds that for every $\lambda \in R$, the number of $i \in [n] \setminus(B_M \cup B_{\delta M})$ such that either $\sigma_i(\langle \tilde{X}_i,\lambda\rangle - y_i) > (1+\epsilon)\sigma_i(\langle \tilde{X}_i,\lambda\rangle - y_i)$ or $\sigma_i(\langle \tilde{X}_i,\lambda\rangle - y_i) < (1-\epsilon)\sigma_i(\langle \tilde{X}_i,\lambda\rangle - y_i)$ is at most $\epsilon n$ as claimed.
\end{proof}

It remains to bound the VC dimension of the function class.

\begin{lemma}\label{lemma:vc-bound}
For any $n,d>0$ and any $(X_i,y_i)_{i \in [n]} \subset \RR^d \times \RR$, the VC dimension of the concept class $\mathcal{C} = \{h_\lambda: [n] \to \{0,1\}|\lambda \in \RR^d\}$, where \[h_\lambda(i) = \mathbbm{1}[\langle X_i,\lambda\rangle + y_i \leq 0],\] is at most $d+1$.
\end{lemma}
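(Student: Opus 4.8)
The plan is to recognize $\mathcal{C}$ as a subclass of the homogeneous halfspaces in $\RR^{d+1}$ via a standard lifting, and then bound the VC dimension of that larger class by a linear-dependence argument. Concretely, for each $i$ set $Z_i = (X_i, y_i) \in \RR^{d+1}$, and for each $\lambda \in \RR^d$ set $\tilde\lambda = (\lambda, 1) \in \RR^{d+1}$. Then $\langle X_i,\lambda\rangle + y_i = \langle Z_i, \tilde\lambda\rangle$, so $h_\lambda(i) = \mathbbm{1}[\langle Z_i,\tilde\lambda\rangle \le 0]$. Hence every concept in $\mathcal{C}$ lies in the class $\mathcal{H} = \{\,i \mapsto \mathbbm{1}[\langle Z_i, w\rangle \le 0] : w \in \RR^{d+1}\,\}$ of homogeneous halfspaces (restricting $w$ to the affine slice $w_{d+1} = 1$ only shrinks the class). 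Since VC dimension is monotone under inclusion, it suffices to prove $\text{vc}(\mathcal{H}) \le d+1$.

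Suppose toward contradiction that $\mathcal{H}$ shatters an index set $I = \{i_1,\dots,i_{d+2}\}$ of size $d+2$. The lifted vectors $Z_{i_1},\dots,Z_{i_{d+2}}$ are $d+2$ vectors living in the $(d+1)$-dimensional space $\RR^{d+1}$, so they are linearly dependent: there exist reals $c_1,\dots,c_{d+2}$, not all zero, with $\sum_j c_j Z_{i_j} = 0$. Partition the indices according to the sign of their coefficient, $P = \{j : c_j > 0\}$ and $N = \{j : c_j < 0\}$, noting that at least one of $P,N$ is nonempty. The contradiction will come from choosing a dichotomy incompatible with this dependence, using the key identity $0 = \langle \sum_j c_j Z_{i_j}, w\rangle = \sum_j c_j \langle Z_{i_j}, w\rangle$, valid for every $w$.

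If both $P$ and $N$ are nonempty, consider the labeling assigning $1$ to $\{i_j : j \in P\}$ and $0$ to $\{i_j : j \in N\}$ (other indices arbitrary); by shattering there is a realizing $w$, with $\langle Z_{i_j},w\rangle \le 0$ for $j \in P$ and $\langle Z_{i_j},w\rangle > 0$ for $j \in N$. Then the first sum $\sum_{j\in P} c_j\langle Z_{i_j},w\rangle$ is $\le 0$ and the second sum $\sum_{j\in N} c_j\langle Z_{i_j},w\rangle$ is strictly negative, forcing the total to be strictly negative, contradicting that it equals $0$. If instead one side is empty, say $N = \emptyset$ so that $\sum_{j\in P} c_j Z_{i_j} = 0$ with all $c_j > 0$, then realizing the all-$0$ labeling on $P$ forces $\langle Z_{i_j},w\rangle > 0$ for every $j \in P$, whence $\sum_{j\in P} c_j\langle Z_{i_j},w\rangle > 0$, again contradicting that this sum is $0$. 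Either way $\mathcal{H}$ cannot shatter $d+2$ points, so $\text{vc}(\mathcal{C}) \le \text{vc}(\mathcal{H}) \le d+1$. The only delicate point — and the step I would be most careful about — is the closed/open boundary convention ($\le 0$ versus $> 0$): it is precisely what forces the split on whether both $P$ and $N$ are nonempty, since in the one-sided case the natural labeling $P \mapsto 1$ is in fact realizable and one must instead rule out the all-$0$ labeling.
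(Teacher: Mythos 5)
Your proof is correct and follows essentially the same route as the paper: both lift each sample to $Z_i = (X_i,y_i) \in \RR^{d+1}$ and observe that $\mathcal{C}$ sits inside the class of homogeneous halfspaces $\mathbbm{1}[\langle Z_i,w\rangle \le 0]$ in $\RR^{d+1}$, whose VC dimension is $d+1$. The only difference is that the paper cites this last fact as standard, while you prove it from scratch via the linear-dependence (Radon-type) argument, handling the $\le$ versus $>$ boundary convention correctly in both the two-sided and one-sided cases.
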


\begin{proof}
Note that extending the domain of the concepts cannot decrease the VC dimension. Thus, if we define $\mathcal{C}' = \{h'_\lambda: \RR^d\times \RR \to \{0,1\}: \lambda \in \RR^d\}$ by \[h'_\lambda(X,y) = \mathbbm{1}[\langle X,\lambda\rangle + y \leq 0],\] then $\text{vc}(\mathcal{C}') \geq \text{vc}(\mathcal{C})$. But all of the concepts in $\mathcal{C}'$ are affine halfspaces in $d+1$ dimensions, so $\text{vc}(\mathcal{C}') \leq d+1$.
\end{proof}

\subsection{Algorithm}

For every region $R$ we can identify some arbitrary representative $\lambda_0(R) \in R$ and a sign pattern $\sigma \in \{-1,1\}^n$ such that $\sigma_i=1$ implies $\langle \tilde{X}_i,\lambda \rangle - y_i \geq 0$ for all $\lambda \in R$, and $\sigma_i = -1$ implies $\langle \tilde{X}_i,\lambda\rangle - y_i \leq 0$ for all $\lambda \in R$. Ideally, we want to compute $V = \max_R V_R$ where \begin{equation} V_R := \sup_{g \in \RR^n, \lambda \in R} \left\{\begin{aligned}&\sum_{i \in [n]} \frac{g_i}{\langle \tilde{X}_i, \lambda\rangle - y_i}
\end{aligned}
\;\middle|\;
\begin{aligned} 
&X^Tg = 0 &\\ 
&0 \leq g_i \leq \langle \tilde{X}_i,\lambda\rangle - y_i & \forall i \in [n]: \sigma_i = 1 \\
&\langle \tilde{X}_i,\lambda\rangle - y_i \leq g_i \leq 0 & \forall i \in [n]: \sigma_i = -1
\end{aligned}\right\}\end{equation}
However, this is not a linear program. Let $B_M = B_M(R) \subseteq [n]$ be the set of $i \in [n]$ such that $|\langle \tilde{X}_i,\lambda_0(R)\rangle - y_i| > M$, and let $B_{\delta M} = B_{\delta M}(R) \subseteq [n]$ be the set of $i \in [n]$ such that $|\langle \tilde{X}_i,\lambda_0(R) - y_i| < \delta M/\sqrt{n}$. For each region $R$ we compute
\begin{align} 
&(\hat{g}(R), \hat{\lambda}(R)) \\
&:= \argsup_{g \in \RR^n, \lambda \in R} \left\{\begin{aligned}&\sum_{i \in [n] \setminus (B_M \cup B_{\delta M})} \min\left(\frac{g_i}{\langle \tilde{X}_i, \lambda_0(R)\rangle - y_i}, 1\right)
\end{aligned}
\;\middle|\;
\begin{aligned} 
&X^Tg = 0 &\\ 
&0 \leq g_i \leq \langle \tilde{X}_i,\lambda\rangle - y_i \\&\quad \forall i \in [n]: \sigma_i = 1 \\
&\langle \tilde{X}_i,\lambda\rangle - y_i \leq g_i \leq 0 \\&\quad \forall i \in [n]: \sigma_i = -1
\end{aligned}\right\}
\label{eq:region-lp}
\end{align}
and define \[\hat{V}(R) = \sum_{i \in [n]} \frac{\hat{g}(R)_i}{\langle \tilde{X}_i,\hat{\lambda}(R)\rangle - y_i}\]
with the convention that $0/0 = 1$. Note that although there is a $\min$ in the objective of Program~\ref{eq:region-lp}, it is equivalent to a linear program by a standard transformation: we can introduce variables $s_1,\dots,s_n$ with the constraints $s_i \leq 1$ and $s_i \leq g_i/(\langle X_i,\lambda_0(R)\rangle - y_i)$, and change the objective to maximize $\sum_{i \in [n]} s_i$. That is, the following program also computes $(\hat{g}(R),\hat{\lambda}(R))$:
\begin{align}
&(\hat{g}(R),\hat{\lambda}(R)) \\
&= \argsup_{g\in\RR^n,\lambda \in R, s\in \RR^n}\left\{\begin{aligned}&\sum_{i \in [n] \setminus (B_M \cup B_{\delta M})} s_i
\end{aligned}
\;\middle|\;
\begin{aligned} 
&X^Tg = 0 &\\ 
&0 \leq g_i \leq \langle X_i,\lambda\rangle - y_i & \forall i \in [n]: \sigma_i = 1 \\
&\langle X_i,\lambda\rangle - y_i \leq g_i \leq 0 & \forall i \in [n]: \sigma_i = -1 \\
&s_i \leq 1 & \forall i \in [n]\setminus (B_M\cup B_{\delta M}) \\
&s_i \leq g_i/(\langle X_i,\lambda_0(R)\rangle - y_i) & \forall i \in [n] \setminus (B_M \cup B_{\delta M})
\end{aligned}\right\}
\label{eq:final-lp}
\end{align}

\begin{lemma}\label{lemma:estimate-error}
Let $(w^*,\lambda^*)$ be an optimal solution to Program~\ref{eq:sns-prelim-formulation}. Suppose that the event of Lemma~\ref{lemma:partition-properties} holds (with respect to $\lambda^*$). Then either $\max_R B_{\delta M}(R) > \epsilon n$, or \[V - 12\epsilon n - 1 \leq \max_R \hat{V}(R) \leq V.\]
\end{lemma}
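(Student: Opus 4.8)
The plan is to treat the two inequalities separately, as the upper bound $\max_R \hat{V}(R) \leq V$ is essentially immediate while the lower bound is the crux. For the upper bound, I would note that for each region $R$ the optimizer $(\hat{g}(R),\hat{\lambda}(R))$ of Program~\ref{eq:final-lp} is in particular feasible for the constraints $X^Tg = 0$ together with the box constraints $0 \leq g_i \leq \langle\tilde{X}_i,\lambda\rangle - y_i$ (and its reverse for $\sigma_i = -1$). Setting $w_i = \hat{g}(R)_i/(\langle\tilde{X}_i,\hat{\lambda}(R)\rangle - y_i)$ with the convention $0/0 = 1$, these constraints force $w \in [0,1]^n$ and $X^T(w\star(\tilde{X}\hat{\lambda}(R) - y)) = 0$, so $(w,\hat{\lambda}(R))$ is feasible for Program~\ref{eq:sns-sup-formulation}. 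Since $\hat{V}(R) = \sum_i w_i$ by definition, this gives $\hat{V}(R) \leq V$ for every $R$, with no appeal to anti-concentration.

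For the lower bound, let $R$ be the region containing $\lambda^*$, write $r_i^* = \langle\tilde{X}_i,\lambda^*\rangle - y_i$ and $g_i^* = w_i^* r_i^*$, and (since the partition includes the hyperplanes $\langle\tilde{X}_i,\lambda\rangle - y_i = \pm M$ and $\pm\delta M/\sqrt{n}$) observe that $B_M$ and $B_{\delta M}$ are constant across $R$, so they agree whether defined via $\lambda^*$ or via $\lambda_0(R)$. I would then compare three functionals of a feasible pair: the true value $F(g,\lambda) = \sum_i g_i/(\langle\tilde{X}_i,\lambda\rangle - y_i) = \sum_i w_i$ and the LP objective $L(g,\lambda) = \sum_{i\notin B_M\cup B_{\delta M}}\min(g_i/(\langle\tilde{X}_i,\lambda_0(R)\rangle - y_i),\, 1)$. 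The argument is a three-step chain: (i) $L(\hat{g}(R),\hat{\lambda}(R)) \geq L(g^*,\lambda^*)$ by LP-optimality, since $(g^*,\lambda^*)$ is LP-feasible with $\lambda^* \in R$; (ii) $L(g^*,\lambda^*) \geq V - O(\epsilon n) - 1$; and (iii) $F(\hat{g}(R),\hat{\lambda}(R)) \geq L(\hat{g}(R),\hat{\lambda}(R)) - O(\epsilon n)$. Chaining these and using $\hat{V}(R) = F(\hat{g}(R),\hat{\lambda}(R))$ yields $\max_R\hat{V}(R) \geq V - O(\epsilon n) - 1$, which after accounting for constants becomes the claimed $V - 12\epsilon n - 1$.

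The two key ingredients are as follows. First, to control the samples dropped from the objective, I would bound their total weight. Since we are in the branch $\max_R |B_{\delta M}(R)| \leq \epsilon n$ (otherwise the first disjunct of the statement already holds), $\sum_{i\in B_{\delta M}} w_i^* \leq \epsilon n$ directly. For $B_M$, the crucial observation is that the constraint $X^T(w^*\star(\tilde{X}\lambda^* - y)) = 0$ says exactly that $\beta^* = (0,\lambda^*)$ is a \emph{weighted} OLS solution, so it beats $\beta^{(0)}$ under the weighted objective; as $w_i^* \leq 1$, this gives $\sum_i w_i^* (r_i^*)^2 \leq \sum_i (\langle X_i,\beta^{(0)}\rangle - y_i)^2 = M^2$. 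Because every $i \in B_M$ has $(r_i^*)^2 > M^2$, we get $\sum_{i\in B_M} w_i^* \leq 1$, which is the source of the additive $+1$. Second, for the multiplicative distortions in (ii) and (iii), I would invoke Lemma~\ref{lemma:partition-properties} with reference point $\lambda^*$, applied to both $\lambda_0(R)$ and $\hat{\lambda}(R)$: for all but $O(\epsilon n)$ indices $i\notin B_M\cup B_{\delta M}$, the ratio $(\langle\tilde{X}_i,\lambda\rangle - y_i)/r_i^*$ lies in $[1-\epsilon,1+\epsilon]$, so replacing the denominator $\lambda_0(R)$ (or $\hat{\lambda}(R)$) by $\lambda^*$ costs only a $(1\pm\epsilon)$ factor per good term, i.e. $O(\epsilon n)$ overall.

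The hard part will be the bookkeeping in steps (ii) and (iii), where several $O(\epsilon n)$-sized ``bad sets'' and several $(1\pm\epsilon)$ multiplicative factors must be combined to land on the stated constant, while correctly handling the $\min(\cdot,1)$ truncation and the $0/0 = 1$ convention. The truncation is essential because $\check{w}_i := \hat{g}(R)_i/(\langle\tilde{X}_i,\lambda_0(R)\rangle - y_i)$ can exceed $1$ even though the reconstructed weight $w_i \leq 1$; I would verify the per-term inequality $w_i \geq \min(\check{w}_i,1) - O(\epsilon)$ on good indices in the two cases $\check{w}_i \leq 1$ and $\check{w}_i > 1$ separately. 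I expect no single step to be deep, but ensuring that each of the three error sources---weight on $B_M \cup B_{\delta M}$, indices with distorted residual ratio, and the per-term multiplicative slack---contributes a clean $O(\epsilon n)$ (with the $B_M$ contribution isolated as the lone $+1$) is where the effort lies.
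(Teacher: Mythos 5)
Your proposal is correct and follows essentially the same route as the paper's proof: the same reconstruction of feasible weights for the upper bound, and for the lower bound the same three-step chain through LP-optimality, with the identical decomposition into $B_{\delta M}$ (bounded by $\epsilon n$), $B_M$ (total weight at most $1$ via the weighted-OLS optimality of $\lambda^*$ against $\beta^{(0)}$, yielding the $+1$), and the approximation bad sets from Lemma~\ref{lemma:partition-properties} applied at both $\lambda_0(R)$ and $\hat{\lambda}(R)$. The only differences are cosmetic (additive versus multiplicative bookkeeping of the $(1\pm\epsilon)$ distortions), so filling in the constants would reproduce the paper's argument.
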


\begin{proof}
For the RHS, observe that for any region $R$, if we define $w \in \RR^n$ by $w_i = \hat{g}(R)_i / (\langle X_i, \hat{\lambda}(R)\rangle - y_i)$, with the convention that $0/0 = 1$, then the second and third constraints of Program~\ref{eq:region-lp} ensure that $w \in [0,1]^n$. Additionally, the first constraint ensures that \[X^T \sum_{i \in [n]} w_i (\langle \tilde{X}_i, \hat{\lambda}(R)\rangle - y_i) = 0.\]
Thus, $(w,\hat{\lambda}(R))$ is feasible for the original problem. This means that $\hat{V}(R) = \norm{w}_1 \leq V$.

To prove the lower bound, suppose that $\max_R B_{\delta M}(R) \leq \epsilon n$. Consider the specific region $R$ containing the optimal parameter vector $\lambda^*$ (if there are multiple choose any), and let $B_M = B_M(R)$ and $B_{\delta M} = B_{\delta M}(R)$. Define $g^* \in \RR^n$ by $g^*_i = w^*_i (\langle X_i, \lambda^*\rangle - y_i)$. Let $B_\text{apx} \subset [n] \setminus (B_M \cup B_{\delta M})$ be the set of $i \in [n] \setminus (B_M \cup B_{\delta M})$ such that \[\left[\frac{1}{1+\epsilon}(\langle \tilde{X}_i,\lambda_0(R)\rangle - y_i) > \langle \tilde{X}_i, \lambda^*\rangle - y_i \right]\lor \left[\langle \tilde{X}_i, \lambda^*\rangle - y_i > (1+\epsilon)(\langle \tilde{X}_i,\lambda_0(R)\rangle - y_i)\right].\]
By Lemma~\ref{lemma:partition-properties} (applied specifically to $\lambda = \lambda_0(R)$), we know $|B_\text{apx}| \leq \epsilon n$. By assumption, we know that $|B_{\delta M}| \leq \epsilon n$. And we know that if $i \in B_M$ then $|\langle \tilde{X}_i,\lambda_0(R)\rangle - y_i| \geq M$, so the same holds for all $\lambda \in R$ and in particular for $\lambda^*$. Thus
\begin{align*}
\sum_{i \in B_M} w^*_i 
&\leq \frac{1}{M^2}\sum_{i \in B_M} w^*_i (\langle \tilde{X}_i,\lambda^*\rangle - y_i)^2 \\
&\leq \frac{1}{M^2}\sum_{i \in [n]} w^*_i (\langle \tilde{X}_i,\lambda^*\rangle - y_i)^2 \\
&\leq \frac{1}{M^2} \sum_{i \in [n]} w^*_i(\langle X_i, \beta^{(0)}\rangle - y_i)^2 \\
&\leq \frac{1}{M^2}\norm{X\beta^{(0)} - y}_2^2 \\
&= 1
\end{align*}
where the third inequality is because $\lambda^* \in \OLS(X,y,w^*)$, and the equality is by definition of $M$. Finally, for any $i \in [n] \setminus (B_M \cup B_{\delta M} \cup B_\text{apx})$, we have \[\min\left(\frac{g^*_i}{\langle X_i,\lambda_0(R)\rangle - y_i}, 1\right) \geq \min\left(\frac{1}{1+\epsilon}\frac{g^*_i}{\langle X_i,\lambda^*\rangle - y_i}, 1\right) = \frac{1}{1+\epsilon}\frac{g_i^*}{\langle X_i,\lambda^*\rangle - y_i}.\] Therefore
\begin{align*}
V
&= \sum_{i \in [n]} w^*_i \\
&= \sum_{i \in [n]\setminus (B_M\cup B_{\delta M} \cup B_\text{apx})} \frac{g^*_i}{\langle X_i,\lambda^*\rangle - y_i} + \sum_{i \in B_M} \frac{g^*}{\langle X_i,\lambda^*\rangle - y_i} + \sum_{i \in B_{\delta M}} w^*_i + \sum_{i \in B_\text{apx}} w^*_i \\
&\leq (1+\epsilon)\sum_{i \in [n]\setminus (B_M\cup B_{\delta M} \cup B_\text{apx})} \min\left(\frac{g^*_i}{\langle X_i,\lambda_0(R)\rangle - y_i}, 1\right) + 1 + 2\epsilon n \\
&\leq (1+\epsilon)\sum_{i \in [n]\setminus (B_M\cup B_{\delta M})} \min\left(\frac{g^*_i}{\langle X_i,\lambda_0(R)\rangle - y_i}, 1\right) + 1 + 2\epsilon n.
\end{align*}
The sum in the last line above is precisely the objective of Program~\ref{eq:region-lp} at $(g^*,\lambda^*)$. Moreover, $(g^*,\lambda^*)$ is feasible for Program~\ref{eq:region-lp} because $X^T g^* = X^T (w^* \star (\langle \tilde{X}_i,\lambda^* \rangle - y_i)) = 0$ and $g^*_i/(\langle \tilde{X}_i,\lambda^*\rangle - y_i) = w_i \in [0,1]$ for all $i \in [n]$. Thus, the optimal solution $(\hat{g}(R),\hat{\lambda}(R))$ satisfies the inequality 
\[\sum_{i \in [n] \setminus (B_M\cup B_{\delta M})} \min\left(\frac{g^*_i}{\langle \tilde{X}_i,\lambda_0(R)\rangle - y_i}, 1\right) \leq \sum_{i \in [n] \setminus (B_M\cup B_{\delta M})} \min\left(\frac{\hat{g}(R)}{\langle \tilde{X}_i,\lambda_0(R)\rangle - y_i}, 1\right).\]
Finally, let $\hat{B}_\text{apx} \subseteq [n]$ be the set of $i \in [n]\setminus (B_M\cup B_{\delta M})$ such that \[\left[\frac{1}{(1+\epsilon)^2}(\langle \tilde{X}_i,\lambda_0(R)\rangle - y_i) > \langle \tilde{X}_i, \hat{\lambda}(R)\rangle - y_i\right]\lor\left[\langle \tilde{X}_i, \hat{\lambda}(R)\rangle - y_i > (1+\epsilon)^2(\langle X_i,\lambda_0(R)\rangle - y_i)\right].\]
By Lemma~\ref{lemma:partition-properties}, the residuals at $\lambda_0(R)$ multiplicatively approximate the residuals at $\lambda^*$ except for $\epsilon n$ samples, and the residuals at $\hat{\lambda}(R)$ also multiplicatively approximate the residuals at $\lambda^*$ except for $\epsilon n$ samples. Thus, $|\hat{B}_\text{apx}| \leq 2\epsilon n$, so we have
\begin{align*}
\sum_{i \in [n] \setminus B_M} \min\left(\frac{\hat{g}(R)}{\langle X_i,\lambda_0(R)\rangle - y_i}, 1\right)
&= \sum_{i \in [n] \setminus (B_M \cup B_{\delta M} \cup \hat{B}_\text{apx})} \min\left(\frac{\hat{g}(R)}{\langle X_i,\lambda_0(R)\rangle - y_i}, 1\right) \\
&\qquad+ \sum_{i \in B_{\delta M} \cup \hat{B}_\text{apx}} \min\left(\frac{\hat{g}(R)}{\langle X_i,\lambda_0(R)\rangle - y_i}, 1\right) \\
&\leq \sum_{i \in [n] \setminus (B_M \cup B_{\delta M} \cup \hat{B}_\text{apx})} \min\left(\frac{\hat{g}(R)}{\langle X_i,\lambda_0(R)\rangle - y_i}, 1\right) + 2\epsilon n \\
&\leq (1+\epsilon)^2\sum_{i \in [n] \setminus (B_M \cup B_{\delta M} \cup \hat{B}_\text{apx})} \frac{\hat{g}(R)}{\langle X_i,\hat{\lambda}(R)\rangle - y_i} + 2\epsilon n \\
&\leq (1+\epsilon)^2\hat{V}(R) + 2\epsilon n.
\end{align*}
We conclude that $V \leq (1+\epsilon)^3\hat{V}(R) + (1+\epsilon)2\epsilon n + 1 + 2\epsilon n$. Since $\hat{V}(R) \leq n$, simplifying gives $V \leq \hat{V}(R) + 12\epsilon n + 1$ as claimed.
\end{proof}

Using the above lemma in conjunction with Lemma~\ref{lemma:partition-properties} immediately gives the desired theorem (from which Theorem~\ref{theorem:lp-algorithm-intro} is a direct corollary).

\begin{theorem}\label{theorem:lp-algorithm-main}
For any $\epsilon,\delta,\eta>0$, there is an algorithm \lpalg{} with time complexity \[\left(n + \frac{Cd}{\epsilon^2}\log\frac{n}{\delta}\log\frac{1}{\epsilon\eta}\right)^{d+O(1)}\] which, given $\epsilon$, $\delta$, $\eta$, and arbitrary samples $(X_i, y_i)_{i=1}^n$, either outputs $\perp$ or an estimate $\hat{S}$. If the output is $\perp$, then the samples do not satisfy $(\epsilon,\delta)$-anti-concentration (Assumption~\ref{assumption:ac-A}). Moreover, the probability that the output is some $\hat{S}$ such that \[|\hat{S} - \sns(X,y)| > 12\epsilon n + 1\] is at most $\eta$.
\end{theorem}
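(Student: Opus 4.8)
The plan is to assemble the algorithm from the partitioning scheme of Section~\ref{section:partitioning} and then read off both guarantees directly from Lemma~\ref{lemma:estimate-error} and Lemma~\ref{lemma:partition-properties}. Concretely, \lpalg{} first computes $M = \norm{X\beta^{(0)} - y}_2$ and draws a uniformly random size-$m$ subset $S \subseteq [n]$ with $m = \frac{C}{\epsilon}(d\log\frac{1}{\epsilon} + \log\frac{1}{\eta})$ as in Lemma~\ref{lemma:partition-properties}. It enumerates the regions $R_1,\dots,R_p$ cut out by $\mathcal{E}$ using Lemma~\ref{lemma:enumerate-regions}, and for each region picks a representative $\lambda_0(R)$ and compatible sign pattern $\sigma$, computes the sets $B_M(R)$ and $B_{\delta M}(R)$, and solves the linear program (\ref{eq:final-lp}) to obtain $\hat{V}(R)$. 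If $\max_R |B_{\delta M}(R)| > \epsilon n$, the algorithm outputs $\perp$; otherwise it outputs $\hat{S} = n - \max_R \hat{V}(R)$.

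For soundness of the $\perp$ output, I would observe that $B_{\delta M}(R)$ is exactly an anti-concentration witness. Indeed, for $\beta = (0, \lambda_0(R)) \in \RR^d$ we have $\langle X_i, \beta\rangle - y_i = \langle \tilde{X}_i, \lambda_0(R)\rangle - y_i$, so the condition $|\langle \tilde{X}_i, \lambda_0(R)\rangle - y_i| < \delta M/\sqrt{n}$ defining $B_{\delta M}(R)$ coincides with the condition in Assumption~\ref{assumption:ac-A} for this particular $\beta$. Hence $\max_R |B_{\delta M}(R)| > \epsilon n$ exhibits a $\beta$ violating $(\epsilon,\delta)$-anti-concentration, so whenever the algorithm outputs $\perp$ the samples genuinely fail the assumption.

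For the accuracy guarantee, the key identity is $\sns(X,y) = n - V$ from formulation (\ref{eq:sns-sup-formulation}), where $V = \max_R V_R$. Let $\lambda^*$ be an optimal parameter for Program~\ref{eq:sns-prelim-formulation}; crucially $\lambda^*$ is determined by the fixed data and is independent of the random draw $S$, so I can fix $\lambda^*$ first and then invoke Lemma~\ref{lemma:partition-properties}, whose event holds with probability at least $1-\eta$. Conditioned on this event, Lemma~\ref{lemma:estimate-error} gives a dichotomy: either $\max_R |B_{\delta M}(R)| > \epsilon n$, in which case the algorithm outputs $\perp$ and no accuracy claim is required, or $V - 12\epsilon n - 1 \leq \max_R \hat{V}(R) \leq V$. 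In the latter case $\hat{S} = n - \max_R \hat{V}(R)$ satisfies $|\hat{S} - \sns(X,y)| = |V - \max_R \hat{V}(R)| \leq 12\epsilon n + 1$. Thus the only way the algorithm returns a numerical estimate violating the error bound is if the event of Lemma~\ref{lemma:partition-properties} fails, which happens with probability at most $\eta$.

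Finally, for the runtime I would bound $|\mathcal{E}| = O(n + m\log_{1+\epsilon}(\sqrt n/\delta))$ and use $\log_{1+\epsilon}(\sqrt n/\delta) = O(\frac{1}{\epsilon}\log\frac{n}{\delta})$ together with the choice of $m$ to get $|\mathcal{E}| = O(n + \frac{d}{\epsilon^2}\log\frac{n}{\delta}\log\frac{1}{\epsilon\eta})$; Lemma~\ref{lemma:enumerate-regions} then yields enumeration time $|\mathcal{E}|^{d+O(1)}$, and since each per-region LP costs $\poly(n,|\mathcal{E}|) = |\mathcal{E}|^{O(1)}$ the total runtime remains $|\mathcal{E}|^{d+O(1)}$, matching the claimed bound. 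I expect the only genuinely delicate step to be the probabilistic bookkeeping in the accuracy argument --- specifically, ensuring $\lambda^*$ is fixed before drawing $S$ so that Lemma~\ref{lemma:partition-properties} applies, and routing the $\max_R |B_{\delta M}(R)| > \epsilon n$ branch to the $\perp$ output so that it never contributes to the failure probability of a numerical estimate. Everything else is a direct assembly of the stated lemmas.
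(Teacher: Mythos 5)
Your proposal is correct and follows essentially the same route as the paper: the paper's proof assembles the algorithm identically and simply cites Lemmas~\ref{lemma:partition-properties} and~\ref{lemma:estimate-error} for correctness, with the same $|\mathcal{E}|$ accounting for the runtime. You additionally spell out two details the paper leaves implicit --- that $B_{\delta M}(R)$ with $\beta = (0,\lambda_0(R))$ witnesses failure of Assumption~\ref{assumption:ac-A}, and that $\lambda^*$ is fixed independently of the random subset $S$ so Lemma~\ref{lemma:partition-properties} applies --- both of which are exactly the right justifications.
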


\begin{proof}
The algorithm \lpalg{} does the following (see Appendix~\ref{app:pseudocode} for pseudocode). Let $m = C\epsilon^{-1}(d\log\epsilon^{-1} + \log\eta^{-1})$ where $C$ is the constant specified in Lemma~\ref{lemma:partition-properties}, and let $M = \norm{X\beta^{(0)} - y}_2$ where $\beta^{(0)} \in \OLS(X,y,\mathbbm{1})$. Let $\mathcal{E}$ be the set of equations described in Section~\ref{section:partitioning}, with respect to a uniformly random subset $S\subseteq [n]$ of size $m$. Let $R_1,\dots,R_p$ be the closed connected regions cut out by $\mathcal{E}$. By Lemma~\ref{lemma:enumerate-regions}, we can enumerate $R_1,\dots,R_p$ in time $O(|\mathcal{E}|)^{d+O(1)}$; each is described by at most $|\mathcal{E}|$ linear constraints. For each $R$, we can find a representative $\lambda_0(R) \in R$ by solving a feasibility LP on $R$, and by solving $n$ LPs on $R$, we can find a sign pattern $\sigma \in \{-1,1\}^n$ such that $\sigma_i=1$ implies $\langle \tilde{X}_i,\lambda\rangle - y_i \geq 0$ for all $\lambda \in R$, and $\sigma_i = -1$ implies $\langle \tilde{X}_i,\lambda\rangle - y_i \leq 0$ for all $\lambda \in R$. We also compute $B_M = \{i \in [n]: |\langle \tilde{X}_i,\lambda_0(R)\rangle - y_i|>M\}$ and $B_{\delta M}\{i \in [n]: |\langle \tilde{X}_i,\lambda_0(R)\rangle - y_i|<\delta M/\sqrt{n}\}$. If $|B_{\delta M}| > \epsilon n$, then return $\perp$. Otherwise, compute $(\hat{g}(R),\hat{\lambda}(R))$, the solution to Program~\ref{eq:final-lp}, and compute $\hat{V}(R) = \sum_{i=1}^n \frac{\hat{g}(R)_i}{\langle \tilde{X}_i,\hat{\lambda}(R)\rangle - y_i}$. Finally, after iterating through all regions, output $\max_{i \in [p]} \hat{V}(R_i)$.

Correctness follows from Lemmas~\ref{lemma:partition-properties} and~\ref{lemma:estimate-error}. For each region, the time complexity is $\poly(|\mathcal{E}|)$. As there are $O(|\mathcal{E}|)^d$ regions, the overall time complexity is $O(|\mathcal{E}|)^{d+O(1)}$. Observing that $|\mathcal{E}| = O(n + m\epsilon^{-1}\log(n/\delta))$ completes the proof.
\end{proof}

\section{Proof of Theorem~\ref{theorem:net-algorithm-intro}}\label{section:net}

In the previous section, we approximated the nonlinear program (\ref{eq:sns-prelim-formulation}) by partitioning $\RR^{d-1}$ into regions where the program could be approximated by a linear program. This approach had the disadvantage of requiring that in each region, the signs of the residuals $\langle \tilde{X}_i,\lambda\rangle - y_i$ were constant (so that the program could be reparametrized to have linear constraints), which necessitates making $\Omega(n^d)$ regions. In this section, we instead make use of the fact that for fixed $\lambda$, Program~\ref{eq:sns-prelim-formulation} is a linear program and therefore efficiently solvable. Our algorithm \netalg{} is simply to (carefully) choose a finite subset $\mathcal{N} \subset \RR^{d-1}$, solve the linear program for each $\lambda \in \mathcal{N}$, and pick the best answer. 


The following lemma describes how to compute $\mathcal{N}$, which will be an $\epsilon$-net over $\RR^{d-1}$ in an appropriate metric.

\begin{lemma}\label{lemma:residual-net}
For any $(X_i,y_i)_{i=1}^n$ and $\gamma > 0$, there is a set $\mathcal{N} \subseteq \RR^{d-1}$ of size $|\mathcal{N}| \leq (2\sqrt{d}/\gamma)^d$ such that for any $\lambda \in \RR^{d-1}$ with $\tilde{X}\lambda \neq y$, there is some $\lambda' \in \mathcal{N}$ with $\tilde{X}\lambda' \neq y$, and some $\sigma \in \{-1,1\}$, such that \[\norm{\frac{\tilde{X}\lambda - y}{\norm{\tilde{X}\lambda-y}_2} - \frac{\tilde{X}\lambda' - y}{\norm{\tilde{X}\lambda' - y}_2}}_2 \leq \gamma.\]
Moreover, $\mathcal{N}$ can be computed in time $O(\sqrt{d}/\gamma)^d$.
\end{lemma}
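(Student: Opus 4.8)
The plan is to exploit the fact that, although each residual vector $\tilde X\lambda - y$ lives in $\RR^n$, its \emph{direction} is confined to a sphere of intrinsic dimension at most $d-1$. Concretely, every residual lies in the linear subspace $W := \vspan\{\text{columns of }\tilde X,\ y\} \subseteq \RR^n$, which has $\dim W \le d$, so every normalized residual $\frac{\tilde X\lambda - y}{\norm{\tilde X\lambda - y}_2}$ lies in $\mathcal{S}^{n-1}\cap W$. Fixing an orthonormal basis of $W$ gives a linear isometry $\iota:\RR^m \to W$ with $m = \dim W \le d$, under which these normalized residuals correspond to unit vectors in $\RR^m$. Thus it suffices to $\gamma$-net the sphere $\mathcal{S}^{m-1}$ and transport the net through $\iota$.

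First I would construct the sphere net by a grid argument, which is the source of the $\sqrt d$ factor. Taking the axis-aligned grid of spacing $\gamma/\sqrt m$ inside $[-1,1]^m$, every unit vector is within Euclidean distance $(\gamma/\sqrt m)\cdot\frac{\sqrt m}{2} = \gamma/2$ of a grid point, whose norm is then within $\gamma/2$ of $1$; renormalizing that grid point yields a point of $\mathcal{S}^{m-1}$ within $\gamma$ of the target. The number of grid points is at most $(2\sqrt m/\gamma+1)^m$, which (using $m\le d$) is bounded by $(2\sqrt d/\gamma)^d$ up to the harmless $+1$, and the construction runs in $O(\sqrt d/\gamma)^d$ time. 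Pushing this net forward by $\iota$ produces a $\gamma$-net $\mathcal M \subseteq \mathcal{S}^{n-1}\cap W$ covering all achievable residual directions; to leave room for the next step, I would actually build $\mathcal M$ as a $(\gamma/2)$-net.

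The crux—and the step I expect to be the main obstacle—is to turn each net direction $v\in\mathcal M$ back into a genuine parameter vector $\lambda'\in\RR^{d-1}$ whose residual realizes the direction $v$, while also guaranteeing $\tilde X\lambda'\ne y$. Writing $\bar X = [\,\tilde X \mid -y\,]$, so that $\tilde X\lambda - y = \bar X(\lambda,1)$ and $W = \operatorname{col}(\bar X)$, I would solve $\bar X\theta = v$ for some $\theta\in\RR^d$. When $\theta_d\ne 0$, setting $\lambda' = \theta_{1:d-1}/\theta_d$ makes $\tilde X\lambda' - y = v/\theta_d$ exactly parallel to $v$, so its normalization equals $\sigma v$ with $\sigma = \sgn(\theta_d)\in\{-1,1\}$; this is exactly why the sign $\sigma$ appears in the statement, since pulling a direction back through $\bar X$ recovers it only up to sign.

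The genuine difficulty is the ``horizon'' case $\theta_d = 0$, i.e. $v = \tilde X\mu \in \operatorname{col}(\tilde X)$: such directions are approached by true residuals only as $\norm{\lambda}_2\to\infty$, where the normalization map $\lambda\mapsto (\tilde X\lambda - y)/\norm{\tilde X\lambda - y}_2$ fails to be Lipschitz, so one cannot simply net in $\lambda$-space. I would resolve this by taking $\lambda' = t\mu$ for a sufficiently large scalar $t$: since $\norm{v}_2 = 1$, the residual $t v - y$ satisfies $\frac{tv - y}{\norm{tv-y}_2}\to v$ as $t\to\infty$, so some finite $t$ brings it within $\gamma/2$ while forcing $\tilde X\lambda'\ne y$. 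Collecting one such $\lambda'$ per net direction gives $\mathcal N$ with $|\mathcal N| = |\mathcal M|\le (2\sqrt d/\gamma)^d$. Finally, for an arbitrary $\lambda$ with $\tilde X\lambda\ne y$, its normalized residual is within $\gamma/2$ of some $v\in\mathcal M$, and the associated $\lambda'$ has normalized residual equal to $\sigma v$ exactly (finite case) or within $\gamma/2$ of $\sigma v$ (horizon case); a single triangle inequality then delivers the claimed bound of $\gamma$.
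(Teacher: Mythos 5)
Your proposal is correct and follows the same skeleton as the paper's proof: observe that all normalized residuals live on the unit sphere of the at-most-$d$-dimensional subspace $\vspan(\text{cols}(\tilde X), y)$, build a grid net of that sphere, and pull each net direction back to a parameter $\lambda' \in \RR^{d-1}$, accepting the sign ambiguity $\sigma$ inherent in the pullback. Where the two arguments genuinely diverge is in the one delicate step you correctly isolate: net directions $v$ lying on the ``horizon'' $\operatorname{col}(\tilde X)$, which no finite $\lambda'$ realizes exactly. The paper sidesteps this by making the net \emph{marginally random} --- it applies a uniformly random rotation to a fixed net of $\mathcal{S}^{d'-1}$, so that with probability $1$ no net point lands on the measure-zero horizon and the exact pullback $\lambda' = (Bm)_{1:d-1}/(Bm)_d$ is always defined (plus a separate perturbation of $y$ for the fully degenerate case $y=0$). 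You instead handle horizon points head-on and deterministically, taking $\lambda' = t\mu$ with $t$ large so that the normalized residual converges to $v$, and absorbing the resulting $\gamma/2$ error by tightening the net. Your route buys a deterministic construction and avoids the almost-sure caveat; the paper's randomization buys an exact pullback for every net point and hence no loss in the net radius. Both are valid.

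Two minor quantitative remarks. First, because you tighten to a $(\gamma/2)$-net, your grid has roughly $(4\sqrt d/\gamma)^d$ points rather than the stated $(2\sqrt d/\gamma)^d$; this is only a constant in the base of the exponent and is immaterial to how the lemma is used (the final complexity remains $O(\sqrt d/\gamma)^d$), but strictly speaking it does not match the lemma's stated cardinality bound --- you could recover it by taking $t$ so large that the horizon error is $o(\gamma)$ and using a $(1-o(1))\gamma$-net. Second, when $\bar X\theta = v$ has multiple solutions you should choose one with $\theta_d \neq 0$ whenever such a solution exists (equivalently, whenever $y \in \operatorname{col}(\tilde X)$ or the particular solution already has $\theta_d \neq 0$), reserving the horizon treatment for directions where every solution has $\theta_d = 0$; your writeup implicitly does this but it is worth stating.
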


\begin{proof}
Let $A: n \times d$ be the matrix with columns $(X^T)_2,\dots,(X^T)_d,-y$. Let $d' = \text{rank}(A)$ and let $A = UDV^T$ be the singular value decomposition of $A$, where $D = \diag(s_1,\dots,s_{d'})$ is the diagonal matrix of nonzero singular values of $A$. Let $\mathcal{M}$ be a ``marginally-random'' $\gamma$-net over the unit sphere $\mathcal{S}^{d'-1}$ under the $\ell_2$ metric, where by ``marginally-random'' we mean that $\mathcal{M}$ is chosen from some distribution, and every point of the net has marginal distribution uniform over $\mathcal{S}^{d'-1}$ (e.g. take any fixed $\gamma$-net and apply a uniformly random rotation). Also define $B = VD^{-1}$. Suppose $B_d$ is not identically zero. Then define \[\mathcal{N} = \left\{\frac{(Bm)_{1:d-1}}{(Bm)_d}: m \in \mathcal{M}\right\}.\]
Since each $m \in \mathcal{M}$ is a generic unit vector (by the marginally-random property) and $B_d$ is nonzero, we have that all $(Bm)_d$ are nonzero with probability $1$, so the above set is well-defined. Moreover, for any $m \in \mathcal{M}$, \[\tilde{X}\frac{(Bm)_{1:d-1}}{(Bm)_d} - y = A\frac{Bm}{(Bm)_d} = \frac{Um}{(Bm)_d}.\]
Now let $\lambda \in \RR^{d-1}$ with $\tilde{X}\lambda \neq y$, and define $v = DV^T[\lambda; 1]/\norm{\tilde{X}\lambda-y}_2 \in \RR^{d'}$. Observe that \[\norm{v}_2 = \norm{Uv}_2 = \frac{\norm{A[\lambda;1]}_2}{\norm{\tilde{X}\lambda-y}_2} = 1.\]
So there is some $m \in \mathcal{M}$ with $\norm{v-m}_2 \leq \gamma$. Let $\lambda' = (Bm)_{1:d-1}/(Bm)_d \in \mathcal{N}$ and let $\sigma = \text{sign}((Bm)_d)$. Then \[\frac{\tilde{X}\lambda-y}{\norm{\tilde{X}\lambda-y}_2} = Uv\]
and
\[\frac{\tilde{X}\lambda'-y}{\norm{\tilde{X}\lambda'-y}_2} = \frac{Um/(Bm)_d}{\norm{Um/(Bm)_d}_2} = \text{sign}((Bm)_d) \cdot \frac{Um}{\norm{Um}_2} = \sigma Um.\]
Thus,
\[\norm{\frac{\tilde{X}\lambda-y}{\norm{\tilde{X}\lambda-y}_2} - \sigma \frac{\tilde{X}\lambda'-y}{\norm{\tilde{X}\lambda'-y}_2}}_2 = \norm{Uv - Um}_2 = \norm{v-m}_2 \leq \gamma\]
as desired. On the other hand, if $B_d$ is identically zero, then $V_d = 0$, so $y = (A^T)_d = 0$. This boundary case can be avoided by picking any nonzero covariate $(X^T)_i$ among $(X^T)_2,\dots,(X^T)_d$ and replacing $y$ by $y + c(X^T)_i$ for a generic $c \in \RR$; this does not change $\sns(X,y)$.

Note that a $\gamma$-net $\mathcal{M}$ of $\mathcal{S}^{d'-1}$ with $|\mathcal{M}| \leq (2\sqrt{d}/\gamma)^d$ can be constructed in time $O((\sqrt{d}/\gamma)^d)$ by discretizing the cube $[-1,1]^{d'}$. Thus, $\mathcal{N}$ can be constructed in time $O((\sqrt{d}/\gamma)^d)$ as well, and $|\mathcal{N}| \leq |\mathcal{M}| \leq (2\sqrt{d}/\gamma)^d$.
\end{proof}

All that is left is to show that under strong anti-concentration, the value of the linear program is Lipschitz in $\lambda$ (under the metric described in the previous lemma). This proves Theorem~\ref{theorem:net-algorithm-intro}.

\begin{theorem}
Let $\epsilon,\delta>0$. There is an algorithm \netalg{} with time complexity $(2\sqrt{d}/(\epsilon\delta^2))^d \cdot \poly(n)$ which, given arbitrary samples $(X_i,y_i)_{i=1}^n$, produces an estimate $\hat{S}$ satisfying $\sns(X,y) \leq \hat{S}$. Moreover, if $(X_i,y_i)_{i=1}^n$ satisfy $(\epsilon,\delta)$-strong anti-concentration, then in fact \[\sns(X,y) \leq \hat{S} \leq \sns(X,y) + 3\epsilon n + 1.\]
\end{theorem}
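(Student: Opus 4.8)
The plan is to define, for each fixed $\lambda \in \RR^{d-1}$, the \emph{local stability} $S(\lambda) := n - \max_{w \in [0,1]^n}\{\norm{w}_1 : X^T(w \star (\tilde{X}\lambda - y)) = 0\}$, so that by formulation~(\ref{eq:sns-sup-formulation}) we have $\sns(X,y) = \min_{\lambda} S(\lambda)$, and for fixed $\lambda$ the quantity $S(\lambda)$ is the value of a linear program solvable in $\poly(n,d)$ time. The algorithm \netalg{} builds the net $\mathcal{N}$ of Lemma~\ref{lemma:residual-net} with $\gamma = \epsilon\delta^2$, solves one LP per net point, and returns $\hat{S} = \min_{\lambda \in \mathcal{N}} S(\lambda)$; since $|\mathcal{N}| \leq (2\sqrt{d}/\gamma)^d = (2\sqrt{d}/(\epsilon\delta^2))^d$, this gives the claimed running time. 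The bound $\sns(X,y) \leq \hat{S}$ is immediate and unconditional, since $\mathcal{N} \subseteq \RR^{d-1}$ forces $\min_{\lambda \in \mathcal{N}} S(\lambda) \geq \min_{\lambda} S(\lambda)$. Everything then reduces to a one-sided Lipschitz estimate: writing $\lambda^*$ for a minimizer of $S$ and $\hat{r} = (\tilde{X}\lambda^* - y)/\norm{\tilde{X}\lambda^* - y}_2$ for its normalized residual, Lemma~\ref{lemma:residual-net} supplies a net point $\lambda'$ with $\norm{\hat{r} - \sigma\hat{r}'}_2 \leq \gamma$ for some sign $\sigma$, and it suffices to show $S(\lambda') \leq S(\lambda^*) + 3\epsilon n + 1$.

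To prove this I would pass to the dual formulation. By Lemma~\ref{lemma:qp-reformulation} at a fixed $\lambda$, $n - S(\lambda) = \min_{u \in \RR^d} \max_{w \in [0,1]^n}\{\norm{w}_1 : \sum_i w_i (\langle \tilde{X}_i,\lambda\rangle - y_i)\langle X_i,u\rangle \geq 0\}$, and since the constraint $X^T(w \star r) = 0$ is invariant under rescaling (and negating) the residual $r = \tilde{X}\lambda - y$, the value $n - S(\lambda)$ depends on $\lambda$ only through the direction $\hat{r}$, up to sign. It therefore suffices to prove that for \emph{every} $u \in \RR^d$, the inner maximum at $\hat{r}'$ is at least the inner maximum at $\hat{r}$ minus $O(\epsilon n)$; taking the minimum over $u$ then gives $n - S(\lambda') \geq n - S(\lambda^*) - O(\epsilon n)$. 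Concretely, using the sign-invariance of the dual value to replace $\hat r'$ by $\sigma\hat r'$, fix $u$ and write $a_i = \hat{r}_i \langle X_i,u\rangle$ and $a'_i = (\sigma\hat{r}'_i) \langle X_i,u\rangle$; I would take a $w$ achieving $\max\{\norm{w}_1 : \sum_i w_i a_i \geq 0\}$ and round it to a $w'$ feasible for $\sum_i w'_i a'_i \geq 0$ with $\norm{w'}_1 \geq \norm{w}_1 - O(\epsilon n)$.

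The rounding is the crux and the main obstacle, because of a genuine tension: the feasibility constraint $\sum_i w_i a_i \geq 0$ is weighted by the magnitudes $|a_i|$, whereas the objective $\norm{w}_1$ merely counts coordinates, so a perturbation $a_i \mapsto a'_i$ that is small in each coordinate can still change the attainable count substantially through coordinates of tiny magnitude. Strong anti-concentration (Assumption~\ref{assumption:ac-B}) is exactly what controls this, and it enters twice via two choices of $\beta \in \RR^{d+1}$: with $\beta = (0,\lambda^*,-1)$ we get $\langle \overline{X}_i,\beta\rangle = \langle \tilde{X}_i,\lambda^*\rangle - y_i$, so at most $\epsilon n$ coordinates have $|\hat{r}_i| < \delta/\sqrt{n}$; and with $\beta = (u,0)$ we get $\langle \overline{X}_i,\beta\rangle = \langle X_i,u\rangle$, so at most $\epsilon n$ coordinates have $|\langle X_i,u\rangle| < \tfrac{\delta}{\sqrt{n}}\norm{Xu}_2$. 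I would let the bad set $B$ consist of these two groups together with the coordinates where $|\hat{r}_i - \sigma\hat{r}'_i|$ exceeds an appropriate threshold of order $\tau/\sqrt{n}$; by Cauchy--Schwarz applied to $\norm{\hat{r} - \sigma\hat{r}'}_2 \leq \gamma$, this last group has size $O(n\gamma^2/\tau^2)$. Outside $B$, each $\hat{r}_i$ is bounded below by $\delta/\sqrt{n}$ in magnitude and agrees with $\sigma\hat{r}'_i$ up to a small multiplicative factor, so $a'_i$ shares the sign of $a_i$, agrees with it multiplicatively, and (using the lower bound on $|\langle X_i,u\rangle|$) has magnitude at least $\tfrac{\delta^2}{n}\norm{Xu}_2$; in particular no tiny-magnitude coordinates remain to destabilize the greedy threshold.

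Given this, I would set $w'_i = 0$ on $B$ (losing $|B|$ from the $\ell_1$ norm) and keep $w$ on the good set, then repair $\sum_i w'_i a'_i \geq 0$ by advancing the greedy threshold on the negative coordinates. Because every good negative coordinate has magnitude at least $\tfrac{\delta^2}{n}\norm{Xu}_2$ while the total positive budget $\sum_{a_i \geq 0} a_i$ is at most $\norm{Xu}_2$ by Cauchy--Schwarz, the multiplicative distortion of the $a_i$ forces abandoning only a controlled number of further coordinates, and the single fractional coordinate at the threshold contributes the additive $+1$. The heart of the argument is choosing the threshold $\tau$ and the net resolution $\gamma$ so that the three losses — from $B$, from the abandoned negatives, and from the fractional coordinate — are simultaneously $O(\epsilon n)$ while $\gamma$ stays as large as $\epsilon\delta^2$ (which fixes the running time); this balance is precisely what the two-fold use of strong anti-concentration and the lower bound $|a_i| \geq \tfrac{\delta^2}{n}\norm{Xu}_2$ make possible. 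Bookkeeping the losses yields $3\epsilon n + 1$ (the degenerate case $Xu = 0$ being trivial, as the inner constraint then holds for all $w$). Combining the Lipschitz estimate with the unconditional upper bound gives $\sns(X,y) \leq \hat{S} \leq \sns(X,y) + 3\epsilon n + 1$, as required.
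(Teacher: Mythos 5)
Your algorithm, running time, unconditional upper bound, and overall plan (net from Lemma~\ref{lemma:residual-net} with $\gamma=\epsilon\delta^2$, dual formulation from Lemma~\ref{lemma:qp-reformulation}, rounding a feasible $w$ for $\lambda^*$ to one for the net point, with strong anti-concentration invoked twice exactly as you describe) all match the paper. The gap is in the rounding step, which you correctly flag as the crux. Setting $w'_i=0$ on the whole bad set $B$ is not a harmless loss of $|B|$ from the $\ell_1$ norm: coordinates of $B$ may contribute \emph{positively} to the feasibility sum $\sum_i w_i a'_i$, and deleting them creates a new deficit that your repair phase must absorb. This is most acute for your third group, the coordinates with $|\hat r_i - \sigma\hat r'_i| > \tau/\sqrt{n}$. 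The net guarantee only controls $\norm{\hat r - \sigma\hat r'}_2\le\gamma$, so a single such coordinate can carry $\hat r'_i$ of constant order and $a'_i$ as large as a constant times $\norm{Xu}_2$; zeroing it leaves a deficit of order $\norm{Xu}_2$, and since each repair step gains only about $(\delta^2/n)\norm{Xu}_2$, you may be forced to abandon $\Omega(n/\delta^2)$ further coordinates rather than $O(\epsilon n)$. Even setting that aside, your own bookkeeping already spends $2\epsilon n$ on the two anti-concentration groups plus $O(n\gamma^2/\tau^2)$ on the third group plus the abandoned negatives plus $1$, which does not balance to $3\epsilon n+1$ for any choice of $\tau$ compatible with $\gamma=\epsilon\delta^2$.

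The paper's proof avoids a coordinate-wise bad set entirely, and this is the idea you are missing. It never compares $\hat r_i$ and $\hat r'_i$ coordinate by coordinate: the closeness of the normalized residuals is used only once, in aggregate, via Cauchy--Schwarz to show that the starting point $w^*$ already satisfies
\begin{equation*}
\sum_{i=1}^n w^*_i\,\frac{\langle\tilde X_i,\lambda\rangle-y_i}{\norm{\tilde X\lambda-y}_2}\,\langle X_i,u\rangle \;\ge\; -\gamma\norm{Xu}_2 .
\end{equation*}
It then runs the greedy repair directly on the $a'_i$ (zeroing coordinates in increasing order of $(\langle\tilde X_i,\lambda\rangle-y_i)\langle X_i,u\rangle$, which only ever removes negatively-contributing coordinates, so the sum is monotone) and classifies each removal step: a step with $w^*_i=1$, residual at the \emph{net point} $\lambda$ at least $(\delta/\sqrt{n})\norm{\tilde X\lambda-y}_2$, and $|\langle X_i,u\rangle|\ge(\delta/\sqrt{n})\norm{Xu}_2$ gains at least $(\delta^2/n)\norm{\tilde X\lambda-y}_2\norm{Xu}_2$, so there are at most $\gamma n/\delta^2=\epsilon n$ of them; the remaining steps are charged to the two anti-concentration events (applied to $\lambda$, not $\lambda^*$) and the single fractional coordinate, giving exactly $\epsilon n+\epsilon n+\epsilon n+1$. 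If you replace your bad-set construction with this step-classification argument, the rest of your writeup goes through.
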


\begin{proof}
For any $\lambda \in \RR^{d-1}$, define \[V(\lambda) = \sup_{w \in [0,1]^n} \left\{\norm{w}_1 \middle | X^T ( w \star (\tilde{X} \lambda - y)) = 0\right\},\]
so that $n - \sns(X,y) = \sup_{\lambda \in \RR^{d-1}} V(\lambda)$. For any fixed $\lambda$, we can compute $V(\lambda)$ in polynomial time, since it is defined as an LP with $n$ variables and $d$ constraints. Fix $\gamma = \epsilon\delta^2$. The algorithm \netalg{} does the following (see Appendix~\ref{app:pseudocode} for pseudocode): if $X^T(\tilde{X}\lambda - y) = 0$ has a solution, then output $0$. Otherwise, let $\mathcal{N}$ be the net guaranteed by Lemma~\ref{lemma:residual-net}. Then compute $V(\lambda)$ for every $\lambda \in \mathcal{N}$, and output the estimate \[\hat{S} = n-\max_{\lambda \in \mathcal{N}} V(\lambda).\]
Since the algorithm involves solving $(2\sqrt{d}/\gamma)^d$ linear programs, the time complexity is $(2\sqrt{d}/\gamma)^d \cdot \poly(n)$ as claimed. Next, since the algorithm is maximizing $V(\lambda)$ over a subset of $\RR^{d-1}$, it's clear that $\hat{S} \geq \sns(X,y)$. It remains to prove the upper bound on $\hat{S}$. Recall from Lemma~\ref{lemma:qp-reformulation} that for any $\lambda \in \RR^{d-1}$,
\[V(\lambda) = \inf_{u \in \RR^d} \sup_{w \in [0,1]^n} \left\{\norm{w}_1 \middle | \sum_{i=1}^n w_i(\langle \tilde{X}_i,\lambda \rangle - y_i) \langle X_i, u\rangle \geq 0\right\}.\]
Let $\lambda^*$ be a maximizer of $V(\lambda)$ and for notational convenience let $k = V(\lambda^*)$. If $\tilde{X}\lambda^* = y$, then the algorithm correctly outputs $0$. Otherwise, by the guarantee of Lemma~\ref{lemma:residual-net}, there is some $\lambda \in \mathcal{N}$ with $\tilde{X}\lambda \neq y$ and $\sigma \in \{-1,1\}$ such that \[\norm{\frac{\tilde{X}\lambda^*-y}{\norm{\tilde{X}\lambda^* - y}_2} - \sigma \frac{\tilde{X}\lambda-y}{\norm{\tilde{X}\lambda - y}_2}}_2 \leq \gamma.\]
Pick any $u \in \RR^d$. Since $V(\lambda^*) = k$, there is some $w^* = w^*(\sigma u) \in [0,1]^n$ such that $\norm{w^*}_1 \geq k$ and \[\sum_{i=1}^n w^*_i(\langle \tilde{X}_i,\lambda^*\rangle - y_i)\langle X_i,\sigma u\rangle \geq 0.\]
Without loss of generality, there is at most one coordinate $i \in [n]$ such that $w_i$ is strictly between $0$ and $1$. Also, the above inequality implies that
\begin{align*}
\sum_{i=1}^n w^*_i\frac{\langle \tilde{X}_i,\lambda\rangle - y_i}{\norm{\tilde{X}\lambda - y}_2}\langle X_i,u\rangle 
&\geq \sum_{i=1}^n w^*_i\left(\frac{\langle \tilde{X}_i,\lambda\rangle - y_i}{\norm{\tilde{X}\lambda - y}_2} - \sigma \frac{\langle \tilde{X}_i,\lambda^*\rangle - y_i}{\norm{\tilde{X}\lambda^* - y}_2}\right)\langle X_i,u\rangle \\
&\geq -\norm{\frac{\tilde{X}\lambda - y}{\norm{\tilde{X}\lambda - y}_2} - \sigma \frac{\tilde{X}\lambda^* - y}{\norm{\tilde{X}\lambda^* - y}_2}}_2 \norm{Xu}_2 \\
&\geq -\gamma\norm{Xu}_2
\end{align*}
where the last two inequalities are by Cauchy-Schwarz and the guarantee of Lemma~\ref{lemma:residual-net}, respectively. Now define $w \in [0,1]^n$ by the following procedure. Initially set $w := w^*$. Iterate through $[n]$ in increasing order of $(\langle \tilde{X}_i,\lambda\rangle - y_i)\langle X_i,u\rangle$ and repeatedly set the current coordinate $w_i := 0$, until \[\sum_{i=1}^n w_i (\langle \tilde{X}_i,\lambda\rangle - y_i)\langle X_i,u\rangle \geq 0.\]
Obviously, this procedure will terminate with a feasible $w$. If it terminates after making $t$ updates, then $\norm{w}_1 \geq \norm{w^*}_1 - t$. Throughout the procedure, the sum $\sum_{i=1}^n w_i (\langle \tilde{X}_i,\lambda\rangle - y_i)\langle X_i,u\rangle$ is non-decreasing. If $|\langle \tilde{X}_i,\lambda\rangle - y_i| \geq (\delta/\sqrt{n})\norm{\tilde{X}\lambda-y}_2$ and $|\langle X_i,u\rangle| \geq (\delta/\sqrt{n})\norm{Xu}_2$ and $w^*_i = 1$, then the sum increases by at least $(\delta^2/n)\norm{\tilde{X}\lambda-y}_2\norm{Xu}_2$. So the number of such steps is at most $\gamma n/\delta^2 \leq \epsilon n$. But the number of steps with $|\langle \tilde{X}_i,\lambda\rangle - y_i| < (\delta/\sqrt{n})\norm{\tilde{X}\lambda-y}_2$ is at most $\epsilon n$ by $(\epsilon,\delta)$-strong anti-concentration. Similarly, the number of steps with $|\langle X_i,u\rangle| < (\delta/\sqrt{n})\norm{Xu}_2$ is at most $\epsilon n$. And the number of steps with $0 < w^*_i < 1$ is at most $1$. Thus, $\norm{w}_1 \geq \norm{w^*}_1 - 3\epsilon n -1$. We conclude that $V(\lambda) \geq V(\lambda^*)-3\epsilon n -1$, so $\hat{S} \leq \sns(X,y) + 3\epsilon n + 1$.
\end{proof}

\section{Motivation for anti-concentration assumptions}

\subsection{Smoothing implies anti-concentration}\label{section:smoothed}

In this section, we show that anti-concentration (Assumption~\ref{assumption:ac-A}) holds under the mild assumption that the response variable is \emph{smoothed}. In the following proposition, notice that we always have the crude bound $\norm{X\beta^{(0)}-r}_2 \leq \norm{r}_2$.

\begin{proposition}
Let $\sigma > 0$ and let $(X_i,r_i)_{i=1}^n$ be arbitrary with $X_1,\dots,X_n \in \RR^d$ and $r_1,\dots,r_n \in \RR$. Suppose that $Z_1,\dots,Z_n \sim N(0,\sigma^2)$ are independent Gaussian random variables, and define $y_i = r_i + Z_i$ for $i \in [n]$. Then with probability at least $1 - n^{-d}-2e^{-n}$, the dataset $(X_i,y_i)_{i=1}^n$ satisfies $(2d/n, \delta)$-anti-concentration where \[\delta = 
\min\left(\frac{\sigma}{n\sqrt{d}\norm{X\beta^{(0)} - r}_2}, \frac{1}{3n^3\sqrt{2d}}\right)\]
and $\beta^{(0)} \in \OLS(X,r,\mathbbm{1})$.
\end{proposition}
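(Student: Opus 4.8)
The plan is to show that after adding independent Gaussian noise, with high probability no $\beta$ can make more than $2d$ residuals simultaneously tiny. I would first replace the (random) threshold $\tau := \frac{\delta}{\sqrt n}\norm{X\beta^{(0)}_y - y}_2$ in Assumption~\ref{assumption:ac-A}, where $\beta^{(0)}_y \in \OLS(X,y,\mathbbm{1})$, by a deterministic upper bound. Since $\norm{X\beta^{(0)}_y - y}_2 = \min_\beta \norm{X\beta - y}_2 \le \norm{X\beta^{(0)} - y}_2 \le \norm{X\beta^{(0)}-r}_2 + \norm{Z}_2$ and $\norm{Z}_2 \le 2\sigma\sqrt n$ with probability $\ge 1 - 2e^{-n}$ (a standard $\chi^2$ tail bound, which is the source of the $2e^{-n}$ term), I would condition on this event and use $\tau \le \frac{\delta}{\sqrt n}(\norm{X\beta^{(0)}-r}_2 + 2\sigma\sqrt n)$. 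The two terms of $\delta$ are tuned precisely so that each summand is tamed: $\tfrac{1}{3n^3\sqrt{2d}}$ controls the noise contribution $2\sigma\sqrt n$, while $\tfrac{\sigma}{n\sqrt d\,\norm{X\beta^{(0)}-r}_2}$ controls the signal contribution $\norm{X\beta^{(0)}-r}_2$. Because a larger threshold only makes anti-concentration harder, it suffices to prove the count bound with this enlarged, deterministic threshold $t \ge \tau$.

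Next I would discretize the $\forall\beta$ statement. Anti-concentration fails exactly when there exist $\beta$ and a set $T \subseteq [n]$ with $|T| = 2d+1$ such that $|\langle X_i,\beta\rangle - y_i| < t$ for all $i \in T$. For fixed $T$, the existence of such a $\beta$ forces $y_T$ to lie within $\ell_\infty$-distance $t$, hence $\ell_2$-distance $\sqrt{2d+1}\,t$, of the subspace $\{(X\beta)_T : \beta \in \RR^d\}$, which has dimension at most $d$. Letting $Q_T$ be the orthogonal projection onto its (at least $(d+1)$-dimensional) complement, this reads $\norm{Q_T y_T}_2 \le \sqrt{2d+1}\,t$. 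Writing $y_T = r_T + Z_T$, the vector $Q_T y_T$ is Gaussian with mean $Q_T r_T$ and isotropic covariance $\sigma^2$ on a space of dimension $\ge d+1$, so I would bound the small-ball probability $\Pr[\norm{Q_T y_T}_2 \le \sqrt{2d+1}\,t]$ by Gaussian anti-concentration (via Anderson's inequality, at most $(\sqrt{2d+1}\,t/\sigma)^{d+1}/(2^{(d+1)/2}\Gamma(\tfrac{d+3}{2}))$), and finally union bound over the $\binom{n}{2d+1} \le n^{2d+1}$ choices of $T$, targeting total failure probability $\le n^{-d}$.

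The hard part will be closing this union bound. In the noise-dominated regime ($\norm{X\beta^{(0)}-r}_2 \lesssim \sigma\sqrt n$) the term $\tfrac{1}{3n^3\sqrt{2d}}$ forces $t \lesssim \sigma n^{-3}$, and the central small-ball bound $(t/\sigma)^{d+1} \sim n^{-3(d+1)}$ comfortably beats the $n^{2d+1}$ subsets. The delicate case is when $\norm{X\beta^{(0)}-r}_2$ is large, where the admissible threshold degrades to only $t \sim \sigma n^{-3/2}$ and the central bound \emph{alone} does not survive the union over subsets (it leaves a residual factor $\sim n^{(d-1)/2}$). Here I would exploit that the clean residual satisfies $\norm{X\beta - r}_2 \ge \norm{X\beta^{(0)}-r}_2$ for every $\beta$, so the projected means $\norm{Q_T r_T}_2$ are typically large; for such $T$ the \emph{noncentral} tail $\Pr[\norm{Q_T Z_T}_2 > \norm{Q_T r_T}_2 - \sqrt{2d+1}\,t]$ is exponentially small and easily absorbs the combinatorial factor, with the central bound reserved for the comparatively few near-interpolated subsets. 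Carefully partitioning the subsets by the magnitude of $\norm{Q_T r_T}_2$, balancing the central and noncentral estimates against $\binom{n}{2d+1}$, and verifying that the two components of $\delta$ make both regimes close simultaneously is where I expect the bulk of the technical effort to lie.
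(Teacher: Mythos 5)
Your treatment of the noise-dominated regime is correct and is essentially the paper's argument: control $\norm{X\hat{\beta}-y}_2 \le \norm{X\beta^{(0)}-r}_2 + 2\sigma\sqrt{n}$ by the $\chi^2$ tail (the source of the $2e^{-n}$ term), reduce failure of anti-concentration to the existence of a small subset $T$ on which the residual vector nearly lies in $\mathrm{col}(X_T)$, project onto the orthogonal complement (dimension at least $d$, resp.\ $d+1$ for your $|T|=2d+1$), dominate the noncentral small-ball probability by the centered one, and union bound over $n^{O(d)}$ subsets. When $\norm{X\beta^{(0)}-r}_2 \le \sigma\sqrt{n}$ the threshold is at most $\sigma/(n^3\sqrt{2d})$, each subset contributes roughly $n^{-3d}$, and the union bound closes; this is exactly the paper's computation (with subsets of size $2d$ and a coordinatewise product bound in place of Anderson).

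The gap is in your plan for the signal-dominated regime, and it is a real one. Your proposed rescue rests on the premise that $\norm{X\beta-r}_2 \ge \norm{X\beta^{(0)}-r}_2$ for all $\beta$ forces the projected means $\norm{Q_T r_T}_2$ to be ``typically large.'' It does not: over any partition $\{T_j\}$ of $[n]$ one only has the subadditivity $\sum_j \min_\beta \norm{X_{T_j}\beta - r_{T_j}}_2^2 \le \min_\beta\norm{X\beta - r}_2^2$, which bounds the local interpolation errors from \emph{above}, never from below. Concretely, take $d=2$ with $X_i=(1,x_i)$, split $[n]$ into two halves, and place $r$ exactly on a different line on each half, with the lines separated so that $\norm{X\beta^{(0)}-r}_2 \asymp \sigma n^2$; this is precisely the regime where the two branches of $\delta$ agree up to constants and the admissible threshold is $t \asymp \sigma n^{-3/2}$. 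Every one of the $\approx\binom{n/2}{2d+1}$ within-half subsets $T$ has $Q_T r_T = 0$, so the noncentral tail gives nothing, and the central bound $(t/\sigma)^{d+1}\asymp n^{-3(d+1)/2}$ summed over these subsets leaves a diverging factor $n^{(d-1)/2}$ for $d\ge 2$. So the ``comparatively few near-interpolated subsets'' can be essentially all subsets, and the partition-by-$\norm{Q_T r_T}_2$ strategy cannot close the union bound. You should be aware that the paper's own proof is silent exactly here: it announces two cases and carries out only $\norm{X\beta^{(0)}-r}_2\le\sigma\sqrt{n}$. The example above indicates that finishing the other case requires either a genuinely different idea or a strengthening of the statement's parameters (for instance $\epsilon = 3d/n$, so that the projection has rank at least $2d+1$ and the per-subset probability $n^{-3(2d+1)/2}$ beats the $n^{3d+1}$ subsets), rather than more careful bookkeeping within your framework.
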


\begin{proof}
Without loss of generality, $\beta^{(0)} = (X^T X)^\pinv X^T r$. Define $\hat{\beta} = (X^TX)^\pinv X^T y$; we want to upper bound $\norm{X\hat{\beta} - y}_2$. Let $Q = X(X^T X)^\pinv X^T$; then $y - X\hat{\beta} \sim N((I-Q)r, \sigma^2(I-Q))$. But $(I-Q)r = r - X\beta^{(0)}$, and $N(0,\sigma^2(I-Q))$ is stochastically dominated by $N(0,\sigma^2 I)$. So $\norm{y-X\hat{\beta}}_2 \leq \norm{r-X\beta^{(0)}}_2 + 2\sigma \sqrt{n}$ with probability at least $1 - 2e^{-n}$, by the tail bound for $\chi^2$ random variables.

Next, we claim that with high probability, for every set $S \subseteq [n]$ of size $|S| = 2d$, there is no $\beta \in \RR^d$ such that $|X_i \beta - y_i| \leq \sigma/(n^3\sqrt{2d})$ for all $i \in [S]$. Fix some $S$. We bound the probability that for all $\beta \in \RR^d$, \[\sum_{i \in S} (X_i\beta - y_i)^2 \leq \frac{\sigma^2}{n^6}\]
because this event contains the event that all residuals in $S$ are at most $\sigma/(n\sqrt{2d})$. Now it suffices to restrict to the OLS estimator $\hat{\beta}_\text{OLS} = (X_S^T X_S)^\pinv X_S^T y_S$. Now defining $P = X_S (X_S^T X_S)^\pinv X_S^T$, we have
\begin{align*}
y_S - X_S\hat{\beta}_\text{OLS}
&= (r_S +  Z_S) - P(r_S + Z_S) \\
&= (I-P)(r_S+Z_S) \\
&\sim N((I-P)r_S, \sigma^2(I-P)).
\end{align*}
Note that $I-P$ is an orthogonal projection onto a space of dimension $d' := |S|-\text{rank}(X_S) \geq d$, so there is a matrix $M \in \RR^{2d \times d'}$ be such that $MM^T = I-P$ and $M^TM = I_{d'}$. Then we can write
\begin{align*}
y_S - X_S\hat{\beta}_\text{OLS} 
&= \sigma MA + (I-P)r_S \\
&= M(\sigma A + M^T r_S)
\end{align*}
for a random vector $A \sim N(0, I_{d'}).$ This means that $\norm{y_S - X_S\hat{\beta}_\text{OLS}}_2^2 = \norm{\sigma A + M^T r_S}_2^2$. But for any $\mu \in \RR$, note that $|\xi+\mu|$ stochastically dominates $|\xi|$ where $\xi \sim N(0,1)$, so $(\xi+\mu)^2$ stochastically dominates $\xi^2$, and thus $\norm{\sigma A + M^T r_S}_2^2$ stochastically dominates $\norm{\sigma A}_2^2$. Therefore  \[\Pr\left[\norm{y_S-X_S\hat{\beta}_\text{OLS}}_2^2 \leq \frac{\sigma^2 }{n^6}\right] \leq \Pr\left[A_1^2+\dots+A_{d'}^2 \leq \frac{1}{n^6}\right].\] For any $1 \leq i \leq d'$, since the density of $A_i$ is bounded above by $1/\sqrt{2\pi}$, we have
\[\Pr[A_1^2+\dots+A_d^2 \leq 1/n^6] \leq \prod_{i=1}^{d'} \Pr[|A_i| \leq 1/n^3] \leq \left(\frac{\sqrt{2}}{n^3\sqrt{\pi}}\right)^{d'} \leq n^{-3d'}.\]
Recalling that $d' \geq d$, this shows that for a fixed $S\subseteq [n]$ of size $2d$, with probability at least $1 - n^{-3d}$, there is no $\beta \in \RR^d$ with $|X_i\beta-y_i| \leq \sigma/(n^3\sqrt{2d})$ for all $i \in S$. A union bound over sets $S$ of size $2d$ proves the claim.

Finally, we use the above bounds to show that the smoothed data satisfies $(2d/n,\delta)$-anti-concentration. We consider two cases. First, if $\norm{X\beta^{(0)} - r}_2 \leq \sigma\sqrt{n}$, then with probability $1-2e^{-n}$ we have $\norm{X\hat{\beta} - y}_2 \leq 3\sigma\sqrt{n}$, so for any $\beta \in \RR^d$, the number of $i \in [n]$ satisfying \[|\langle X_i,\beta\rangle - y_i| \leq \frac{\delta}{\sqrt{n}} \norm{X\hat{\beta} - y}_2 \leq \frac{1}{3n^3\sqrt{2nd}} \cdot \norm{X\hat{\beta}-y}_2 \leq \frac{\sigma}{n^3\sqrt{2d}}\]
is at most $\epsilon n$.
\end{proof}

\subsection{Distributional Assumptions for strong anti-concentration}\label{app:distributional-b}

In this section, we show that under reasonable distributional assumptions on the samples $(X_i,y_i)_{i=1}^n$, strong anti-concentration (Assumption~\ref{assumption:ac-B}) holds with constant $\epsilon,\delta>0$. First, it holds if the samples $(X_i,y_i)$ are i.i.d. and have an arbitrary multivariate Gaussian joint distribution.

\begin{proposition}
Let $n,d,\epsilon>0$ and set $\delta = \epsilon/4$. Let $\Sigma: d \times d$ be symmetric and positive-definite. Let $Z_1,\dots,Z_n \sim N(0,\Sigma)$ be independent and identically distributed. If $n \geq Cd\log(n)/\epsilon^2$ for some absolute constant $C$, then with probability at least $1-\exp(-\Omega(\epsilon^2 n))$, samples $(Z_i)_{i=1}^n$ satisfy $(\epsilon,\delta)$-strong anti-concentration, i.e. for all $\beta \in \RR^d$, it holds that \[\left|\left\{i \in [n]: |\langle Z_i,\beta\rangle | < \frac{\delta}{\sqrt{n}}\norm{Z\beta}_2\right\}\right| \leq \epsilon n\]
where $Z: n \times d$ is the matrix with rows $Z_1,\dots,Z_n$.
\end{proposition}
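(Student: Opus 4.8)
The plan is to reduce to the isotropic case and then to a purely combinatorial uniform-convergence statement about slabs, which I control by VC theory. First, since $\Sigma \succ 0$, write $Z_i = \Sigma^{1/2} W_i$ with $W_i \sim N(0,I_d)$ i.i.d.; then $\langle Z_i,\beta\rangle = \langle W_i,\gamma\rangle$ and $Z\beta = W\gamma$ (so $\norm{Z\beta}_2 = \norm{W\gamma}_2$) for $\gamma = \Sigma^{1/2}\beta$, and as $\beta$ ranges over $\RR^d$ so does $\gamma$. Hence without loss of generality $\Sigma = I$. The event is invariant under rescaling $\beta$, so I restrict to $\norm{\beta}_2 = 1$, for which $\langle Z_i,\beta\rangle \sim N(0,1)$. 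The count $N(\beta) := |\{i : |\langle Z_i,\beta\rangle| < \tfrac{\delta}{\sqrt n}\norm{Z\beta}_2\}|$ is non-decreasing in the threshold, so I decouple the data-dependent normalization by a uniform upper bound on $\norm{Z\beta}_2$: a standard operator-norm tail bound for Gaussian matrices gives $\sigma_{\max}(Z) \le \sqrt{2n}$ (using $d \ll n$) with probability $1 - e^{-\Omega(n)}$. On this event every threshold is at most $\sqrt2\,\delta$, so it suffices to prove $|\{i : |\langle Z_i,\beta\rangle| < \sqrt2\,\delta\}| \le \epsilon n$ uniformly over unit $\beta$, where the width is now a deterministic constant.

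This is a statement about the \emph{symmetric slab} class $\mathcal{C} = \{z \mapsto \mathbbm{1}[|\langle z,\beta\rangle| < \sqrt2\,\delta] : \norm{\beta}_2 = 1\}$. Each concept is an intersection of two halfspaces, so by Lemma~\ref{lemma:vc-bound} together with Sauer--Shelah its VC dimension is $O(d)$. For any fixed unit $\beta$ the population probability is the same, $p := \Pr_{g \sim N(0,1)}[|g| < \sqrt2\,\delta] \le \tfrac{2\sqrt2\,\delta}{\sqrt{2\pi}} < 0.3\,\epsilon$, using $\delta = \epsilon/4$ and the density bound $1/\sqrt{2\pi}$. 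Then by the two-sided VC inequality (uniform convergence of empirical frequencies for a class of VC dimension $O(d)$, the relative/agnostic counterpart of Theorem~\ref{theorem:vc-generalization}), taking $\alpha = 0.7\,\epsilon$,
\[\Pr\Big[\sup_{\norm{\beta}_2=1}\Big|\tfrac1n\big|\{i : |\langle Z_i,\beta\rangle| < \sqrt2\,\delta\}\big| - p\Big| > \alpha\Big] \le 8\,(en/d)^{O(d)}\,e^{-n\alpha^2/32}.\]
On the complement, $N(\beta)/n \le p + \alpha < \epsilon$ for every unit $\beta$, which is the claim after undoing the reductions.

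The hypothesis $n \ge Cd\log n/\epsilon^2$ is precisely the bookkeeping of this bound: the growth-function factor $(en/d)^{O(d)} = e^{O(d\log n)}$ must be dominated by $e^{n\alpha^2/32} = e^{\Omega(\epsilon^2 n)}$, which holds once $d\log n \le c\,\epsilon^2 n$ for small enough $c$; the resulting failure probability is $e^{-\Omega(\epsilon^2 n)}$, while the operator-norm event fails with the even smaller probability $e^{-\Omega(n)}$. The \textbf{main obstacle} is that the indicator $\beta \mapsto \mathbbm{1}[|\langle Z_i,\beta\rangle| < t]$ is \emph{discontinuous} in $\beta$, so a naive $\epsilon$-net over directions cannot control the count uniformly; VC theory is exactly what yields a uniform guarantee despite this discontinuity. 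The secondary technical point is the data-dependent normalization $\norm{Z\beta}_2$ inside the threshold, which I dispose of up front via monotonicity of the count plus the uniform operator-norm bound, reducing everything to the clean slab class with a fixed deterministic width.
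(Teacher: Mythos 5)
Your proposal is correct and follows essentially the same route as the paper's proof: both control the data-dependent normalization $\norm{Z\beta}_2$ uniformly via concentration of the empirical second-moment matrix (your operator-norm bound is the same fact as the paper's Wishart bound $\hat{\Sigma} \preceq 2\Sigma$), and then apply a VC uniform-convergence bound to the $O(d)$-VC-dimension class of symmetric slabs, with Gaussian anti-concentration supplying the $O(\epsilon)$ bound on the population probability. Your preliminary reduction to the isotropic, unit-norm case is a cosmetic difference; the paper instead keeps $\Sigma$ and phrases the slab width in terms of $\EE|\langle Z_1,\beta\rangle|^2$, but the two arguments are otherwise identical.
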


\begin{proof}
Let $\hat{\Sigma} = \frac{1}{n}\sum_{i=1}^n Z_iZ_i^T$. Then by concentration of Wishart matrices, it holds with probability at least $1-2\exp(-(n-d)/2)$ that $\hat{\Sigma} \preceq 2\Sigma$ (see e.g. Exercise 4.7.3 in \cite{vershynin2018high}). In this event, $\norm{Z\beta}_2^2/n \leq 2 \EE |\langle Z_1,\beta\rangle|^2$ for all $\beta \in \RR^d$.

Let $\mathcal{F} = \{f_\beta: \beta \in \RR^d\}$ be the class of binary functions $f_\beta(x) = \mathbbm{1}[|\langle x,\beta\rangle|^2 < (\epsilon^2/4)\EE |\langle Z_1,\beta\rangle|^2]$. Observe that every function in $\mathcal{F}$ is the intersection of parallel half-spaces, so $\mathcal{F}$ has VC dimension $O(d)$. Moreover, for any $\beta$, \[\EE f_\beta(Z_1) = \Pr[|\langle Z_1,\beta\rangle|^2 < (\epsilon^2/4)\beta^T\Sigma \beta] \leq \frac{\epsilon}{2}\]
since $\langle Z_1,\beta\rangle \sim N(0,\beta^T\Sigma \beta)$, and Gaussian random variables are anti-concentrated ($\Pr(|\xi|<c)\leq c$ for any $c>0$ and $\xi\sim N(0,1)$). Thus, by the Vapnik-Chervonenkis bound and assumption on $n$, \[\Pr\left(\sup_{f \in \mathcal{F}} \frac{1}{n}\sum_{i=1}^n f_\beta(Z_i) > \EE f_\beta(Z_1) + \epsilon/2\right) \leq \exp(O(d\log n) - \Omega(\epsilon^2 n)) \leq \exp(-\Omega(\epsilon^2 n)).\]
So with probability at least $1-\exp(-\Omega(\epsilon^2 n))$ it holds that for all $\beta \in \RR^d$, \[\left|\left\{i \in [n]: |\langle Z_i,\beta\rangle|^2 < (\epsilon^2/4)\beta^T\Sigma \beta\right\}\right| \leq \epsilon n.\]
This means that with probability at least $1-\exp(-\Omega(\epsilon^2 n)) - \exp(-(n-d)/2)$, we have that for all $\beta \in \RR^d$, \[\left|\left\{i \in [n]: |\langle Z_i,\beta\rangle|^2 < (\epsilon^2/8)\norm{Z\beta}_2^2/n\right\}\right| \leq \epsilon n\]
as desired.
\end{proof}

Second, strong anti-concentration holds if the samples are drawn from a mixture of centered Gaussian distributions, with arbitrary weights, so long as each covariance matrix has bounded largest and smallest eigenvalues.

\begin{proposition}
Let $n,d,k,\epsilon>0$ and let $\Sigma_1,\dots,\Sigma_k$ be symmetric and positive-definite matrices. Suppose that there are constants $\lambda,\Lambda>0$ such that $\lambda I \preceq \Sigma_1,\dots,\Sigma_k \preceq \Lambda I$, and define $\kappa = \Lambda/\lambda$. For arbitrary weights $w_1,\dots,w_k\geq 0$ with $w_1+\dots+w_k=1$, let $Z_1,\dots,Z_n \sim \sum_{i=1}^k w_i N(0,\Sigma_i)$ be independent and identically distributed. If $n \geq Cd\log(n)/\epsilon^2$ for some absolute constant $C$, then with probability at least $1-\exp(-\Omega(\epsilon^2 n))$, samples $(Z_i)_{i=1}^n$ satisfy $(\epsilon, \epsilon/(4\kappa))$-strong anti-concentration.
\end{proposition}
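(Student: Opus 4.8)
The plan is to mirror the proof of the single-Gaussian proposition, replacing only the one step that genuinely uses that the projection $\langle Z_1,\beta\rangle$ is a single Gaussian. As before, I would condition on two high-probability events: a uniform upper bound on the sample second moments, and a Vapnik--Chervonenkis uniform convergence bound for a suitable slab class. Writing $\Sigma = \sum_{i=1}^k w_i \Sigma_i$ for the population covariance (which also satisfies $\lambda I \preceq \Sigma \preceq \Lambda I$ as a convex combination), the goal is to show that on the intersection of these events, for every $\beta$ the number of indices $i$ with $|\langle Z_i,\beta\rangle| < (\delta/\sqrt n)\norm{Z\beta}_2$ is at most $\epsilon n$, which is exactly $(\epsilon,\delta)$-strong anti-concentration.

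First I would establish the covariance bound. Each projection $\langle Z_1,v\rangle$ for a unit vector $v$ is a mixture of the centered Gaussians $N(0,v^\t\Sigma_i v)$ with $v^\t\Sigma_i v \le \Lambda$, so its moment generating function is dominated by that of $N(0,\Lambda)$; hence $Z_1$ is subgaussian with variance proxy $\Lambda$. Standard covariance concentration for subgaussian vectors then gives $\hat\Sigma := \frac1n\sum_i Z_iZ_i^\t \preceq 2\Lambda I$ with probability $1 - \exp(-\Omega(n))$ once $n \gtrsim d$, and in particular $\norm{Z\beta}_2^2/n \le 2\Lambda\norm{\beta}_2^2$ for all $\beta$. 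This step is essentially identical to the Wishart bound used in the single-Gaussian case, with $\Lambda$ playing the role of the top eigenvalue.

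Next I would run the VC argument on the slab class $\mathcal F = \{f_\beta\}$ with $f_\beta(x) = \mathbbm{1}[|\langle x,\beta\rangle| < \delta\sqrt{2\Lambda}\,\norm{\beta}_2]$. Each $f_\beta$ is, for fixed $\beta$, the indicator of a symmetric slab, i.e. an intersection of two parallel halfspaces, so $\mathcal F$ has VC dimension $O(d)$; moreover the class is scale-invariant in $\beta$, so it suffices to consider unit $\beta$. The key new computation is the bound on $\EE f_\beta(Z_1)$: since $\langle Z_1,\beta\rangle$ is a mixture of Gaussians each of variance $\beta^\t\Sigma_i\beta \ge \lambda\norm{\beta}_2^2$, its density is everywhere at most $(2\pi\lambda\norm{\beta}_2^2)^{-1/2}$, whence
\[
\EE f_\beta(Z_1) = \Pr\!\left[|\langle Z_1,\beta\rangle| < \delta\sqrt{2\Lambda}\,\norm{\beta}_2\right] \le \frac{2\delta\sqrt{2\Lambda}\,\norm{\beta}_2}{\sqrt{2\pi\lambda}\,\norm{\beta}_2} = \frac{2\delta}{\sqrt{\pi}}\sqrt{\kappa}.
\]
Plugging in $\delta = \epsilon/(4\kappa)$ makes this at most $\epsilon/(2\sqrt{\pi}\sqrt\kappa) \le \epsilon/2$. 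With $n \ge Cd\log(n)/\epsilon^2$, the Vapnik--Chervonenkis bound then gives, with probability $1 - \exp(O(d\log n) - \Omega(\epsilon^2 n)) = 1 - \exp(-\Omega(\epsilon^2 n))$, the uniform estimate $\frac1n\sum_i f_\beta(Z_i) \le \EE f_\beta(Z_1) + \epsilon/2 \le \epsilon$ for all $\beta$. On the covariance event the target event $|\langle Z_i,\beta\rangle| < (\delta/\sqrt n)\norm{Z\beta}_2$ implies $|\langle Z_i,\beta\rangle| < \delta\sqrt{2\Lambda}\norm{\beta}_2$, i.e. $f_\beta(Z_i)=1$, so the count is at most $\epsilon n$, completing the argument.

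The main obstacle --- and the only real departure from the single-Gaussian proof --- is the anti-concentration estimate for the projection. There the projection was exactly $N(0,\beta^\t\Sigma\beta)$, so the threshold and the Gaussian standard deviation matched perfectly and no condition number appeared. For a mixture, $\langle Z_1,\beta\rangle$ need not be Gaussian and its law depends on the unknown weights, so I would control it uniformly via the pointwise density bound, which only uses that every component variance exceeds $\lambda\norm{\beta}_2^2$. The mismatch between this lower bound $\lambda$ (governing anti-concentration) and the upper bound $\Lambda$ (governing $\norm{Z\beta}_2$) is exactly what forces the extra factor of $\kappa$ in $\delta = \epsilon/(4\kappa)$, and the density bound is what makes the argument robust to arbitrary mixing weights $w_1,\dots,w_k$.
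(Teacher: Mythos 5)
Your proposal is correct and follows essentially the same route as the paper: a covariance upper bound $\hat\Sigma \preceq 2\Lambda I$ (the paper gets this by coupling each $Z_i$ with $N(0,\Lambda I)$, you via subgaussianity), followed by a VC uniform convergence argument over a slab class whose per-$\beta$ measure is controlled by anti-concentration of the mixture projection using only the variance lower bound $\lambda\norm{\beta}_2^2$ on each component. The only cosmetic difference is that you bound the mixture's density pointwise by $(2\pi\lambda\norm{\beta}_2^2)^{-1/2}$ while the paper invokes stochastic domination by $N(0,\lambda\norm{\beta}_2^2)$ together with $\Pr(|\xi|<c)\le c$; these are interchangeable and both produce the $\kappa$ factor in $\delta$ in the same way.
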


\begin{proof}
The proof is similar to that of the previous proposition. First, note that each $Z_i$ can be coupled with $\xi \sim N(0,\Lambda I)$ so that $Z_i Z_i^T \preceq \xi \xi^T$. So with probability at least $1-2\exp(-(n-d)/2)$ it holds that $\frac{1}{n}\sum_{i=1}^n Z_iZ_i^T \preceq 2\lambda I$ and thus $\norm{Z\beta}_2^2/n \leq 2\Lambda \norm{\beta}_2^2$ for all $\beta \in \RR^d$.

Next, let $\mathcal{F} = \{f_\beta: \beta \in \RR^d\}$ be the class of binary functions $f_\beta(x) = \mathbbm{1}[|\langle x,\beta\rangle|^2 < \epsilon^2 \lambda\norm{\beta}_2^2/4]$. Then $\mathcal{F}$ has VC dimension $O(d)$, and for any $\beta$, \[\EE f_\beta(Z_1) = \Pr[|\langle Z_1,\beta\rangle|^2 < \epsilon^2 \lambda \norm{\beta}_2^2/4] \leq \frac{\epsilon}{2}\]
since $\langle Z_1,\beta\rangle \sim \sum_{i=1}^k w_i N(0,\beta^T\Sigma_i \beta)$ is stochastically dominated by $N(0,\lambda \norm{\beta}_2^2)$. By the Vapnik-Chervonenkis bound, with probability at least $1-\exp(-\Omega(\epsilon^2 n))$, we get that for all $\beta \in \RR^d$, \[|\{i \in [n]: |\langle Z_i,\beta\rangle|^2 < (\epsilon^2 / 4)\lambda\norm{\beta}_2^2| \leq \epsilon n.\]
Combining with the upper bound on $\norm{Z\beta}_2^2$, we have with probability at least $1-\exp(-\Omega(\epsilon^2 n)) - \exp(-(n-d)/2)$ that for all $\beta \in \RR^d$, \[|\{i \in [n]: |\langle Z_i,\beta\rangle|^2 < (\epsilon^2/(8\kappa))\norm{Z\beta}_2^2/n\}| \leq \epsilon n\] as claimed.
\end{proof}

\section{Extension to IV Linear Regression}\label{app:iv-extension}

We extend the definition of stability to measure the stability of the sign of a coefficient of the IV linear regressor:

\begin{definition}
For samples $(X_i,y_i,Z_i)_{i=1}^n$ with covariates $X_i \in \RR^d$, response $y_i \in \RR^d$, and instruments $Z_i \in \RR^p$, the ordinary IV estimator set with weight vector $w \in [0,1]^n$ is \[\textsf{IV}(X,y,Z,w) = \{\beta \in \RR^d: Z^T(w\star (X\beta - y)) = 0\}.\]
The finite-sample stability of $(X_i,y_i,Z_i)_{i=1}^n$ is then defined \[\ivsns(X,y,Z) := \inf_{w \in [0,1]^n, \beta \in \RR^d} \{n - \norm{w}_1: \beta_1 = 0 \land \beta \in \textsf{IV}(X,y,Z,w)\}.\]
\end{definition}

For any $k$, the expression $\ivsns(X,y,Z) \geq k$ is still defined as a bilinear system of equations in $\beta$ and $w$. Our exact algorithm for $\sns(X,y)$ never uses that the weighted residual $(w\star (X\beta - y))$ is multiplied by $X^T$ in the OLS solution set, rather than some arbitrary matrix $Z^T$; all that matters is that this matrix has at most $d$ rows. Thus, with a bound on the number of instruments, the algorithm generalizes to computing $\ivsns(X,y,Z)$:

\begin{theorem}\label{theorem:iv-exact-algorithm}
There is an $n^{O(dp(d+p))}$-time algorithm which, given $n$ arbitrary samples $(X_i,y_i,Z_i)_{i=1}^n$ with $X_1,\dots,X_n \in \RR^d$ and $Z_1,\dots,Z_n \in \RR^p$ and $y_1,\dots,y_n \in \RR$, and given $k \geq 0$, decides whether $\ivsns(X,y,Z) \leq k$.
\end{theorem}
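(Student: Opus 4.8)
The plan is to lift the three-step argument behind Theorem~\ref{theorem:exact-algorithm-intro} essentially verbatim, replacing the dual matrix $X$ by the instrument matrix $Z$ and carefully retracking the dimension counts. Fixing $\beta_1 = 0$ and writing $\lambda = \beta_{2:d} \in \RR^{d-1}$, the IV solution-set constraint becomes $Z^T(w \star (\tilde{X}\lambda - y)) = 0$, so that $\ivsns(X,y,Z) \leq k$ if and only if there exist $\lambda \in \RR^{d-1}$ and $w \in [0,1]^n$ with $\norm{w}_1 \geq n-k$ and $Z^T(w \star (\tilde{X}\lambda - y)) = 0$.

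First I would reprove Lemma~\ref{lemma:qp-reformulation} in this setting. For fixed $\lambda$, the set $D(n-k) = \{w \star (\tilde{X}\lambda - y): w \in [0,1]^n, \norm{w}_1 \geq n-k\}$ is convex, and we want $0 \in D(n-k) + \ker(Z^T)$. The separation theorem reduces this to checking, for all separating directions $v$, that $\sup_{x \in D(n-k)} \langle v,x\rangle \geq 0$; directions $v$ not orthogonal to $\ker(Z^T)$ make the inner product unbounded, so it suffices to take $v \in \vspan(Z^T)$, i.e. $v = Zu$ for $u \in \RR^p$. This yields the IV analogue of expression~(\ref{eq:qp-reformulation}):
\[
\exists \lambda \in \RR^{d-1}: \forall u \in \RR^p: \exists w \in [0,1]^n: \norm{w}_1 \geq n-k \land \sum_{i=1}^n w_i(\langle \tilde{X}_i,\lambda\rangle - y_i)\langle Z_i,u\rangle \geq 0.
\]
The only structural change from the OLS proof is that the dual variable $u$ now lives in $\RR^p$ rather than $\RR^d$, and the summands involve $\langle Z_i,u\rangle$.

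Next, exactly as in the OLS argument, for fixed $(\lambda,u)$ the optimal $w$ depends only on the relative order of the $n$ terms $(\langle \tilde{X}_i,\lambda\rangle - y_i)\langle Z_i,u\rangle$, so I may replace the quantifier $\exists w \in [0,1]^n$ by $\exists \pi \in \Pi$, where $\Pi$ is the set of achievable orderings, and negate to obtain the predicate $\ivsns(X,y,Z) > k$. The comparison polynomials $f_{i,j}(\lambda,u) = (\langle \tilde{X}_i,\lambda\rangle - y_i)\langle Z_i,u\rangle - (\langle \tilde{X}_j,\lambda\rangle - y_j)\langle Z_j,u\rangle$ have degree $2$ in the $(d-1)+p$ variables $(\lambda,u)$, so Theorem~\ref{theorem:sign-set} bounds the number of achievable sign patterns --- and hence $|\Pi|$ --- by $n^{O(d+p)}$, and enumerates them in that time. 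The resulting predicate is a $\forall \lambda \in \RR^{d-1}: \exists u \in \RR^p$ system whose inner matrix is a conjunction of $|\Pi| = n^{O(d+p)}$ degree-$2$ inequalities.

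Finally I apply Renegar's theorem (Theorem~\ref{theorem:renegar}) with $n_1 = d-1$, $n_2 = p$, maximum degree $2$, and $m = |\Pi| = n^{O(d+p)}$ inequalities, which decides the system in time $(2|\Pi|)^{O((d-1)p)} = n^{O(dp(d+p))}$, matching the claimed bound. I expect no new mathematical difficulty here: the separation step, the reduction to orderings, and the sign-partition count all transfer from the OLS proof, the key observation (as the paper notes) being that the solution-set equation multiplies the weighted residual only by a matrix $Z^T$ with $p$ rows. The one thing to get right --- and the only place the bound departs from $n^{O(d^3)}$ --- is the bookkeeping: the dual dimension is now $p$, the comparison polynomials live in $(d-1)+p$ variables giving $|\Pi| = n^{O(d+p)}$, and Renegar's exponent $n_1 n_2 = (d-1)p$ combine to produce $n^{O(dp(d+p))}$. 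I would double-check these exponent computations carefully, as they are the substance of the statement.
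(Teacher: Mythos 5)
Your proposal is correct and is exactly the generalization the paper intends: the paper states Theorem~\ref{theorem:iv-exact-algorithm} by observing that the proof of Theorem~\ref{theorem:exact-algorithm-intro} only uses that the weighted residual is multiplied by a matrix with a bounded number of rows, and your write-up carries out that substitution ($u \in \RR^p$, comparison polynomials in $(d-1)+p$ variables, $|\Pi| = n^{O(d+p)}$, Renegar exponent $(d-1)p$) with the correct resulting bound $n^{O(dp(d+p))}$.
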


The algorithm \netalg{} also generalizes to IV regression. To state the guarantee, we define an anti-concentration assumption for IV regression data.

\begin{assumption}
Let $\epsilon,\delta>0$. We say that samples $(X_i,y_i,Z_i)_{i=1}^n$ satisfy $(\epsilon,\delta)$-strong anti-concentration if for every $\alpha \in \RR^p$ and $\beta \in \RR^d$ it holds that \[|\{i \in [n]: |\langle Z_i,\alpha\rangle| < (\delta/\sqrt{n})\norm{Z\alpha}_2\}| \leq \epsilon n\]
and \[|\{i \in [n]: |\langle \tilde{X}_i, \beta \rangle - y_i\rangle| < (\delta/\sqrt{n})\norm{\tilde{X}\beta - y}_2\}| \leq \epsilon n.\]
\end{assumption}

Then it is easy to see that the proof of Theorem~\ref{theorem:net-algorithm-intro} immediately extends to give the following result:

\begin{theorem}
For any $\epsilon,\delta>0$, there is a $(\sqrt{d}/(\epsilon\delta^2))^d \cdot \poly(n)$-time algorithm which, given $\epsilon,\delta$, and samples $(X_i,y_i,Z_i)_{i=1}^n$ satisfying $(\epsilon,\delta)$-strong anti-concentration, returns an estimate $\hat{S}$ satisfying \[\ivsns(X,y,Z) \leq \hat{S} \leq \ivsns(X,y,Z) + 3\epsilon n + 1.\]
\end{theorem}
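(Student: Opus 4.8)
The plan is to run \netalg{} with the instrument matrix $Z$ playing the role that $X$ played in the OLS moment condition, and to verify that every step of the proof of Theorem~\ref{theorem:net-algorithm-intro} survives this substitution. For $\lambda \in \RR^{d-1}$ set
\[V(\lambda) = \sup_{w \in [0,1]^n}\left\{\norm{w}_1 \,\middle|\, Z^T(w \star (\tilde{X}\lambda - y)) = 0\right\},\]
so that, writing $\lambda = \beta_{2:d}$, we have $n - \ivsns(X,y,Z) = \sup_{\lambda \in \RR^{d-1}} V(\lambda)$. For each fixed $\lambda$ this is an LP in $w$ with $n$ variables and only $p$ constraints, hence solvable in $\poly(n)$ time. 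The algorithm constructs the net $\mathcal{N} \subseteq \RR^{d-1}$ from Lemma~\ref{lemma:residual-net}, computes $V(\lambda)$ for each $\lambda \in \mathcal{N}$, and outputs $\hat{S} = n - \max_{\lambda \in \mathcal{N}} V(\lambda)$. The key observation is that Lemma~\ref{lemma:residual-net} depends only on the normalized residual directions $(\tilde{X}\lambda - y)/\norm{\tilde{X}\lambda - y}_2$, and so references neither $X$ nor $Z$; thus the same net, of size $(2\sqrt{d}/\gamma)^d$ with $\gamma = \epsilon\delta^2$, is used unchanged, giving the claimed time complexity $(\sqrt{d}/(\epsilon\delta^2))^d \cdot \poly(n)$. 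The bound $\ivsns(X,y,Z) \leq \hat{S}$ is immediate because we maximize $V$ over a subset of $\RR^{d-1}$.

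For the matching upper bound on $\hat{S}$, I would first reprove the dual formulation of Lemma~\ref{lemma:qp-reformulation} with $Z$ in place of $X$: the separating-hyperplane argument there uses only the decomposition of $\RR^n$ into $\ker(Z^T)$ and $\vspan(Z^T)$ and nothing special about the matrix, so it yields
\[V(\lambda) = \inf_{u \in \RR^p}\; \sup_{w \in [0,1]^n}\left\{\norm{w}_1 \,\middle|\, \sum_{i=1}^n w_i(\langle \tilde{X}_i,\lambda\rangle - y_i)\langle Z_i,u\rangle \geq 0\right\}.\]
Let $\lambda^*$ maximize $V$ and let $\lambda \in \mathcal{N}$ with sign $\sigma$ be the net point close to $\lambda^*$ (the degenerate case $\tilde{X}\lambda^* = y$, where the algorithm outputs $0$, is handled exactly as before). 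For an arbitrary $u \in \RR^p$, take the optimal weight $w^*$ for $\lambda^*$ and $\sigma u$; by Cauchy--Schwarz and the net guarantee, now with $\norm{Zu}_2$ replacing $\norm{Xu}_2$, one gets $\sum_i w_i^*\,(\langle \tilde{X}_i,\lambda\rangle - y_i)\langle Z_i,u\rangle \geq -\gamma\norm{Zu}_2\,\norm{\tilde{X}\lambda - y}_2$. Then round $w^*$ to a feasible $w$ for $\lambda$ and $u$ by zeroing coordinates in increasing order of $(\langle \tilde{X}_i,\lambda\rangle - y_i)\langle Z_i,u\rangle$ until the sum is nonnegative, exactly as in the original argument.

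The only place that genuinely uses a hypothesis is the count of rounding steps, and this is precisely where the two clauses of the IV strong anti-concentration assumption are tailored to fit. A step with $|\langle \tilde{X}_i,\lambda\rangle - y_i| \geq (\delta/\sqrt{n})\norm{\tilde{X}\lambda - y}_2$, $|\langle Z_i,u\rangle| \geq (\delta/\sqrt{n})\norm{Zu}_2$, and $w_i^* = 1$ raises the sum by at least $(\delta^2/n)\norm{\tilde{X}\lambda - y}_2\norm{Zu}_2$, so at most $\gamma n/\delta^2 = \epsilon n$ of them occur. The residual clause (applied to $\lambda$) bounds by $\epsilon n$ the steps with small residual, and the instrument clause (applied to $\alpha = u$) bounds by $\epsilon n$ the steps with small $|\langle Z_i,u\rangle|$; together with the at most one fractional coordinate this gives $\norm{w}_1 \geq \norm{w^*}_1 - 3\epsilon n - 1$, whence $V(\lambda) \geq V(\lambda^*) - 3\epsilon n - 1$ and $\hat{S} \leq \ivsns(X,y,Z) + 3\epsilon n + 1$. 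There is no substantial obstacle; the one point requiring care is confirming that the IV assumption was stated with exactly the two clauses needed to absorb the two independent sources of bad steps (small residual and small instrument projection), which it was.
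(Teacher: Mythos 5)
Your proposal is correct and follows exactly the route the paper intends: the paper itself merely asserts that the proof of Theorem~\ref{theorem:net-algorithm-intro} ``immediately extends,'' and your writeup supplies precisely the details of that extension (the $Z^T$ moment condition, the unchanged net from Lemma~\ref{lemma:residual-net}, the separating-hyperplane dual with $u \in \RR^p$, and the three-way step count matched to the two clauses of the IV anti-concentration assumption). No gaps.
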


\section{Further experimental details}\label{app:further-experimental}

\subsection{Implemented algorithms}\label{section:implementation-details}

\paragraph{Net upper bound.} We implement \netalg{} with one modification: instead of deterministically picking the net $\mathcal{M}$ by discretization (see Lemma~\ref{lemma:residual-net}), we let $\mathcal{M}$ be a set of random unit vectors from $\mathcal{S}^{d'-1}$, and then compute $\mathcal{N}$ as in Lemma~\ref{lemma:residual-net}. Instead of parametrizing the algorithm by the desired approximation error $\epsilon$, we parametrize by $|\mathcal{M}|$. Despite this change, the algorithm still provides a provable, exact \emph{upper bound} on $\sns(X,y)$.

\paragraph{LP lower bound.} The disadvantage of \netalg{} is that it only lower bounds the stability under an assumption that seems hard to check. 
The \lpalg{} algorithm is better, because it unconditionally, with high probability, outputs either an accurate estimate or a failure symbol $\perp$. However, the $\Omega(n^d)$ time complexity (needed so that in each region, all $n$ residuals have constant sign) may be prohibitively slow in practice. For this reason, we introduce a heuristic simplification of \lpalg{} which provably \emph{lower bounds} the stability with no assumptions.

At a high level, we decrease the number of regions by ignoring the requirement that within each region all residuals have constant sign. The algorithm is parametrized by a list of thresholds $L$ and a subset size $m$, and the regions are demarcated by the hyperplanes $\langle\tilde{X}_i,\lambda\rangle - y_i = t$ for $m$ random choices $i \in [n]$ and all $t \in L$. Now, for samples which do not have constant residual in a particular region $R$, the constraints $w_i \in [0,1]$ are nonconvex after the change of variables. We relax these constraints to linear constraints, and relax the objective function to skip the ``bad'' samples. Heuristically, this relaxation should not lose too much on samples where the residual remains small throughout $R$, but may be problematic if the residual blows up. This motivates the use of a complementary lower bound heuristic based on the minimum squared-loss achievable by any $\lambda \in R$. See Appendix~\ref{app:certification-heuristic} for details.

\paragraph{Baseline greedy upper bound \cite{broderick2020automatic, kuschnig2021hidden}.} We implement the greedy algorithm described by \cite{kuschnig2021hidden}, which refines the algorithm of \cite{broderick2020automatic}: iteratively remove the sample with the largest local influence until the sign of the first coefficient of the OLS is reversed. After each step, recompute the influences.

\paragraph{Baseline lower bound.} This algorithm simply computes the squared-loss-based lower bound (used in our full lower bound algorithm) for each region. See Appendix~\ref{app:certification-heuristic} for details.

\subsection{Hyperparameter choices}

The net upper bound has only one hyperparameter (the number of trials), which should be chosen as large as possible subject to computational constraints. The LP lower bound has two hyperparameters (the size of the random subsample, and the set of thresholds). We choose these ad hoc subject to our computational constraints. Experiments to determine the optimal tradeoff between the subsample size and threshold set could be useful. However, we note that because the LP lower bound \emph{unconditionally} lower bounds the stability, no matter what hyperparameters we choose, in practice it suffices to try several sets of hyperparameters and compute the maximum of the resulting lower bounds.

\paragraph{Heterogeneous data experiment.} For each dataset, we applied the net upper bound with $1000$ trials, the LP lower bound with $L = \{-0.01, 0, 0.01\}$ and $m=30$, and the baseline lower bound with $L = \{-0.01,0,0.01\}$ and $m = 1000$.

\paragraph{Isotropic Gaussian data.} For each dataset with noise level $\sigma$, we applied the net upper bound with $10^d$ trials, the LP lower bound with $L = \{-\sigma,0,\sigma\}$ and $m = 30$, and the baseline lower bound with $L=  \{-\sigma,0,\sigma\}$ and $m = 1000$.

\paragraph{Boston Housing dataset.} For the two-dimensional datasets, we applied the net upper bound with $100$ trials and the LP lower bound with $L = \{0\}$ and $m = 100$. For the three-dimensional dataset we applied the net upper bound with $1000$ trials and the LP lower bound with $L = \{0\}$ and $m = 30$.

\subsection{Computational details}\label{section:computational}

All experiments were done in Python on a Microsoft Surface Laptop, using GUROBI \cite{gurobi} with an Academic License to solve the linear programs. Each plot took at most $30$ hours to generate (specifically, the three-dimensional Gaussian isotropic data experiment took $3$ hours for each of the $10$ datasets, dominated by the time required for the LP lower bound algorithm; other plots were faster).

\subsection{Omitted experiment: Covariance Shift}\label{app:covariance-shift-experiment}

Consider a dataset with $n$ samples $(X_i,y_i)$ drawn from $X_i \sim N(0,\Sigma)$ and $y_i = -X_{i1} + X_{i2}$, where $\Sigma = \begin{bmatrix} 1 & -1 \\ -1 & 2 \end{bmatrix}$. Additionally, there are $k+1$ outliers, in two types: the $k$ type-I outliers $(X_i,y_i)$ have $X_i = c(1,-3)$ and $y_i = -C$ large and negative; the one type-II outlier $(X_i,y_i)$ has $X_i = \sqrt{n}(1,1)$ and $y_i$ chosen to exactly lie on the OLS best-fit hyperplane $y=\langle x,\hat{\beta}\rangle$. Initially, $\hat{\beta}_1 > 0$, and clearly removing the last $k+1$ samples suffices to flip the sign. However, the initial sample covariance is roughly $\hat{\Sigma} = \Sigma + \mathbbm{1}\mathbbm{1}^T = \begin{bmatrix} 2 & 0 \\ 0 & 3 \end{bmatrix}$. The influence of a sample $(X_i,y_i)$ on coordinate $j$ of the OLS regressor $\hat{\beta}$ is $\langle (\hat{\Sigma}^{-1})_j, X_i\rangle \cdot (y_i - \langle X_i,\hat{\beta}\rangle)$. As a result, the type-I outliers have \emph{negative} influence on $\hat{\beta}_1$, so the greedy algorithm initially does not remove them. The influence only becomes positive after removing the type-II outlier, because this shifts the sample covariance to $\Sigma$, and therefore flips the sign of $\langle (\hat{\Sigma}^{-1})_1, X_i\rangle$.

Constructing this example experimentally requires some care in the choices of $k$, $c$, and $C$: we need the total influence of type-A outliers (proportional to $kcC$) large enough that $\hat{\beta}_1 > 0$, and need $kc^2$ small enough that the type-I outliers don't affect the sample covariance much. Moreover, the number of trials needed by the net algorithm roughly scales with $C$. We take $n = 1000$, $k = 30$, $c = 0.2$, and $C = 300$. Applying the greedy baseline and the net upper bound (with $1000$ trials), we find that the former removes $97$ samples while the latter removes roughly $16.7$ samples. In this example, the failure of the greedy baseline can be attributed to the constant-factor shift in the sample covariance achieved by removing the type-B outlier.

\subsection{Omitted figures}\label{appendix:omitted-figures}

\begin{figure}[H]
    \centering
    \begin{subfigure}{0.4\textwidth}
    \includegraphics[width=\textwidth]{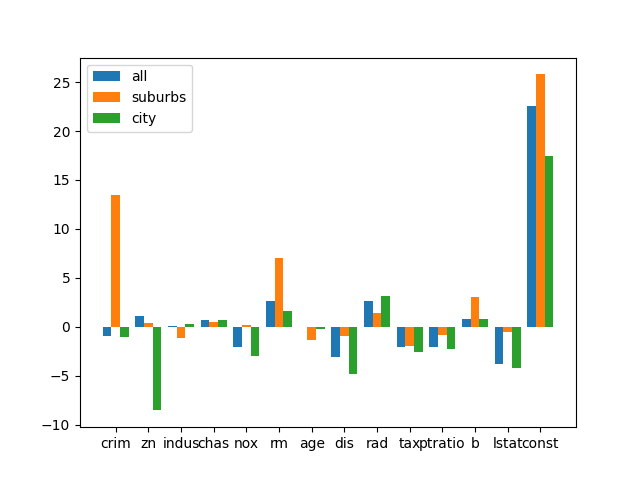}
    \end{subfigure}
    \begin{subfigure}{0.4\textwidth}
    \includegraphics[width=\textwidth]{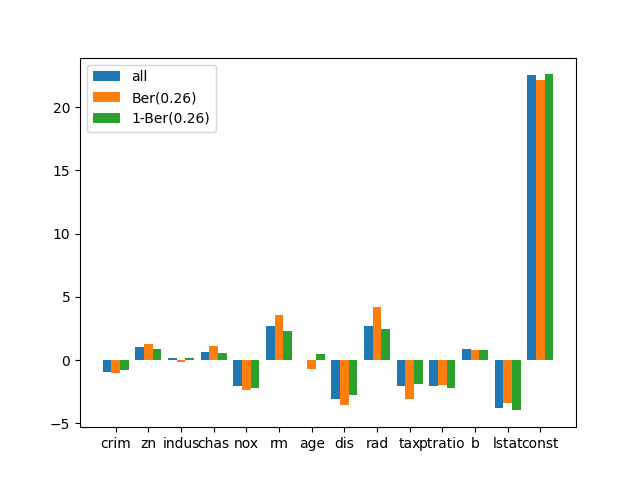}
    \end{subfigure}
    \caption{Coefficients of OLS regression for the suburb/city split (Figure (a)) and a random split with the same ratio (Figure (b)). In both cases, we first rescale all covariates to mean $0$ and variance $1$, and add a constant variable (the last coefficient in both plots). In Figure (a), we then plot the OLS regressor on all samples (in blue), the OLS regressor on all $134$ $\textsf{zn}>0$ samples (in orange), and the OLS regressor on all $472$ $\textsf{zn}=0$ samples (in green). In Figure (b), we pick a subset $S$ of samples of expected size $134$. We plot the OLS regressor on all samples (in blue), the OLS regressor on $S$ (in orange), and the OLS regressor on $S^c$ (in green).}
\end{figure}

\section{Heuristic for lower bounding stability}\label{app:certification-heuristic}

In this section we further explain the ``LP lower bound'' which we applied in Section~\ref{section:experiments} to provide exact lower bounds on stability of various datasets.

Given a list of thresholds $T$ and a subset size $m$, we randomly pick a set $S \subseteq [n]$ of size $m$, and enumerate the regions $R_1,\dots,R_p$ defined by the hyperplanes \[\langle \tilde{X}_i,\lambda\rangle - y_i = t \qquad \forall i \in S, \forall t \in T.\]
Fix one such region $R$. Similar to \lpalg{}, we use the change of variables $g_i = (1-w_i)(\langle \tilde{X}_i,\lambda\rangle - y_i)$, and enforce the constraint $X^T g = X^T(\tilde{X}\lambda-y)$. For some of the $i \in [n]$, the residual $\langle \tilde{X}_i,\lambda\rangle - y_i$ will have constant sign on the entire region. For these samples, we enforce the constraint $0 \leq g_i \leq \langle \tilde{X}_i,\lambda\rangle - y_i$ (if the residual is non-negative) or $\langle \tilde{X}_i,\lambda\rangle - y_i \leq g_i \leq 0$ (if the residual is non-positive). However, because we didn't include a hyperplane for every sample, it's likely that for some $i \in [n]$, the residual attains both signs within the region, in which case the constraint $w_i \in [0,1]$ is not convex in $g$ and $\lambda$. Thus, we relax the constraint to \[\inf_{\lambda \in R} \langle \tilde{X}_i,\lambda\rangle - y_i \leq g_i \leq \sup_{\lambda \in R} \langle \tilde{X}_i,\lambda\rangle - y_i.\]
Note that this is indeed a relaxation, because the interval contains $0$. Finally, let $K_+ \subseteq [n]$ be the set of indices for which the residual is non-negative on $R$, and let $K_- \subseteq [n]$ be the set of indices for which the residual is non-positive. Then we minimize the objective \[\sum_{i \in K_+} \frac{g_i}{\sup_{\lambda \in R} \langle \tilde{X}_i,\lambda \rangle - y_i} + \sum_{i \in K_-} \frac{g_i}{\inf_{\lambda \in R} \langle \tilde{X}_i,\lambda \rangle - y_i}.\]
Because this objective is less than or equal to $\sum_{i \in [n]} g_i/(\langle \tilde{X}_i,\lambda\rangle - y_i)$, and because we only relaxed constraints, this program has value at most $V_R$ (the value of the exact non-linear program restricted to $\lambda \in R$).

Compared to the provable approximation algorithm described previously, this algorithm is a heuristic because each region will typically have samples for which the residual changes sign within the region. As rough intuition, if the residual $\langle \tilde{X}_i,\lambda\rangle -y_i$ remains fairly small throughout the region then the relaxation of the constraint on $w_i$ may not be problematic. However, if the residual can attain large magnitude in the region, then relaxing the constraint on $w_i$ may significantly change the value of the program. However, we may expect that if the partition is sufficiently fine, then a region where some residual is allowed to blow up may have many samples whose residuals are forced to be large. This motivates the use of an additional heuristic to refine the certification algorithm: if $(w^*,\beta^*)$ are the optimal solution to the sensitivity problem, and $\beta^{(0)} = \OLS(X,y,\mathbbm{1})$ is the original regressor, then it must hold that \[\sum_{i=1}^n w^*_i (\langle X_i, \beta^* \rangle - y_i)^2 \leq \sum_{i=1}^n w^*_i (\langle X_i, \beta^{(0)} \rangle - y_i)^2 \leq \sum_{i=1}^n (\langle X_i, \beta^{(0)} \rangle - y_i)^2.\]
However, if $R$ is the region containing the optimal solution and if $\norm{w^*}_1 \geq n-k$, then \[\sum_{i=1}^n w^*_i(\langle X_i,\beta^*\rangle - y_i)^2 \geq \inf_{S\subseteq [n]: |S|=n-k} \sum_{i \in S} (\langle X_i,\beta^*\rangle - y_i)^2 \geq \inf_{S\subseteq [n]: |S|=n-k} \sum_{i \in S} \inf_{\lambda \in R} (\langle \tilde{X}_i,\lambda\rangle - y_i)^2.\]
Thus, if we compute $Q_i(R) = \inf_{\lambda \in R} (\langle \tilde{X}_i,\lambda\rangle-  y_i)^2$ (by computing the interval of achievable residuals) for each $i \in [n]$, then we can lower bound $k$ (conditioned on $R$ containing the optimal solution) by sorting $Q_1(R),\dots,Q_n(R)$ and finding the smallest subset $K \subseteq [n]$ such that $\sum_{i \not \in K} Q_i(R) \leq \norm{X\beta^{(0)} - y}_2^2$.

To summarize the algorithm, for each region we compute the maximum of the LP lower bound and the residual lower bound, and the output of the algorithm is the minimum over all regions. See Algorithm~\ref{alg:lp-lower-bound} in Appendix~\ref{app:pseudocode} for detailed pseudocode.

\section{Formal pseudocode for algorithms}\label{app:pseudocode}

\begin{algorithm}
    \caption{\netalg{}}
    \label{alg:net}
    \begin{algorithmic}[1]
        \Procedure{\netalg}{$(X_i,y_i)_{i=1}^n, \epsilon, \delta$}
            \State $\tilde{X} \gets [(X^T)_2;\dots,(X^T)_d]$
            \If{$X^T(\tilde{X}\lambda - y)=0$ has a solution $\lambda$}
                \State \textbf{return} $0$
            \EndIf
            \State $\gamma \gets \epsilon \delta^2$
            \State $A \gets [(X^T)_2;\dots;(X^T)_d; -y]$, $d' \gets \text{rank}(A)$
            \State $U, D, V \gets \textsf{SVD}(A)$ (so that $A = UDV^T$ and $D$ is diagonal matrix of the $d'$ nonzero singular values)
            \State $B \gets VD^{-1}$
            \State Define $\gamma$-net for $\mathcal{S}^{d'-1}$ by \[\mathcal{M} \gets \{R(\pm m_1, \pm m_2,\dots,\pm m_{d'}): m_1,\dots,m_{d'} \in \{0,\gamma/\sqrt{d'},2\gamma/\sqrt{d'},\dots,1\}\}\]
            where $R: d'\times d'$ is uniformly random rotation matrix
            \State Define $\gamma$-net for ``residual space'' (Lemma~\ref{lemma:residual-net}) by \[\mathcal{N} \gets \left\{\frac{(Bm)_{1:d-1}}{(Bm)_d}: m \in \mathcal{M}\right\}.\]
            \State $\hat{S} \gets n$
            \For{$\lambda \in \mathcal{N}$}
            \State Solve linear program \[V(\lambda) = \sup_{w \in [0,1]^n}\left\{\norm{w}_1\middle|X^T(w\star (\tilde{X}\lambda - y)) = 0\right\}.\]
            \State $\hat{S} \gets \min(\hat{S},n-V(\lambda))$
            \EndFor
            \State \textbf{return} $\hat{S}$
        \EndProcedure
    \end{algorithmic}
\end{algorithm}

\begin{algorithm}
    \caption{\textsc{GenerateRegions}}
    \label{alg:generate-regions}
    \begin{algorithmic}[1]
        \Procedure{GenerateRegions}{$(v_i,c_i)_{i=1}^N$}
            \State $\mathcal{R} \gets \{\RR^d\}$\Comment{List of regions, each described by linear constraints}
            \For{$(v_i,t_i)$}
                \State $\mathcal{R}_\text{next} \gets \mathcal{R}$
                \For{$R \in L$}
                    \State Solve linear programs \[c_\text{low} = \inf_{x \in R} \langle v_i, x\rangle \qquad \text{and} \qquad c_\text{high} = \sup_{x \in R} \langle v_i, x\rangle\]
                    \If{$c_\text{low} < c < c_\text{high}$}
                        \State Add regions $R \cap \{\langle v_i,x\rangle \leq c_i\}$ and $R \cap \{\langle v_i,x\rangle \geq c_i\}$ to $\mathcal{R}_\text{new}$
                    \Else
                        \State Add region $R$ to $\mathcal{R}_\text{new}$
                    \EndIf
                \EndFor
                \State $\mathcal{R} \gets \mathcal{R}_\text{new}$
            \EndFor
            \State \textbf{return} $\mathcal{R}$
        \EndProcedure
    \end{algorithmic}
\end{algorithm}

\begin{algorithm}
    \caption{\lpalg{}}
    \label{alg:partition-lps}
    \begin{algorithmic}[1]
        \Procedure{\lpalg}{$(X_i,y_i)_{i=1}^n, \delta,\epsilon,\eta$}
            \State $\beta^{(0)} \gets \argmin_{\beta \in \RR^d} \norm{X\beta - y}_2^2$ \Comment{$X: n \times d$ matrix with rows $X_1,\dots,X_n$}
            \State $M \gets \norm{X\beta^{(0)} - y}_2$
            \State $\tilde{X} \gets [(X^T)_2;\dots;(X^T)_d]$
            \State $m \gets C\epsilon^{-1}(d\log\epsilon^{-1} + \log\eta^{-1})$ \Comment{$C>0$ is a universal constant}
            \State Sample $i_1,\dots,i_m \in [n]$ without replacement
            \State $\mathcal{H} \gets \{\}$
            \For{$i \in [n]$}
                \State Add $(\tilde{X}_i,y_i)$ to $\mathcal{H}$ \Comment{Describes hyperplane $\langle \tilde{X},\lambda\rangle = y_i$}
                \State Add $(\tilde{X}_i,y_i+M)$,  $(\tilde{X}_i,y_i-M)$, $(\tilde{X}_i,y_i+\delta M/\sqrt{n})$ and $(\tilde{X}_i,y_i-\delta M/\sqrt{n})$ to $\mathcal{H}$ 
            \EndFor
            \For{$j \in [m]$}
                \For{$0 \leq k \leq \lceil \log_{1+\epsilon}(1/\delta)$}
                    \State Add $(\tilde{X}_{i_j},y_{i_j}+(1+\epsilon)^k\delta M)$ and $(\tilde{X}_{i_j},y_{i_j}-(1+\epsilon)^k\delta M)$ to $\mathcal{H}$
                \EndFor
            \EndFor
            \State $\mathcal{R} \gets \textsc{GenerateRegions}(\mathcal{H})$
            \For{$R \in \mathcal{R}$}
                \State Find feasible point $\lambda_0(R) \in R$
                \State $B_M \gets \{i \in [n]: |\langle \tilde{X}_i,\lambda_0(R)\rangle - y_i|>M\}$
                \State $B_{\delta M} \gets \{i \in [n]: |\langle \tilde{X}_i,\lambda_0(R)\rangle - y_i|<\delta M/\sqrt{n}\}$
                \If{$|B_{\delta M}| > \epsilon n$}
                    \State \textbf{return} $\perp$
                \EndIf
                \For{$i \in [n]$}
                    \If{$\inf_{\lambda \in R} \langle \tilde{X}_i,\lambda \rangle - y_i \geq 0$}
                        \State $\sigma_i \gets 1$
                    \Else
                        \State $\sigma_i \gets -1$
                    \EndIf
                \EndFor
                \State Solve linear program \begin{align*}
                &(\hat{g}(R),\hat{\lambda}(R)) \\
                &\gets \argsup_{g\in\RR^n,\lambda \in R, s\in \RR^n}\left\{\begin{aligned}&\sum_{i \in [n] \setminus (B_M \cup B_{\delta M})} s_i
\end{aligned}
\;\middle|\;
\begin{aligned} 
&X^Tg = 0 &\\ 
&0 \leq g_i \leq \langle \tilde{X}_i,\lambda\rangle - y_i & \forall i \in [n]: \sigma_i = 1 \\
&\langle \tilde{X}_i,\lambda\rangle - y_i \leq g_i \leq 0 & \forall i \in [n]: \sigma_i = -1 \\
&s_i \leq 1 & \forall i \in [n]\setminus (B_M\cup B_{\delta M}) \\
&s_i \leq g_i/(\langle \tilde{X}_i,\lambda_0(R)\rangle - y_i) & \forall i \in [n] \setminus (B_M \cup B_{\delta M})
\end{aligned}\right\}
\end{align*}
            \EndFor
            \State \textbf{return} \[\max_{R \in \mathcal{R}} \sum_{i\in [n]} \frac{\hat{g}(R)_i}{\langle \tilde{X}_i,\hat{\lambda}(R)\rangle - y_i}.\]
        \EndProcedure
    \end{algorithmic}
\end{algorithm}

\begin{algorithm}
    \caption{LP Lower Bound}
    \label{alg:lp-lower-bound}
    \begin{algorithmic}[1]
        \Procedure{LPLowerBound}{$(X_i,y_i)_{i=1}^n, L, m$}
            \State $\beta^{(0)} \gets \argmin_{\beta \in \RR^d} \norm{X\beta-y}_2^2$
            \State $\tilde{X} \gets [(X^T)_2;\dots;(X^T)_d]$
            \State Sample $i_1,\dots,i_m \in [n]$ without replacement
            \State $\mathcal{H} \gets \{\}$
            \For{$j \in [m]$}
                \For{$t \in L$}
                    \State Add $(\tilde{X}_{i_j}, y_{i_j} + t)$ to $\mathcal{H}$
                \EndFor
            \EndFor
            \State $\mathcal{R} \gets \textsc{GenerateRegions}(\mathcal{H})$
            \For{$R \in \mathcal{R}$}
                \State For each $i \in [n]$, compute \[l_i = \inf_{\lambda\in R}\langle \tilde{X}_i,\lambda\rangle - y_i \qquad \text{and} \qquad r_i = \sup_{\lambda\in R}\langle \tilde{X}_i,\lambda\rangle - y_i.\]
                \State Set $K_+ = \{i \in [n]: l_i \geq 0\}$ and $K_- = \{i \in [n]: r_i \leq 0\} \setminus K_-$.
                \State Solve linear program \[
                \hat{S}(R) \gets 
                \inf_{g \in \RR^n, \lambda \in R} 
                \left\{\sum_{i \in K_+} \frac{g_i}{r_i} + \sum_{i \in K_-} \frac{g_i}{l_i} 
                \;\middle|\;
                \begin{aligned}
                &X^T g = X^T(\tilde{X}\lambda - y) &\\
                &0 \leq g_i \leq \langle \tilde{X}_i,\lambda\rangle - y_i & \forall i\in K_+ \\
                &\langle \tilde{X}_i,\lambda\rangle - y_i \leq g_i \leq 0 & \forall i \in K_- \\
                &l_i \leq g_i \leq r_i &\forall i \in [n]\setminus (K_+\cup K_-)\end{aligned}\right\}.\]
                
                \State For each $i \in K_+ \cup K_-$ let $Q_i = \min(l_i^2, r_i^2)$; for each $i \in [n] \setminus (K_+ \cup K_-)$ let $Q_i = 0$.
                \State Let $Q^{(1)} \leq \dots \leq Q^{(n)}$ be the sorted numbers $Q_1,\dots,Q_n$ and compute \[\hat{N}(R) \gets \sup \left\{k\;\middle|\; \sum_{i=1}^{n-k} Q^{(i)} > \norm{X\beta^{(0)} - y}_2^2\right\}.\]
            \EndFor
            \State \textbf{return} \[ \min_{R \in \mathcal{R}} \max(\hat{S}(R),\hat{N}(R)).\]
        \EndProcedure
    \end{algorithmic}
\end{algorithm}

\end{document}